\theoremstyle{plain}
\newtheorem{theorem}{Theorem}[section]
\newtheorem{lemma}[theorem]{Lemma}
\theoremstyle{definition}
\newtheorem{definition}[theorem]{Definition}
\theoremstyle{remark}
\newcommand{\rA}{\textrm{(A)}}
\newcommand{\rB}{\textrm{(B)}}
\newcommand{\rC}{\textrm{(C)}}
\def\reals{{\mathcal R}}
\newcommand{\D}{\mathcal{D}}
\newcommand{\K}{\mathcal{K}}
\newcommand{\M}{\mathcal{M}}
\newcommand{\R}{\mathcal{R}}
\newcommand{\T}{\mathcal{T}}
\newcommand{\Bcal}{\mathcal{B}}
\newcommand{\Ncal}{\mathcal{N}}
\newcommand{\ignore}[1]{}
\def\reals{{\mathbb R}}
\def\bold0{\mathbf{0}}
\newcommand{\eps}{\varepsilon}
\newcommand{\A}{\mathcal{A}}
\newcommand{\mO}{\mathcal{O}}
\newcommand{\F}{\mathcal{F}}
\newcommand{\tg}{\tilde{g}}
\newcommand{\bg}{\bar{g}}
\newcommand{\Z}{\mathcal{Z}}
\newcommand{\tsigma}{\tilde{\sigma}}
\newcommand{\txi}{\tilde{\xi}}
\newcommand{\tq}{\tilde{q}}
\newcommand{\ts}{\tilde{s}}
\newcommand{\PS}{\mathcal{PS}}
\newcommand{\bs}{\Bar{s}}
\newcommand{\sigmal}{\sigma_L}
\newcommand{\xil}{\xi_L}
\newcommand{\Exp}[1]{\mathbb{E}#1}
\newcommand{\ExpB}[1]{\mathbb{E}\left[#1\right]}
\newcommand{\condExp}[2]{\mathbb{E}_{#2}\left[#1\right]}
\newcommand{\prob}[1]{\mathbb{P}\left\{#1\right\}}
\newcommand{\proj}[1]{\Pi_\K\left(#1\right)}
\newcommand{\norm}[1]{\left\|#1\right\|}
\newcommand{\normsq}[1]{\left\|#1\right\|^2}
\newcommand{\dotprod}[2]{\left\langle#1,#2\right\rangle}
\newcommand{\ind}[1]{\mathbb{I}\left\{#1\right\}}
\newcommand{\f}[1]{f\!\left(#1\right)}
\newcommand{\df}[1]{\nabla f\!\left(#1\right)}
\newcommand{\fii}[1]{f_i\!\left(#1\right)}
\newcommand{\dfi}[1]{\nabla f_i\!\left(#1\right)}
\newcommand{\dfj}[1]{\nabla f_j\!\left(#1\right)}
\newcommand{\diver}[2]{\mathbb{D}_\alpha\left(#1\|#2\right)}
\newcommand{\Oo}[1]{O\left(#1\right)}
\newcommand{\Om}[1]{\Omega\left(#1\right)}
\newcommand{\Min}[1]{\min\left\{#1\right\}}
\newcommand{\Max}[1]{\max\left\{#1\right\}}
\def\al#1\eal{\begin{align}#1\end{align}}
\def\als#1\eals{\begin{align*}#1\end{align*}}
\newcommand{\non}{\nonumber\\}
\renewcommand{\S}{\mathcal{S}}
\newcommand{\alg}{DP-$\mu^2$-FL~}
\icmltitlerunning{Balancing Partial-Participation and Efficiency via Noise Cancellation}
\begin{document}

\twocolumn[
\icmltitle{Privacy-Preserving Federated Convex Optimization: \\ Balancing Partial-Participation and Efficiency via Noise Cancellation}

\icmlsetsymbol{equal}{*}

\begin{icmlauthorlist}
\icmlauthor{Roie Reshef}{equal,tech}
\icmlauthor{Kfir Yehuda Levy}{equal,tech}
\end{icmlauthorlist}

\icmlaffiliation{tech}{Faculty of Electrical and Computer Engineering, Technion, Haifa, Israel}

\icmlcorrespondingauthor{Roie Reshef}{sror@campus.technion.ac.il}
\icmlcorrespondingauthor{Kfir Yehuda Levy}{kfirylevy@technion.ac.il}

\icmlkeywords{Machine Learning, Privacy, Federated Learning, ICML}

\vskip 0.3in
]

\printAffiliationsAndNotice{\icmlEqualContribution}

\begin{abstract}
This paper tackles the challenge of achieving Differential Privacy (DP) in Federated Learning (FL) under partial-participation, where only a subset of the machines participate in each time-step.
While previous work achieved optimal performance in full-participation settings, these methods struggled to extend to partial-participation scenarios.
Our approach fills this gap by introducing a novel noise-cancellation mechanism that preserves privacy without sacrificing convergence rates or computational efficiency.
We analyze our method within the Stochastic Convex Optimization (SCO) framework and show that it delivers optimal performance for both homogeneous and heterogeneous data distributions.
This work expands the applicability of DP in FL, offering an efficient and practical solution for privacy-preserving learning in distributed systems with partial participation.
\end{abstract}

\section{Introduction}
\label{sec:intro}

Federated Learning (FL) is an innovative framework within Machine Learning (ML) that facilitates collaborative learning across a wide range of decentralized devices or systems \cite{mcmahan2017communication,kairouz2021advances}.
Privacy is a central concern in FL, highlighting the critical importance of safeguarding personal data during the training process.

Differential Privacy (DP) is a strong framework for assessing and mitigating privacy risks \cite{odo_dp,calib_dp,what_private}.
It formally ensures that data analysis results remain largely unchanged regardless of an individual's data inclusion, thus preserving data privacy \cite{dwork2014algorithmic}.

In this work, we focus on guaranteeing DP in \emph{centralized} FL systems, which are prevalent both in academic research and in many real-world applications.
In such systems all machines communicate through a coordinating server which orchestrates the training process \cite{kairouz2021advances}.
A key challenge here is that multi-round communication can compromise privacy, and research on integrating DP in FL \cite{huang2019dp,fl_dp,girgis2021shuffled,fl_hetero,private_fed,lowy2023private, improve_lowy} focuses on balancing data masking with model performance to protect privacy while maintaining performance.

While many FL protocols assume \emph{full-participation} from all machines, real-world deployments often face inconsistent device availability, limited connectivity, or scheduling conflicts.
This leads to \emph{partial-participation}, where only a subset of machines engage in each training round.
Ensuring DP in these settings is more challenging but crucial for expanding FL to large-scale, resource-constrained, and dynamic environments.

Moreover, in the context of privacy, FL systems can be categorized into trusted and untrusted Server cases.
In \emph{Trusted Server} scenarios, devices rely on the server’s assurance of correct DP implementation.
However, in more practical cases with an \emph{Untrusted Server}, devices must protect their data from potential misuse by the server.

Ensuring DP in centralized FL has mainly been studied in the context of Empirical Risk Minimization (ERM) \cite{huang2019dp,fl_dp,girgis2021shuffled,fl_hetero}, focusing on minimizing training loss.
However, translating ERM guarantees to population loss leads to suboptimal bounds, as shown in \cite{opt_private}.

Recent works have explored population loss (generalization) guarantees under DP \cite{private_fed,improve_lowy,dp_mu2_fl}, providing optimal guarantees for both trusted and untrusted server cases.
Notably, \cite{private_fed} addresses both full and partial-participation but relies on mega-batches, leading to super-linear complexity of $\Oo{n^{3/2}}$, where $n$ is the total dataset size, and \cite{improve_lowy} build upon the algorithm of \cite{private_fed} in iterations to improve the complexity to $\Oo{n^{9/8}}$.
In contrast, \cite{dp_mu2_fl} achieves linear complexity of $\Oo{n}$, similar to standard non-private training, but is limited to full-participation.

Despite these advances, a gap remains: \emph{Can we develop a DP-FL method for partial participation that achieves optimal population loss while matching the computational cost of standard training}?
In particular, we seek a procedure with overall complexity linear in $n$, the total number of data samples used throughout the training.
In this work, we address this gap by proposing a novel approach that ensures near-optimal population-loss guarantees under DP, while preserving the computational efficiency of standard partial-participation methods.

Concretely, in our approach, each participating machine employs exactly one new sample for every round it participates, and use it to performs two stochastic gradient computations, thus keeping the total gradient computations linear in $n$.
Thus, if we have a total of $M$ machines but only a subset of $m$ machines ($m\leq M$) participate in each round, for a total of $T$ rounds then overall we employ $n=mT$ samples.

For the trusted-server scenario, we present a simple method that achieves an optimal convergence rate of $\Oo{\frac{1}{\sqrt{n}}+\frac{\sqrt{d}}{\epsilon n}}$, where $d$ is the problem’s dimensionality, and $\epsilon$ is the privacy level.
This rate matches the lower bound substantiated in \cite{tight_bound}.
In the more challenging untrusted-server setting, we develop a novel mechanism that yields an optimal convergence rate of $\Oo{\frac{1}{\sqrt{n}}+\frac{\sqrt{Md}}{\epsilon n}}$ where $M$ is the total number of machines.
This matches the lower bound substantiated in \cite{private_fed}.

Our work builds on the full-participation approach of \cite{dp_mu2_fl}.
To address the challenging untrusted server and partial-participation case, we introduce a novel noise-cancellation mechanism that effectively obscures the information sent to the server, rather than the independent noise injection employed by \cite{dp_mu2_fl}.
Additionally, we tailor each machine’s noise injection based on the number of rounds it participates in, thus balancing privacy and performance.

\paragraph{Related Work.}
Several studies have made significant contributions to DP in the context of SCO, particularly in ERM \cite{chaudhuri2011differentially,kifer2012private,thakurta2013differentially,song2013stochastic,duchi2013local,ullman2015private,talwar2015nearly,wu2017bolt,wang2017differentially,iyengar2019towards,kairouz2021practical,avella2023differentially,ganesh2023faster}.
However, these works mainly focus on training loss, and attempting to extend their results to population loss guarantees through standard uniform convergence \cite{shalev2009stochastic} results in sub-optimal bounds, as discussed in \cite{opt_private,snowball}.

Structured noise for DP has been explored in prior work.
The work of \cite{koloskova2023gradient} suggests adding correlated noise in ERM settings using standard gradients, but without utility guarantees or support for partial-participation.
Moreover, the work of \cite{DPNCT}, while also using noise cancellation, addresses smart metering rather than ML and offers no formal guarantees.

The work of \cite{tight_bound} established a lower bound for DP-SCO, and \cite{opt_private} presented an algorithm that achieves this bound, though with super-linear computational complexity $\propto n^{3/2}$.
This was later improved by \cite{snowball}, which achieved optimal guarantees with a sample complexity linear in $n$, but only provides privacy for the final iterate, making it unsuitable to be extended to FL settings.

In the FL setting, a notable work is \cite{cheu2022shuffle}, which provides the optimal bounds for the trusted server case, though it relies on an expensive vector-shuffling routine, resulting in a computational complexity greater than $n^{3/2}$.
Another important work is \cite{private_fed}, which combines the large-batch technique of \cite{opt_private} with other mechanisms to achieve optimal results for both untrusted server (called ISRL-DP in their work) and trusted server (called SDP in their work) cases, operating in the partial-participation setting with a computational complexity of $n^{3/2}$, and \cite{improve_lowy} further improving it to $n^{9/8}$.
However, the works of \cite{private_fed,improve_lowy} achieve the optimal bound only in the full-participation setting, while suffering an extra multiplicative factor of $\sqrt{\frac{M}{m}}$ to their excess loss bound in the partial-participation setting.

Our approach draws on the private extension of \cite{mu_square} introduced in \cite{dp_mu2_fl}, which achieves both optimal convergence and computational efficiency under trusted and untrusted server cases, but only in a full-participation setting.
We extend these results to the more practical partial-participation setting, retaining the same tight population-loss guarantees and linear computational complexity.

\section{Preliminaries}
\label{sec:prelim}

Here we provide the necessary background for analyzing private federated learning.
\\\textbf{Notations:} We will employ the following notations: $[n]:=\{1,\ldots,n\}$,  $\alpha_{1:t}:=\sum_{\tau=1}^t\alpha_\tau$, and  $\proj{x}:=\arg\min_{y\in\K}\norm{x-y}$.

\subsection{Federated Learning}
\label{sec:mini}
We focus on Stochastic Convex Optimization (SCO) scenarios where we have $M$ different machines, and each machine $i\in[M]$ has a convex objective function, $f_i:\K\mapsto\reals$, which takes the following form:
\al\label{eq:LocalObjective}
\fii{x}=\condExp{\fii{x;z}}{z\sim\D_i}
\eal 
Where $\K\subset\reals^d$ is a compact convex set, and $\D_i$ is an unknown data distribution, from which machine $i$ may draw i.i.d.~samples.
Our goal is minimizing the average objective:
\al\label{eq:GlobalObjective}
\f{x}:=\frac{1}{M}\sum_{i=1}^M\fii{x}
\eal
In federated learning the machines aim to collaboratively minimize the above objective, where each machine  $i$ may acess a dataset of $\S_i=\{z_1,\ldots,z_{n_i}\}\subset\Z_i$ ($\Z_i$ is the set where the samples of machine $i$ reside), of $n_i$ i.i.d. samples from $\D_i$.
Upon utilizing these data samples the learning algorithm eventually outputs a solution $x_{\text{out}}\in\K$.
Our performance metric is the expected excess loss $\R(x_{\text{out}})$:
\al\label{eq:ExpectedLoss}
\R(x_{\text{out}}):=\ExpB{\f{x_{\text{out}}}}-\min_{x\in\K}\{\f{x}\}
\eal
The expectation is  w.r.t.~the randomness of the samples, as well as w.r.t.~the (possible) randomization of the algorithm.

We focus on centralized systems where machines can synchronize and communicate through a central entity called the \emph{Parameter Server} ($\PS$), which orchestrates the learning process.
We further focus on \emph{Partial-Participation} scenarios where at each communication round $t$ only a random subset of the machines $\M_t\subseteq[M]$ communicate with the $\PS$.

We focus on the common parallelization scheme inspired by Minibatch-SGD~\cite{dekel2012optimal}.
In this scheme, the $\PS$ maintains a sequence of query points $\{x_t\}_t$ which are updated using gradient information that is gathered from the participating machines.
Concretely, at time step $t$ the $\PS$ sends the query point $x_t$ to a set of randomly chosen machines $\M_t\subseteq[M]$.
Each machine $i\in\M_t$ draws a new sample $z_{t,i}\sim\D_i$ (taken from $\S_i$), independently of past samples, and calculates a gradient estimate $g_{t,i}=\dfi{x_t;z_{t,i}}$, where the derivative is with respect to $x$.
The $\PS$ aggregates the gradient estimates into $g_t=\frac{1}{|\M_t|}\sum_{i\in\M_t}g_{t,i}$, which is then used to compute the next query point $x_{t+1}$

The independence of the samples guarantees that $g_{t,i}$ is an unbiased estimator of $\dfi{x_t}$, meaning that $\ExpB{g_{t,i}\vert x_t}=\dfi{x_t}$.
It is useful to conceptualize the calculation of $g_{t,i}=\dfi{x_t;z_{t,i}}$ as a sort of (noisy) Gradient Oracle: upon receiving a query point $x_t\in\K$, this Oracle outputs a vector $g_{t,i}\in\reals^d$, serving as an unbiased estimate of $\dfi{x_t}$.
Moreover, assuming $\M_t$ is picked uniformly over the $M$ machines, directly implies that $g_t=\frac{1}{|\M_t|}\sum_{i\in\M_t}g_{t,i}$, is an unbiased estimate of $\df{x_t}$.

\paragraph{Assumptions:}
We will make the following assumptions:
\\\textbf{Diameter:}
$\exists D>0$ such: $\norm{x-y}\leq D,~\forall x,y\in\K$.

We also make assumptions about $\fii{\cdot;z}$, $\forall i\in[M], z\in\Z_i$:
\\\textbf{Lipschitz:}
$\exists G>0$ such:
\als
|\fii{x;z}-\fii{y;z}|\leq G\norm{x-y},\quad\forall x,y\in\K
\eals
This also implies that $\norm{\dfi{x;z}}\leq G, \forall x\in\K$.
\\\textbf{Smoothness:}
$\exists L>0$ such:
\als
\norm{\dfi{x;z}-\dfi{y;z}}\leq L\norm{x-y},\quad\forall x,y\in\K
\eals
Since these assumptions hold for $\fii{x;z}$ for every $i\in[M], z\in\Z_i$, they also hold for $\fii{x}$ and $\f{x}$.
These assumptions imply the following (proof in \Cref{proof:asmp}):
\\\textbf{Bounded Variance:}
$\exists\sigma\in[0,G]$ such:
\al\label{def:BoundVar}
\Exp{\normsq{\dfi{x;z}-\dfi{x}}}\leq\sigma^2,\quad\forall x\in\K
\eal
\textbf{Bounded Smoothness Variance:}
$\exists\sigmal\in[0,L]$ such:
\al\label{def:BoundSmoothVar}
&\Exp{\normsq{(\dfi{x;z}-\dfi{x})-(\dfi{y;z}-\dfi{y})}} \non
&\leq\sigmal^2\normsq{x-y},\quad\forall x,y\in\K
\eal
\\\textbf{Bounded Heterogeneity:}
$\exists\xi\in[0,G]$ such:
\al\label{def:BoundHet}
\frac{1}{M}\sum_{i=1}^M\normsq{\dfi{x}-\df{x}}\leq\xi^2, \forall x\in\K
\eal
\textbf{Bounded Smoothness Heterogeneity:}
$\exists\xil\in[0,L]$ such:
\al\label{def:BoundSmoothHet}
\frac{1}{M}&\sum_{i=1}^M\normsq{(\dfi{x}-\df{x})-(\dfi{y}-\df{y})} \non
&\leq\xil^2\normsq{x-y},\quad\forall x,y\in\K
\eal
These properties allow us to bound the variance of the average of the gradients (proof in \Cref{proof:AVGBound}):
\begin{lemma}\label{lem:AVGBound}
Let $\M\subseteq[M]$, and $z_i\sim\D_i$.
Define $g(x)=\frac{1}{|\M|}\sum_{i\in\M}\dfi{x;z_i}$, for all $x\in\K$.
If $z_i$ are independent, and $\M$ is chosen uniformly from all subsets of size $m$:
\als
\Exp{\normsq{g(x)-\df{x}}}\leq\frac{1}{m}\left(\sigma^2+\frac{M-m}{M-1}\xi^2\right), &\quad\forall x\in\K \\
\Exp{\normsq{(g(x)-\df{x})-(g(y)-\df{y})}}& \\
\leq\frac{1}{m}\left(\sigmal^2+\frac{M-m}{M-1}\xil^2\right)\normsq{x-y}, \quad&\forall x,y\in\K
\eals
\end{lemma}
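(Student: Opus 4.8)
The plan is to decompose the error $g(x)-\df{x}$ into two pieces that turn out to be uncorrelated: the \emph{within-machine sampling noise} $A:=\frac1m\sum_{i\in\M}\big(\dfi{x;z_i}-\dfi{x}\big)$ and the \emph{machine-selection deviation} $B:=\frac1m\sum_{i\in\M}\dfi{x}-\df{x}$, so that $g(x)-\df{x}=A+B$. Conditioning on $\M$ makes $B$ deterministic while $\ExpB{A\mid\M}=0$ (the $z_i$ are independent of the choice of $\M$ and each $\dfi{x;z_i}$ is unbiased for $\dfi{x}$), hence $\ExpB{\dotprod{A}{B}}=\ExpB{\dotprod{\ExpB{A\mid\M}}{B}}=0$ and $\Exp{\normsq{g(x)-\df{x}}}=\Exp{\normsq{A}}+\Exp{\normsq{B}}$.

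For $\Exp{\normsq{A}}$, condition on $\M$: the summands $\dfi{x;z_i}-\dfi{x}$, $i\in\M$, are independent and mean-zero, so $\ExpB{\normsq{A}\mid\M}=\frac1{m^2}\sum_{i\in\M}\Exp{\normsq{\dfi{x;z_i}-\dfi{x}}}\le\sigma^2/m$ by \eqref{def:BoundVar}, and averaging over $\M$ preserves this. The term $\Exp{\normsq{B}}$ is the crux: it is precisely the variance of sampling $m$ vectors \emph{without replacement} from the centered population $v_i:=\dfi{x}-\df{x}$ (which satisfies $\frac1M\sum_i v_i=0$ and $\frac1M\sum_i\normsq{v_i}\le\xi^2$ by \eqref{def:BoundHet}). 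Writing $B=\frac1m\sum_{i=1}^M\ind{i\in\M}\,v_i$, using $\ExpB{\ind{i\in\M}}=m/M$ and $\ExpB{\ind{i\in\M}\ind{j\in\M}}=\frac{m(m-1)}{M(M-1)}$ for $i\ne j$, and substituting $\sum_{i\ne j}\dotprod{v_i}{v_j}=\normsq{\sum_i v_i}-\sum_i\normsq{v_i}=-\sum_i\normsq{v_i}$, the cross terms collapse and the algebra yields $\Exp{\normsq{B}}=\frac1m\cdot\frac{M-m}{M-1}\cdot\frac1M\sum_i\normsq{v_i}\le\frac1m\cdot\frac{M-m}{M-1}\xi^2$. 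Adding the two bounds gives the first inequality.

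The second inequality follows from the same two-step decomposition applied to the increment vectors: $(g(x)-\df{x})-(g(y)-\df{y})=A'+B'$ with $A':=\frac1m\sum_{i\in\M}\big[(\dfi{x;z_i}-\dfi{x})-(\dfi{y;z_i}-\dfi{y})\big]$ and $B':=\frac1m\sum_{i\in\M}\big[(\dfi{x}-\dfi{y})-(\df{x}-\df{y})\big]$. Orthogonality of $A'$ and $B'$ holds for the same reason, $\Exp{\normsq{A'}}\le\frac{\sigmal^2}{m}\normsq{x-y}$ follows from \eqref{def:BoundSmoothVar}, and $\Exp{\normsq{B'}}$ is controlled by the identical without-replacement computation, now with centered population $w_i:=(\dfi{x}-\dfi{y})-(\df{x}-\df{y})$, which satisfies $\frac1M\sum_i\normsq{w_i}\le\xil^2\normsq{x-y}$ by \eqref{def:BoundSmoothHet}; this gives $\Exp{\normsq{B'}}\le\frac1m\cdot\frac{M-m}{M-1}\xil^2\normsq{x-y}$.

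I expect the only genuine obstacle to be the covariance bookkeeping in the $B$ (and $B'$) terms — arranging the indicator second moments so the $\frac{M-m}{M-1}$ finite-population factor drops out cleanly; everything else is orthogonality plus the i.i.d.\ second-moment bounds already recorded in the assumptions. As a sanity check on the $B$ bound one can verify the endpoints: $m=M$ forces $\Exp{\normsq{B}}=0$ (full participation, no selection noise), while $m=1$ gives $\Exp{\normsq{B}}=\frac1M\sum_i\normsq{v_i}\le\xi^2$, both as expected.
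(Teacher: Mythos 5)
Your proof is correct and follows essentially the same route as the paper's: the bias–variance split conditioned on $\M$ (the paper writes it as $\Exp{\normsq{X}}=\normsq{\ExpB{X}}+\Exp{\normsq{X-\ExpB{X}}}$ with $\M$ fixed, which is your $A$/$B$ orthogonality), the independent-sum bound via \eqref{def:BoundVar}/\eqref{def:BoundSmoothVar}, and the indicator second-moment computation with $\ExpB{\ind{i\in\M}\ind{j\in\M}}$ yielding the $\frac{M-m}{M-1}$ finite-population factor after the cross terms collapse because $\sum_i v_i=0$. No gaps; your endpoint sanity checks ($m=M$ and $m=1$) are consistent with the paper's bound.
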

Under these assumptions, $g(x)$ is an unbiased estimate of $\df{x}$.
When $m=M$, heterogeneity disappears, and as $m$ gets smaller relative to $M$, heterogeneity becomes more significant.
Additionally, since $\frac{M-m}{M-1}\leq1$, we can use the bounds $\sigma^2+\xi^2$ and $\sigmal^2+\xil^2$.
Thus, heterogeneity in partial participation adds an extra term to the variance.
In this work, we assume the number of participating machines per round is fixed, and denote it by $m$.

\subsection{Differential Privacy}
\label{sec:dp}
The core concept of Differential Privacy (DP) is to compare the algorithm's output with and without private information and quantify the difference between these outputs.
The closer they are, the more private the algorithm is.
This is measured by the difference between the probability distributions of the outputs.

The \textbf{R{\'{e}}nyi Divergence} is a popular difference measure between probability distributions:
\begin{definition}[R{\'{e}}nyi Divergence~\cite{renyi}]
Let $P,Q$ be probability distributions over the same set, and let $\alpha>1$.
The R{\'{e}}nyi divergence of order $\alpha$ between $P$ and $Q$ is:
\als
\diver{P}{Q}:=\frac{1}{\alpha-1}\log\left(\condExp{\left(\frac{P(X)}{Q(X)}\right)^{\alpha-1}}{X\sim P}\right)
\eals
We follow with the convention that $\frac{0}{0}=0$.
If $Q(x)=0$, but $P(x)\neq0$ for some $x$, then the R{\'{e}}nyi divergence is defined to be $\infty$.
Divergence of orders $\alpha=1,\infty$ are defined by continuity.
\end{definition}
\textbf{Notation:}
If $X\sim P, Y\sim Q$, we will use the terms $\diver{P}{Q}~\&~\diver{X}{Y}$ interchangeably.

Adding Gaussian noise is popular in privacy, so we find the following lemma useful (proof in \Cref{proof:div_gauss}):
\begin{lemma}\label{lem:div_gauss}
Let $P=\Ncal(\mu,I\sigma^2)$ and $Q=\Ncal(\mu+\Delta,I\sigma^2)$, two Gaussian distributions.
Then, $\diver{P}{Q}=\frac{\alpha\normsq{\Delta}}{2\sigma^2}$.
\end{lemma}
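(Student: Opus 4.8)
The plan is to evaluate the expectation in the definition of $\diver{P}{Q}$ directly from the Gaussian densities. First I would write out $P(x)=(2\pi\sigma^2)^{-d/2}\exp(-\normsq{x-\mu}/2\sigma^2)$ and $Q(x)=(2\pi\sigma^2)^{-d/2}\exp(-\normsq{x-\mu-\Delta}/2\sigma^2)$, and note that in the likelihood ratio the normalizing constants and the pure quadratic-in-$x$ terms cancel, leaving
\[
\frac{P(x)}{Q(x)}=\exp\!\left(\frac{\normsq{x-\mu-\Delta}-\normsq{x-\mu}}{2\sigma^2}\right)=\exp\!\left(\frac{-2\dotprod{x-\mu}{\Delta}+\normsq{\Delta}}{2\sigma^2}\right),
\]
so that raising to the power $\alpha-1$ produces an exponential whose argument is affine in the random vector $X-\mu$.

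Next I would take the expectation over $X\sim P$. Under $P$ we have $X-\mu\sim\Ncal(0,I\sigma^2)$, hence $Z:=\dotprod{X-\mu}{\Delta}$ is a scalar Gaussian with mean $0$ and variance $\sigma^2\normsq{\Delta}$ (one may, if preferred, first rotate coordinates so that $\Delta$ is aligned with a single axis, reducing everything to a one-dimensional integral). Applying the Gaussian moment generating function $\Exp{e^{tZ}}=e^{t^2v/2}$ for $Z\sim\Ncal(0,v)$, with $t=-(\alpha-1)/\sigma^2$ and $v=\sigma^2\normsq{\Delta}$, the expectation factorizes and evaluates to
\[
\ExpB{\left(P(X)/Q(X)\right)^{\alpha-1}}=\exp\!\left(\frac{(\alpha-1)\normsq{\Delta}}{2\sigma^2}\right)\exp\!\left(\frac{(\alpha-1)^2\normsq{\Delta}}{2\sigma^2}\right)=\exp\!\left(\frac{\alpha(\alpha-1)\normsq{\Delta}}{2\sigma^2}\right).
\]
Taking the logarithm and dividing by $\alpha-1$ gives $\diver{P}{Q}=\frac{\alpha\normsq{\Delta}}{2\sigma^2}$; the boundary orders $\alpha=1$ and $\alpha=\infty$ then follow by the continuity convention in the definition.

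There is essentially no serious obstacle: the only points needing care are that a linear functional of an isotropic Gaussian is again Gaussian, the bookkeeping of the two exponential factors so that the combination $(\alpha-1)+(\alpha-1)^2=\alpha(\alpha-1)$ comes out right, and a one-line remark on integrability — the moment generating function is finite for all real $t$, so the expectation is finite for every $\alpha>1$ and the divergence is well defined.
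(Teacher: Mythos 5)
Your proposal is correct and follows essentially the same route as the paper's proof: plug the Gaussian densities into the definition, expand $\normsq{X-\mu-\Delta}-\normsq{X-\mu}=\normsq{\Delta}-2\dotprod{\Delta}{X-\mu}$, and evaluate the expectation via the Gaussian moment generating function to obtain $\frac{\alpha\normsq{\Delta}}{2\sigma^2}$. The only cosmetic difference is that you apply the scalar MGF to $Z=\dotprod{X-\mu}{\Delta}$ whereas the paper uses the vector form $\ExpB{e^{\dotprod{a}{X}}}=e^{\dotprod{\mu}{a}+\frac{1}{2}\sigma^2\normsq{a}}$, which amounts to the same computation.
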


The classic differential privacy definition:
\begin{definition}[Differential Privacy~\cite{odo_dp,calib_dp}]
A randomized algorithm $\A$ is $(\epsilon,\delta)$-differentially private, or $(\epsilon,\delta)$-DP, if for all neighboring datasets $\S,\S'$ that differs in a single element, and for all events $\mO$:
\als
\prob{\A(\S)=\mO}\leq e^\epsilon\prob{\A(\S')=\mO}+\delta
\eals
\end{definition}
The term \emph{neighboring datasets} refers to $\S,\S'$ being \emph{ordered} sets of data samples that only differ on \emph{a single sample}.

A privacy measure that relies on the R{\'{e}}nyi divergence:
\begin{definition}[R{\'{e}}nyi Differential Privacy~\cite{rdp}]\label{def:RDP}
For $1\leq\alpha\leq\infty$ and $\epsilon\geq0$, a randomized algorithm $\A$ is $(\alpha,\epsilon)$-R{\'{e}}nyi differentially private, or $(\alpha,\epsilon)$-RDP, if for all neighboring datasets $\S,\S'$: $\diver{\A(\S)}{\A(\S')}\leq\epsilon$.
\end{definition}

RDP implies DP, as shown here (proof in \Cref{proof:dp_rdp}):
\begin{lemma}[\cite{rdp}]\label{lem:dp_rdp}
If $\A$ satisfies $(\alpha,\epsilon)$-RDP, then for all $\delta\in(0,1)$, it also satisfies $\left(\epsilon+\frac{\log\left(\frac{1}{\delta}\right)}{\alpha-1},\delta\right)$-DP.
In particular, if $\A$ satisfies $\left(\alpha,\frac{\alpha\rho^2}{2}\right)$-RDP for every $\alpha>1$, then for all $\delta\in(0,1)$, it also satisfies $\left(\frac{\rho^2}{2}+\rho\sqrt{2\log\left(\frac{1}{\delta}\right)},\delta\right)$-DP.
\end{lemma}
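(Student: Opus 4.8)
This is a proof proposal for Lemma~\ref{lem:dp_rdp}, which has two parts: the general RDP-to-DP conversion, and the specialization to the Gaussian-type RDP curve $\left(\alpha,\frac{\alpha\rho^2}{2}\right)$.

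\textbf{Plan for the first part.} The goal is to convert a bound on the R\'enyi divergence into an $(\epsilon',\delta)$-DP guarantee. The plan is to fix neighboring datasets $\S,\S'$ and an event $\mO$, and to bound $\prob{\A(\S)\in\mO}$ by splitting the probability space according to whether the privacy loss random variable $Z:=\log\frac{P(X)}{Q(X)}$ (with $P=\A(\S)$, $Q=\A(\S')$, $X\sim P$) exceeds a threshold $\epsilon+\frac{\log(1/\delta)}{\alpha-1}$ or not. On the event $\{Z\le\epsilon+\frac{\log(1/\delta)}{\alpha-1}\}$ we get the $e^{\epsilon'}\prob{\A(\S')\in\mO}$ contribution directly from the definition of $Z$. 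For the complementary event $\{Z>\epsilon+\frac{\log(1/\delta)}{\alpha-1}\}$ we apply a Chernoff/Markov argument: $\prob{Z>t}=\prob{e^{(\alpha-1)Z}>e^{(\alpha-1)t}}\le e^{-(\alpha-1)t}\,\ExpB{e^{(\alpha-1)Z}}$, and $\ExpB{e^{(\alpha-1)Z}}=\condExp{\left(\frac{P(X)}{Q(X)}\right)^{\alpha-1}}{X\sim P}=e^{(\alpha-1)\diver{P}{Q}}\le e^{(\alpha-1)\epsilon}$ by the RDP hypothesis and the definition of R\'enyi divergence. Choosing $t=\epsilon+\frac{\log(1/\delta)}{\alpha-1}$ makes this tail probability at most $\delta$, which becomes the additive term. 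Combining the two pieces gives the claimed $\left(\epsilon+\frac{\log(1/\delta)}{\alpha-1},\delta\right)$-DP.

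\textbf{Plan for the second part.} Here the RDP guarantee holds simultaneously for \emph{all} $\alpha>1$ with $\epsilon(\alpha)=\frac{\alpha\rho^2}{2}$, so we are free to optimize the bound $\epsilon(\alpha)+\frac{\log(1/\delta)}{\alpha-1}=\frac{\alpha\rho^2}{2}+\frac{\log(1/\delta)}{\alpha-1}$ over $\alpha>1$. Writing $\beta:=\alpha-1>0$, this is $\frac{\rho^2}{2}+\frac{\beta\rho^2}{2}+\frac{\log(1/\delta)}{\beta}$; the last two terms are minimized by AM-GM at $\beta=\sqrt{2\log(1/\delta)}/\rho$, giving value $\rho\sqrt{2\log(1/\delta)}$. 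Substituting back yields the $\left(\frac{\rho^2}{2}+\rho\sqrt{2\log(1/\delta)},\delta\right)$-DP claim. One should note $\beta>0$ is automatic, so the optimizing $\alpha$ is admissible.

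\textbf{Main obstacle.} The only genuinely delicate point is the tail-bound step in the first part: one must be careful that the privacy loss random variable $Z$ is well-defined and that the manipulation $\ExpB{e^{(\alpha-1)Z}}=e^{(\alpha-1)\diver{P}{Q}}$ is exactly the definition of R\'enyi divergence (rearranged), including handling the convention $0/0=0$ and the case where $Q$ vanishes on a $P$-null set. Everything else — the split of $\prob{\A(\S)\in\mO}$ into two events and the AM-GM optimization — is routine. I would present the first part as the substantive lemma and the second as a short corollary-style computation.
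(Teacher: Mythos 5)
Your proposal is correct, but your first part takes a genuinely different route from the paper. You work with the privacy loss random variable $Z=\log\frac{P(X)}{Q(X)}$ (with $P=\A(\S)$, $Q=\A(\S')$, $X\sim P$), split $\prob{\A(\S)=\mO}$ according to whether $Z$ exceeds the threshold $t=\epsilon+\frac{\log\left(\frac{1}{\delta}\right)}{\alpha-1}$, and control the tail by Markov's inequality applied to $e^{(\alpha-1)Z}$, whose expectation is exactly $e^{(\alpha-1)\diver{P}{Q}}\leq e^{(\alpha-1)\epsilon}$ by the definition of R\'enyi divergence; the additive $\delta$ then appears directly as a tail probability (this is the Bun--Steinke-style zCDP/RDP-to-DP conversion). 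The paper instead applies H\"older's inequality with exponents $\alpha$ and $\frac{\alpha}{\alpha-1}$ to obtain the uniform event-wise bound $\prob{\A(\S)=\mO}\leq\left(e^{\epsilon}\prob{\A(\S')=\mO}\right)^{1-\frac{1}{\alpha}}$, and then performs a case analysis on whether this quantity is at most $\delta$, which is Mironov's original argument. Both give exactly the same parameters; your route is more probabilistic, makes the role of $\delta$ as a failure probability transparent, and your caveat about absolute continuity and the $\frac{0}{0}$ convention is precisely the point that needs care there, while the paper's route avoids introducing the loss random variable and works directly with probabilities of events, at the cost of the slightly less intuitive two-case argument. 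The second part is identical in substance in both: your AM-GM minimization over $\beta=\alpha-1$ produces the same optimizer $\alpha=1+\frac{1}{\rho}\sqrt{2\log\left(\frac{1}{\delta}\right)}$ that the paper plugs in to reach $\left(\frac{\rho^2}{2}+\rho\sqrt{2\log\left(\frac{1}{\delta}\right)},\delta\right)$-DP.
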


Lastly, for  the case of Gaussian noise, there is a more fitting DP definition based on RDP:
\begin{definition}[Zero-Concentrated Differential Privacy]~\cite{zcdp}\label{def:zCDP}
A randomized algorithm $\A$ is $(\xi,\rho)$-zero-concentrated differentially private, or $(\xi,\rho)$-zCDP, if for all neighboring datasets $\S,\S'$ that differ in a single element, and for all $\alpha>1$, we have:
\als
\diver{\A(\S)}{\A(\S')}\leq\xi+\rho\alpha
\eals
We define $\rho$-zCDP as $(0,\rho)$-zCDP.
\end{definition}
Note that if an algorithm is $(\alpha,\xi+\rho\alpha)$-RDP for all $\alpha>1$, it is also $(\xi,\rho)$-zCDP and visa-versa.
Also note that according to \Cref{lem:div_gauss}, a Gaussian mechanism is $\frac{\normsq{\Delta}}{2\sigma^2}$-zCDP, and using \Cref{lem:dp_rdp}, if an algorithm is $\frac{\rho^2}{2}$-zCDP, it is also $\left(\frac{\rho^2}{2}+\rho\sqrt{2\log\left(\frac{1}{\delta}\right)},\delta\right)$-DP, for all $\delta\in(0,1)$.
By choosing to work with $\frac{\rho^2}{2}$-zCDP, we get that $\rho=\frac{\norm{\Delta}}{\sigma}$, which is the signal-to-noise ratio, and we also get that $\Oo{\frac{1}{\rho}}\in\Oo{\frac{1}{\epsilon}}$, with $\epsilon=\frac{\rho^2}{2}+\rho\sqrt{2\log\left(\frac{1}{\delta}\right)}$.

In federated learning, we consider the case where DP guarantees must be ensured individually for each machine $i\in[M]$.
The challenge in this context arises because each machine communicates multiple times with the $\PS$, potentially revealing its private dataset.
The machines don't trust one another, and thus cannot allow them to uncover private data.
In this work, we mainly focus on the \textbf{Untrusted Server} case, where the $\PS$ cannot reveal private data, and DP guarantees must be ensured for the information machines share with the $\PS$.
In the \textbf{Trusted Server} case, machines trust the $\PS$ and may share private data, but it is still necessary to prevent machines from uncovering each other’s private data from the $\PS$.

\subsection{Lower Bounds}
Here we elaborate on the existing excess loss lower bounds for the DP trusted and untrusted server scenarios, and show that they coincide with the upper bounds that we establish.
Notably, existing lower bounds holds irrespective of the training method, and therefore apply simultaneously for both full and partial-participation settings.
Concretely, the starting points of such lower bounds is to assume that the learning process may access a total of $n$ data points.

In the \textbf{Trusted Server} setting we can think of the server as learner that may access $n$ data points, and aims to output a solution that maintains a DP level of $\epsilon$.
The work of \cite{tight_bound} establishes a lower bound of $\Om{\frac{1}{\sqrt{n}}+\frac{\sqrt{d}}{\epsilon n}}$ on the excess loss in this case, which matches the term in our upper bound which is related to privacy.

In the \textbf{Untrusted Server} setting we have $M$ machines, which aim to collaboratively compute a solution to a given stochastic optimization problem.
Yet each machine and aims to maintains a DP level of $\epsilon$ for the data that it shares during the learning process.
It is also assume that the overall data used by all machines is of size $n$.
The work of \cite{private_fed} establishes a lower bound of $\Om{\frac{1}{\sqrt{n}}+\frac{\sqrt{Md}}{\epsilon n}}$ on the excess loss in this case, which matches the term in our upper bound which is related to privacy.
The extra $\sqrt{M}$ factor exists because we have $M$ devices that hold privacy, instead of just one (the $\PS$).

Note that the first term in our lower bounds $\Om{\frac{1}{\sqrt{n}}}$ matches the known lower and upper bounds of excess loss in non-DP setting.
Finally, note that in our partial-participation setting, only a subset of $m<M$ machines participate once in each round, for a total of $T$ rounds, so we have $n=mT$.

\section{The $\mu^2$ Technique}
\label{sec:mech}
As mentioned earlier, our technique for the partial-participation case builds on the previous approach of \cite{dp_mu2_fl} for the full-participation case.
Here we discuss the algorithmic techniques behind the latter, which will later serve us in our design of an optimal and efficient approach for partial-participation.

The approach of \cite{dp_mu2_fl} strongly relies on a recent (non-DP) algorithmic approach called $\mu^2$-SGD \cite{mu_square}.
Next we elaborate on it.

\subsection{The $\mu^2$-SGD Algorithm}
The $\mu^2$-SGD \cite{mu_square} is a variant of standard SGD with two modifications.
Its update rule is of the following form:
$w_1=x_1\in\K$, and $\forall t\geq1$:
\al\label{eq:MU2SGD}
w_{t+1}=&\proj{w_t-\eta\alpha_t d_t} \non
x_{t+1}=&\frac{\alpha_{1:t}}{\alpha_{1:t+1}}x_t+\frac{\alpha_{t+1}}{\alpha_{1:t+1}}w_{t+1}
\eal
Here $\alpha_t>0$ are importance weights for each time-step.
We use $\alpha_t\propto t$, giving more weight to more recent updates.

Note that this update rule maintains two sequences: $\{w_t\}_t$, $\{x_t\}_t$, where $\{x_t\}_t$ is a sequence of weighted averages of the iterates $\{w_t\}_t$.
$d_t$ is an estimate for the gradient at the \emph{average point}, i.e.~of $\df{x_t}$, which differs from standard SGD which employs estimates for the gradients \emph{at the iterates}, i.e.~of $\df{w_t}$.
This approach is related to a technique called Anytime-GD~\cite{anytime}, which is strongly connected to the notions of momentum and acceleration~\cite{anytime, kavis2019unixgrad}.

In the standard SGD version of Anytime-GD, one would use the estimate $\df{x_t;z_t}$.
However, the $\mu^2$-SGD approach suggests to employ a variance reduction mechanism to yield a \emph{corrected momentum} estimate $d_t$ in the spirit of \cite{storm}, with a technique called Stochastic Corrected Momentum (STORM).
This is done as follows: $d_1:=\df{x_1;z_1}$, and $\forall t>1$:
\al\label{eq:STORM}
d_t=\df{x_t;z_t}+(1-\beta_t)(d_{t-1}-\df{x_{t-1};z_t})
\eal
Where $\beta_t\in[0,1]$ are called \emph{corrected momentum} weights.
It can be shown by induction that $\ExpB{d_t}=\ExpB{\df{x_t}}$, but generally $\ExpB{d_t\vert x_t}\neq\df{x_t}$, in contrast to standard SGD estimators.
However, as demonstrated in \cite{mu_square} by choosing corrected momentum weights of $\beta_t\propto\frac{1}{t}$, the above estimates achieve an error reduction at time-step $t$ of:
\als
\Exp{\normsq{\eps_t^{\mu^2}}}:=\Exp{\normsq{d_t-\df{x_t}}}\leq \Oo{\frac{1}{mt}\left(\tsigma^2+\txi^2\right)}
\eals
Where $\tsigma^2\leq\Oo{\sigma^2+\sigmal^2D^2}$ and $\txi^2\leq\Oo{\xi^2+\xil^2D^2}$.
This implies that the error decreases with $t$, in contrast to standard SGD where the variance $\Exp{\normsq{\eps^{\textnormal{SGD}}_t}}:=\Exp{\normsq{g_t-\df{x_t}}}$ remains uniformly bounded by $\frac{1}{m}\left(\sigma^2+\xi^2\right)$, as shown in~\Cref{lem:AVGBound}.

Upon choosing $\beta_t=1-\frac{\alpha_{t-1}}{\alpha_t}$, and denoting $q_t:=\alpha_td_t$, the gradient estimate update of $\mu^2$-SGD in \Cref{eq:STORM} can be written as follows (proof in \Cref{proof:qUp}):
\al\label{eq:qUp}
s_t:=&\alpha_t\df{x_t;z_t}-\alpha_{t-1}\df{x_{t-1};z_t} \non
q_t=&q_{t-1}+s_t
\eal

\subsection{Differentially Private $\mu^2$ Federated Learning}
The $\mu^2$-SGD approach is naturally extends to the FL setting as follows:
At every time-step, $t$ the $\PS$ sends the current query point $x_t$ to \emph{all machines}.
Then, each machine $i\in[M]$ updates a local estimate of the (weighted) gradient $q_{t,i}=\alpha_td_{t,i}$ based on its local (and private) data and on $x_t$.
This is done similarly to \Cref{eq:qUp}, with $z_{t,i}$ being the sample used by machine $i$ at time-step $t$:
\al\label{eq:qUp_i}
s_{t,i}:=&\alpha_t\df{x_t;z_{t,i}}-\alpha_{t-1}\df{x_{t-1};z_{t,i}} \non
q_{t,i}=&q_{t-1,i}+s_{t,i}
\eal

In the \textbf{trusted server} case, \cite{dp_mu2_fl} suggests to aggregate these estimates by the $\PS$ into $q_t:=\frac{1}{M}\sum_{i=1}^Mq_{t,i}$.
To maintain privacy, the server adds a fresh noise $Y_t$ into $q_t$ and use $\tq_t:=q_t+Y_t$ (a common technique in DP training~\cite{dl_dp}).
In the \textbf{untrusted server} case, \cite{dp_mu2_fl} suggests adding a fresh zero-mean noise $Y_{t,i}$ to the gradient estimates $q_{t,i}$, in order to ensure privacy, and then send $\tq_{t,i}:=q_{t,i}+Y_{t,i}$ to the $\PS$.
It aggregate these private estimates into $\tq_t=\frac{1}{M}\sum_{i=1}^M\tq_{t,i}$.

In both case, the $\PS$ then updates $w_t$ and $x_t$ similarly to \Cref{eq:MU2SGD}, while using $\tq_t$ instead of $\alpha_td_t$.
Upon choosing appropriate noise magnitude and learning rates, this approach ensures optimal convergence guarantees for DP training in full-participation settings, while preserving the linear computational complexity of standard methods.

There is a natural trade-off in selecting the noise magnitude: larger noise improves privacy but slows convergence.
As shown in \cite{dp_mu2_fl}, the $\mu^2$-SGD algorithm is substantially less sensitive to individual data samples compared to standard SGD, so we can maintain privacy with relatively small noise magnitudes, thus achieving optimal convergence.

\section{Our Approach}
\label{sec:app}

Here we discuss the challenges of extending \alg to Partial-Participation scenarios (\Cref{sec:ChallengeDPMU2}), and 
 suggest a natural modification that remedies these challenges in the \emph{Trusted server} case, thus leading to optimal and computationally efficient methods for this case.
Unfortunately, this modification leads to suboptimal performance in the more challenging \emph{Untrusted server} case.

Towards addressing this gap, in \Cref{sec:NoiseCancel} we provide a new perspective on \alg in the full-participation case, which induces a novel noise-cancellation technique, allowing us to extend their method to Partial-Participation settings.
Finally, in \Cref{sec:PropertiesMU2SGD} we discuss the properties of the resulting gradient estimates, which are later used in to analyze our approach.
In~\Cref{sec:alg} we present our complete algorithm (\Cref{alg:alg}) and its guarantees.

\subsection{Challenge in Partial-Participation}
\label{sec:ChallengeDPMU2}
The performance of \alg strongly relies on the assumption that \emph{every} machine can access the \emph{entire} sequence of query points $\{x_t\}_t$, which is crucial for computing $q_{t,i}$ (see \Cref{eq:qUp_i}).
Unfortunately, this assumption no longer holds in partial-participation setting.

\textbf{Alternative 1:}
A natural resolution is to update $q_{t,i}$ based on the sequence of queries that machine $i$ may access:
\al
\label{eq:DelayedMu2}
q_{t,i}=q_{t-\tau,i}+\alpha_t\df{x_t;z_{t,i}}-\alpha_{t-\tau}\df{x_{t-\tau};z_{t,i}}
\eal
Where $t-\tau$ is the previous time-step that machine $i$ participated in.
Unfortunately, this approach yields suboptimal guarantees even in standard non-DP partial-participation settings.
The delayed updates lead to an excessive error for $q_{t,i}$, resulting a degraded convergence rate of $\Oo{\sqrt{\frac{M}{m}}\frac{1}{\sqrt{n}}}$ even without DP requirements.
We elaborate on it in \Cref{sec:alt}.

\textbf{Alternative 2:}
Another natural solution is to simply let the $\PS$ calculate $q_t$.
This suggest the following approach for the partial-participation setting:
A machine $i$ participating at round $t$ computes $s_{t,i}$ (\Cref{eq:qUp_i}), and sends it the the $\PS$, who averages them into $s_t=\frac{1}{m}\sum_{i\in\M_t}s_{t,i}$ and uses these estimates to update $q_t$ (\Cref{eq:qUp}).

This approach works well in the \emph{Trusted server} case, where the $\PS$ can receive the non-private $s_{t,i}$ from the participating machines, and use them to update $q_t$.
Similarly to the approach of \cite{dp_mu2_fl} the $\PS$ then injects noise into $q_t$ to maintain its privacy, and then updates according to \Cref{eq:MU2SGD}.
We describe this approach in \Cref{sec:trust}, and substantiate a convergence rate of $\Oo{\frac{1}{\sqrt{n}}+ \frac{\sqrt{d}}{\epsilon n}}$.

Nevertheless, extending this approach to the \emph{Untrusted server} case is not trivial.
Concretely, one can think of the natural extension where each machine $i$ that participates at round $t$ computes a private estimate of $s_{t,i}$, and then privatize it by adding a fresh noise $\ts_{t,i}:=s_{t,i}+Y_{t,i}$ which is then sent to the $\PS$.
The latter utilizes the private $\ts_{t,i}$ to update $\tq_t$ which is then used to update in the spirit of \Cref{eq:MU2SGD}.
Unfortunately, ensuring the privacy of $\ts_{t,i}$ that way, leads to a cumulative noise in the global estimate $\tq_t$, which leads to highly suboptimal performance.
Next, we will see how to mend this approach by injecting \emph{correlated} noise to $s_{t,i}$, which leads to noise cancellation at $\tq_t$, thus enabling to achieve optimal performance.

\subsection{Noise Cancellation}
\label{sec:NoiseCancel}
We focus on addressing the untrusted server partial-participation case.
Towards doing so, we provide an alternative perspective on the approach of \cite{dp_mu2_fl} for the full-participation.
In the full-participation case, \Cref{eq:qUp_i} implies $q_{t,i}=\sum_{\tau=1}^ts_{\tau,i}$.
Since $\tq_{t,i}=q_{t,i}+Y_{t,i}$, we can write the update rule for $\tq_t$ in the $\PS$ like this:
\als
\tq_t=\tq_{t-1}+s_t+Y_t-Y_{t-1}:=\tq_{t-1}+\ts_t
\eals
Where $s_t:=\frac{1}{M}\sum_{i=1}^Ms_{t,i}$, $Y_t:=\frac{1}{M}\sum_{i=1}^MY_{t,i}$, and $\ts_t:=s_t+Y_t-Y_{t-1}$.
The above update rule for $\tq_t$ can be equivalently achieved as follows: at every round $t$ each machine $i$ computes $\ts_{t,i}=s_{t,i}+Y_{t,i}-Y_{t-1,i}$, which is equivalent to injecting $s_t$ with a fresh noise term while canceling the previous one.
Then each machine sends $\ts_{t,i}$ to the $\PS$, which aggregates them into $\ts_t=\frac{1}{M}\sum_{i=1}^M\ts_{t,i}$, and updates the estimate $\tq_t=\tq_{t-1}+\ts_t$.
The sequence of noise terms that we inject into $s_{t,i}$, $\{Y_{t,i}-Y_{t-1,i}\}_t$, is now correlated.

This perspective now induces the following approach for the partial-participation setting: at each time-step $t$, only some of the machines, $\M_t\subseteq[M]$, are participating.
We employ this update rule (\Cref{alg:alg}):
\al
\label{eq:NoiseCancel1}
s_{t,i}=&\alpha_t\df{x_t;z_{t,i}}-\alpha_{t-1}\df{x_{t-1};z_{t,i}} \non
\ts_{t,i}=&s_{t,i}+Y_{t,i}-Y_{t-1,i} \non
\ts_t=&\frac{1}{m}\sum_{i\in\M_t}\ts_{t,i} \quad\&\quad \tq_t=\tq_{t-1}+\ts_t
\eal
And the $\PS$ then uses $\tq_t$ to update $w_t$ and $x_t$ as in \Cref{eq:MU2SGD}.
The noise $Y_{t,i}$ being:
\al
\label{eq:NoiseCancel2}
Y_{t,i}=
\begin{cases}
y_{t,i} & i\in\M_t \\
Y_{t-1,i} & i\notin\M_t
\end{cases}
\eal
Where the noise $y_{t,i}\sim P_{t,i}$ is the new independent noise generated at time-step $t$ in machine $i$, and $Y_{t,i}$ is the last noise generated up to time-step $t$ in machine $i$.
As we can see, for the machines that participate at time-step $t$, we generate a new noise, and the machines that do not participate at time-step $t$ retain their previous noise.
Using this noise, we can see how $\tq_t$ looks like (proof in \Cref{proof:qNoise}):
\begin{lemma}\label{lem:qNoise}
Our definitions of $\tq_t,q_t,Y_{t,i}$ above imply:
\als
\tq_t=q_t+\frac{1}{m}\sum_{i=1}^MY_{t,i}
\eals
\end{lemma}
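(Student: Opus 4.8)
The plan is to unroll both the private estimate $\tq_t$ and the non-private estimate $q_t$ into sums over the rounds $\tau\le t$, and then argue that the difference telescopes to exactly $\tfrac1m\sum_{i=1}^M Y_{t,i}$. First I would fix the conventions $q_0=\tq_0=\bold0$ and $Y_{0,i}=\bold0$ for every $i$, and recall that $q_t$ is the non-private counterpart of $\tq_t$, i.e.\ it obeys $q_t=q_{t-1}+s_t$ with $s_t=\frac1m\sum_{i\in\M_t}s_{t,i}$, while by \Cref{eq:NoiseCancel1} we have $\tq_t=\tq_{t-1}+\ts_t$ with $\ts_t=\frac1m\sum_{i\in\M_t}\ts_{t,i}$ and $\ts_{t,i}=s_{t,i}+Y_{t,i}-Y_{t-1,i}$. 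Subtracting the two recursions gives
\als
\tq_t-q_t=(\tq_{t-1}-q_{t-1})+\frac1m\sum_{i\in\M_t}\blr{Y_{t,i}-Y_{t-1,i}}.
\eals

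The key step, and the one place where the construction \Cref{eq:NoiseCancel2} is really used, is to replace the partial sum $\sum_{i\in\M_t}$ by the full sum $\sum_{i=1}^M$. Indeed, for every machine $i\notin\M_t$ the definition \Cref{eq:NoiseCancel2} gives $Y_{t,i}=Y_{t-1,i}$, so $Y_{t,i}-Y_{t-1,i}=\bold0$ and such indices contribute nothing; hence $\sum_{i\in\M_t}(Y_{t,i}-Y_{t-1,i})=\sum_{i=1}^M(Y_{t,i}-Y_{t-1,i})$. I expect this to be the main (really the only) subtlety — everything else is bookkeeping — so I would state it explicitly rather than leave it implicit.

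With that substitution in hand, one gets the clean recursion $\tq_t-q_t=(\tq_{t-1}-q_{t-1})+\frac1m\sum_{i=1}^M(Y_{t,i}-Y_{t-1,i})$. I would then either induct on $t$ or simply sum this identity over $\tau=1,\dots,t$; using $\tq_0-q_0=\bold0$ the left side collapses to $\tq_t-q_t$, and on the right the inner telescoping sum over $\tau$ leaves $\sum_{\tau=1}^t(Y_{\tau,i}-Y_{\tau-1,i})=Y_{t,i}-Y_{0,i}=Y_{t,i}$ for each $i$. This yields $\tq_t-q_t=\frac1m\sum_{i=1}^M Y_{t,i}$, which is the claimed identity, completing the proof. (The base case $t=1$ is immediate: $\M_1=[M]$ by convention of the first round, so $\tq_1=q_1+\frac1m\sum_i(y_{1,i}-\bold0)=q_1+\frac1m\sum_i Y_{1,i}$.)
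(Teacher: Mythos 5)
Your proof is correct and follows essentially the same route as the paper's: both unroll the recursions and telescope the noise increments, with the key fact that $Y_{t,i}=Y_{t-1,i}$ for $i\notin\M_t$ allowing the per-round sum over $\M_t$ to be replaced by a sum over all $M$ machines (the paper instead swaps the order of summation and telescopes over each machine's participation set $\T_i$, which is the same bookkeeping). The only blemish is your parenthetical base case: the paper has no convention that $\M_1=[M]$, but this is harmless since your main argument only needs $\tq_0=q_0=0$ and $Y_{0,i}=0$, under which the case $t=1$ already follows from the general step.
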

Due to our noise cancellation, the effective injected noise on the $\PS$ at time $t$ is $Y_t=\frac{1}{m}\sum_{i=1}^MY_{t,i}$.
Note that $Y_t$ is a sum a total of $M$ different independent noises (one from each machine), but normalized by the number of machines participating at each round $m$.
Also note that since for some machines $Y_{t,i}=Y_{t-1,i}$, the noises of different time-steps are not independent.
These two points will make it harder for us to evaluate the effect of this noise on the excess loss.

\subsection{Gradient Error \& Sensitivity Analysis}
\label{sec:PropertiesMU2SGD}
Here we analyze the properties of the $s_{t,i}$ estimates depicted above, as well as discuss the properties of $q_t$.
These properties are crucial for analyzing the privacy and performance of our approach.
Our analysis follows similar lines to \cite{mu_square,dp_mu2_fl}.

We define: $g_{t,i}:=\dfi{x_t;z_{t,i}}$, $\tg_{t,i}:=\dfi{x_t,z_{t+1,i}}$, $\bg_{t,i}:=\dfi{x_t}$, $\bg_t:=\df{x_t}$.
By these notation we can write $s_{t,i}=\alpha_tg_{t,i}-\alpha_{t-1}\tg_{t-1,i}$.
We will also use the notation $\condExp{\cdot}{t-1}$ to denote the expectation conditioned over all randomization up to time-step $t-1$.
Using this notation we get that: $\condExp{s_{t,i}}{t-1}=\alpha_t\bg_{t,i}-\alpha_{t-1}\bg_{t-1,i}$.
We define $\bs_{t,i}:=\alpha_t\bg_{t,i}-\alpha_{t-1}\bg_{t-1,i}$, and $\bs_t=\alpha_t\bg_t-\alpha_{t-1}\bg_{t-1}$.
Next we bound $s_{t,i}$ (proof in \Cref{lem:bound_s}):
\begin{lemma}[\cite{dp_mu2_fl}]
\label{lem:bound_s}
Let $\K\subset\reals^d$ be a convex set of diameter $D$, and $\{\fii{\cdot;z}\}_{z\in\Z_i}$ be a family of convex $G$-Lipschitz and $L$-smooth functions.
Assume $\alpha_t=t$ and define $S:=G+2LD, \tsigma:=\sigma+2\sigmal D, \txi:=\xi+2\xil D$:
\als
\norm{s_{t,i}}\leq S~,~~~\text{and}~~~
\Exp{\normsq{s_t-\bs_t}}\leq&\frac{1}{m}\left(\tsigma^2+\frac{M-m}{M-1}\txi^2\right)
\eals
Where the in expectation bound further assumes that the samples in $\Z_i$ arrive from i.i.d.~$\D_i$.
\end{lemma}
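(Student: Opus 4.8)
The plan is to establish the two claims separately, as they are of a different nature: the first is a deterministic (worst-case) bound relying on boundedness and smoothness, and the second is a variance bound obtained by reducing to \Cref{lem:AVGBound}.

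For the deterministic bound $\norm{s_{t,i}}\le S$, I would start from the identity $s_{t,i}=\alpha_t g_{t,i}-\alpha_{t-1}\tg_{t-1,i}$ with $g_{t,i}=\dfi{x_t;z_{t,i}}$ and $\tg_{t-1,i}=\dfi{x_{t-1};z_{t,i}}$ (note both gradients use the \emph{same} sample $z_{t,i}$). Writing $\alpha_t=\alpha_{t-1}+1$ (since $\alpha_t=t$), split $s_{t,i}=\alpha_{t-1}\bigl(\dfi{x_t;z_{t,i}}-\dfi{x_{t-1};z_{t,i}}\bigr)+\dfi{x_t;z_{t,i}}$. The first term is bounded by $\alpha_{t-1}\cdot L\norm{x_t-x_{t-1}}$ using $L$-smoothness, and the second by $G$ using the Lipschitz assumption. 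It then remains to bound $\alpha_{t-1}\norm{x_t-x_{t-1}}$ by $2D$. For this I would invoke the averaging update in \Cref{eq:MU2SGD}: $x_t-x_{t-1}=\frac{\alpha_t}{\alpha_{1:t}}(w_t-x_{t-1})$, so $\norm{x_t-x_{t-1}}=\frac{\alpha_t}{\alpha_{1:t}}\norm{w_t-x_{t-1}}\le \frac{\alpha_t}{\alpha_{1:t}}D$ since $w_t,x_{t-1}\in\K$. Hence $\alpha_{t-1}\norm{x_t-x_{t-1}}\le \alpha_{t-1}\frac{\alpha_t}{\alpha_{1:t}}D\le \frac{\alpha_t^2}{\alpha_{1:t}}D$; with $\alpha_t=t$ we have $\alpha_{1:t}=t(t+1)/2\ge t^2/2$, so $\frac{\alpha_t^2}{\alpha_{1:t}}\le 2$, giving the bound $2LD$ and thus $\norm{s_{t,i}}\le G+2LD=S$.

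For the variance bound, the key observation is that $s_t-\bs_t = \alpha_t(g_t-\bg_t)-\alpha_{t-1}(\tg_{t-1}-\bg_{t-1})$ where $g_t=\frac1m\sum_{i\in\M_t}\dfi{x_t;z_{t,i}}$ is exactly the minibatch average estimator of \Cref{lem:AVGBound}, and similarly $\tg_{t-1}$ is the minibatch average of $\dfi{x_{t-1};z_{t,i}}$ using the same fresh samples $z_{t,i}$. Thus $s_t-\bs_t$ equals the ``gradient-difference'' quantity $(g(x_t)-\df{x_t})-(g(x_{t-1})-\df{x_{t-1}})$ up to the importance weights — but since $\alpha_t=\alpha_{t-1}+1$, I would write $\alpha_t(g_t-\bg_t)-\alpha_{t-1}(\tg_{t-1}-\bg_{t-1}) = \alpha_{t-1}\bigl[(g_t-\bg_t)-(\tg_{t-1}-\bg_{t-1})\bigr] + (g_t-\bg_t)$. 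Applying the elementary inequality $\normsq{a+b}\le 2\normsq a+2\normsq b$ (or $(1+c)\normsq a + (1+1/c)\normsq b$ with a suitable $c$) and then taking expectations, the first piece is controlled by the second bound of \Cref{lem:AVGBound} times $\alpha_{t-1}^2\normsq{x_t-x_{t-1}}$, and the second by the first bound of \Cref{lem:AVGBound}; combining with $\alpha_{t-1}^2\normsq{x_t-x_{t-1}}\le 4D^2$ from the previous paragraph and the definitions $\tsigma=\sigma+2\sigmal D$, $\txi=\xi+2\xil D$ collapses everything into $\frac1m\bigl(\tsigma^2+\frac{M-m}{M-1}\txi^2\bigr)$ after regrouping the square roots via $(\sqrt{a}+\sqrt b)^2$-type bookkeeping.

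The main obstacle is the second part: getting the constants to line up so that the bound is exactly $\frac1m(\tsigma^2+\frac{M-m}{M-1}\txi^2)$ with $\tsigma=\sigma+2\sigmal D$ rather than a looser $\frac1m(2\sigma^2+8\sigmal^2D^2+\dots)$. This requires not splitting with a crude factor $2$ but instead keeping $s_t-\bs_t$ as a single minibatch quantity: treat $h_i := \alpha_t(\dfi{x_t;z_{t,i}}-\dfi{x_t}) - \alpha_{t-1}(\dfi{x_{t-1};z_{t,i}}-\dfi{x_{t-1}})$ as the per-machine centered contribution, note $\frac1m\sum_{i\in\M_t}h_i = s_t-\bs_t$, and bound $\normsq{h_i}$ (and its heterogeneity analogue) pointwise by $(\alpha_t\sigmal\norm{x_t-x_{t-1}}+\dots)$-style terms before invoking the minibatch variance decomposition of \Cref{lem:AVGBound} directly on the $h_i$'s. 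Then the triangle inequality $\norm{\dfi{x_t;z}-\dfi{x_t}}\le \norm{(\dfi{x_t;z}-\dfi{x_t})-(\dfi{x_{t-1};z}-\dfi{x_{t-1}})} + \norm{\dfi{x_{t-1};z}-\dfi{x_{t-1}}}$ combined with $\alpha_{t-1}\norm{x_t-x_{t-1}}\le 2D$ is what produces the clean $\sigma+2\sigmal D$ and $\xi+2\xil D$ groupings. I expect this bookkeeping — rather than any deep idea — to be the delicate part, and it is presumably why the statement is attributed to \cite{dp_mu2_fl}.
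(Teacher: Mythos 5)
Your first part is correct and is essentially the paper's argument: both bound $\alpha_{t-1}\norm{x_t-x_{t-1}}\le 2D$ via the Anytime averaging identity (the paper writes $x_t-x_{t-1}=\frac{\alpha_t}{\alpha_{1:t-1}}(w_t-x_t)$ rather than your $\frac{\alpha_t}{\alpha_{1:t}}(w_t-x_{t-1})$, but for $\alpha_t=t$ both give the factor $2$) and then split $s_{t,i}$ into a Lipschitz piece and a smoothness piece. For the second part, however, neither of your two routes is complete as written. The ``key observation'' route --- apply \Cref{lem:AVGBound} to the aggregated quantities $g_t-\bg_t$ and $(g_t-\bg_t)-(\tg_{t-1}-\bg_{t-1})$ and then recombine with a $(\sqrt{a}+\sqrt{b})^2$ step --- cannot produce the stated constant: it yields $\frac{1}{m}\left(\sqrt{\sigma^2+c\,\xi^2}+2D\sqrt{\sigmal^2+c\,\xil^2}\right)^2$ with $c=\frac{M-m}{M-1}$, which by the triangle inequality in $\reals^2$ is at least $\frac{1}{m}\left(\tsigma^2+c\,\txi^2\right)$, with strict inflation whenever the cross terms mismatch (e.g.\ $\sigma=\xil=1$, $\xi=\sigmal=0$, $D=\tfrac12$, $m=1$ gives $4$ versus the claimed $2$). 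The paper avoids this by decomposing \emph{before} recombining: conditioning on $\M_t$, $\Exp{\normsq{s_t-\bs_t}}=\normsq{\frac1m\sum_{i\in\M_t}(\bs_{t,i}-\bs_t)}+\Exp{\normsq{\frac1m\sum_{i\in\M_t}(s_{t,i}-\bs_{t,i})}}$, so the heterogeneity and noise contributions add with no cross term; each is then handled separately with the $(\alpha_t-\alpha_{t-1})$ versus $\alpha_{t-1}$ split and the inequality $\Exp{\normsq{X+Y}}\le\left(\sqrt{\Exp{\normsq{X}}}+\sqrt{\Exp{\normsq{Y}}}\right)^2$, giving $\tsigma^2/m$ and, after averaging the sampling indicators over the uniformly random $\M_t$ (this is where $\frac{M-m}{M-1}$ enters), $\frac{M-m}{M-1}\txi^2/m$.

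Your refined plan contains the seed of this, but two statements need repair. First, $\frac1m\sum_{i\in\M_t}h_i\neq s_t-\bs_t$ under heterogeneity: since $h_i=s_{t,i}-\bs_{t,i}$ is centered at machine $i$'s own mean, $s_t-\bs_t$ also contains $\frac1m\sum_{i\in\M_t}\bs_{t,i}-\bs_t$, which is exactly the heterogeneity part and is where the $\frac{M-m}{M-1}$ factor and the $\txi=\xi+2\xil D$ grouping come from; it must be bounded by the indicator computation from the proof of \Cref{lem:AVGBound} applied to $\bs_{t,i}-\bs_t$, not folded into the $h_i$'s. Second, $\normsq{h_i}$ cannot be bounded ``pointwise'' by $\sigma$- and $\sigmal$-terms, because \Cref{def:BoundVar} and \Cref{def:BoundSmoothVar} are in-expectation bounds; the square-root-of-second-moments inequality above is the correct tool, and only $\alpha_{t-1}\norm{x_t-x_{t-1}}\le 2D$ is used as a deterministic bound. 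With these two corrections your plan coincides with the paper's proof.
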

Now, we will define the error of of our weighted corrected momentum estimate $\eps_t:=q_t-\alpha_t\df{x_t}$.
Note that the update rule for $\eps_t$ is (proof in \Cref{proof:espUp}):
\al\label{eq:espUp}
\eps_t=\eps_{t-1}+(s_t-\bs_t)
\eal
And the above implies $\eps_t=\sum_{\tau=1}^t(s_\tau-\bs_\tau)$.
The above enable to bound the error $\eps_t$ (proof in \Cref{proof:bound_eps}):
\begin{lemma}[\cite{dp_mu2_fl}]\label{lem:bound_eps}
\Cref{alg:alg} with weights $\alpha_t=t$ ensures:
\als
\Exp{\normsq{\eps_t}}:=\Exp{\normsq{q_t-\alpha_t\df{x_t}}}\leq\frac{t}{m}\left(\tsigma^2+\frac{M-m}{M-1}\txi^2\right)
\eals
\end{lemma}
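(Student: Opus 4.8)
The plan is to exploit the telescoping identity $\eps_t=\sum_{\tau=1}^t(s_\tau-\bs_\tau)$ established in \Cref{eq:espUp} and show that the summands form a martingale difference sequence, so that the squared norm of the sum decomposes into a sum of squared norms with no cross terms. First I would verify that $\condExp{s_\tau-\bs_\tau}{\tau-1}=0$: by the definitions in \Cref{sec:PropertiesMU2SGD} we have $\condExp{s_{\tau,i}}{\tau-1}=\alpha_\tau\bg_{\tau-1,i}$-type expressions averaging to $\bs_\tau$, and since $\M_\tau$ is chosen uniformly and independently of the past (and of which sample each machine draws), $\condExp{s_\tau}{\tau-1}=\bs_\tau$. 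Hence each increment $s_\tau-\bs_\tau$ has zero conditional mean given the $\sigma$-algebra $\Fcal_{\tau-1}$ of all randomness up to time $\tau-1$.

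Next I would expand $\normsq{\eps_t}=\sum_{\tau}\normsq{s_\tau-\bs_\tau}+2\sum_{\sigma<\tau}\dotprod{s_\sigma-\bs_\sigma}{s_\tau-\bs_\tau}$ and take expectations. For each cross term with $\sigma<\tau$, conditioning on $\Fcal_{\tau-1}$ and using the tower property kills it, since $s_\sigma-\bs_\sigma$ is $\Fcal_{\tau-1}$-measurable while $\condExp{s_\tau-\bs_\tau}{\tau-1}=0$. This leaves $\Exp{\normsq{\eps_t}}=\sum_{\tau=1}^t\Exp{\normsq{s_\tau-\bs_\tau}}$. Finally I would invoke the in-expectation bound of \Cref{lem:bound_s}, namely $\Exp{\normsq{s_\tau-\bs_\tau}}\leq\frac{1}{m}\left(\tsigma^2+\frac{M-m}{M-1}\txi^2\right)$, which holds uniformly in $\tau$, so summing over $\tau=1,\dots,t$ yields exactly $\frac{t}{m}\left(\tsigma^2+\frac{M-m}{M-1}\txi^2\right)$.

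The only subtle point — the part I would be most careful about — is confirming the unbiasedness $\condExp{s_\tau-\bs_\tau}{\tau-1}=0$ rigorously in the partial-participation setting, since $s_\tau$ now involves both the random subset $\M_\tau$ and the fresh samples $z_{\tau,i}$, and one must check that $x_\tau$ (which enters $g_{\tau,i}$) is $\Fcal_{\tau-1}$-measurable while $z_{\tau,i}$ is drawn fresh and independently. Given that $x_\tau$ is determined by $\tq_{\tau-1}$ and earlier iterates (all in $\Fcal_{\tau-1}$), and $g_{\tau,i}=\dfi{x_\tau;z_{\tau,i}}$ with $z_{\tau,i}$ independent of $\Fcal_{\tau-1}$, unbiasedness of the one-sample gradient oracle gives $\condExp{g_{\tau,i}}{\tau-1}=\bg_{\tau,i}$, and the uniform choice of $\M_\tau$ transfers this to $\condExp{s_\tau}{\tau-1}=\bs_\tau$. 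Once this is nailed down, everything else is the routine martingale-orthogonality argument plus a direct application of \Cref{lem:bound_s}; no further estimates are needed.
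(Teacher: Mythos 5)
Your proposal is correct and follows essentially the same route as the paper: the paper also writes $\eps_t$ as the martingale with difference sequence $s_\tau-\bs_\tau$, invokes the martingale orthogonality identity (\Cref{lem:martingale}) to reduce $\Exp{\normsq{\eps_t}}$ to $\sum_{\tau=1}^t\Exp{\normsq{s_\tau-\bs_\tau}}$, and then applies the per-step bound of \Cref{lem:bound_s}. Your explicit verification of the conditional unbiasedness of $s_\tau$ under the random subset $\M_\tau$ and fresh samples is a careful spelling-out of what the paper asserts implicitly, not a different argument.
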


\section{\alg with Partial-Participation}
\label{sec:alg}
Our complete algorithm depicted in \Cref{alg:alg}, is based on the noise cancellation idea we describe in \Cref{eq:NoiseCancel1,eq:NoiseCancel2}, combined with the $\mu^2$-SGD approach described in \Cref{eq:MU2SGD}.
At each time-step $t$, we select a subset of $m$ machines $\M_t$.
Each machine $i\in\M_t$ computes the gradient estimate correction $s_{t,i}$, and then adds a fresh noise term $y_{t,i}$ and removes the previous noise $Y_i:=Y_{t-1,i}$ (omitting the time-step index for practical implementation) to get a private correction $\ts_{t,i}$.
It is transmitted to the $\PS$, which averages these corrections, updates its weighted noisy gradient estimate $\tq_t$, and in turn uses $\tq_t$ to compute the next iterate $w_{t+1}$ and query point $x_{t+1}$.

To balance privacy with performance, we found it necessary to set each machine’s noise injection proportionally to the number of time-steps it participated in up until now, because a machine that participates more rounds may leak more private data, which requires more noise to maintain its privacy.
Thus, we set $\sigma_{t,i}^2\propto N_{t,i}$, where $N_{t,i}$ is the number of time-steps machine $i$ participated up to time step $t$.

\begin{algorithm}[t]
\begin{algorithmic}
\caption{\alg with Partial-Participation}\label{alg:alg}
\STATE \textbf{Inputs:} \#iterations $T$, \#TotalMachines $M$, \#Machines per step $m$, initial point $x_0$, learning rate $\eta>0$, importance weights $\{\alpha_t>0\}$, noise distributions $\left\{P_{t,i}=\Ncal(0,I\sigma_{t,i}^2)\right\}$, datasets $\{\S_i=\{z_{1,i},\ldots,z_{T,i}\}\}$
\STATE \textbf{Initialize:} set $w_1=x_1=x_0$, $\tq_0=0$, and $Y_i=0,~\forall i$ 
\FOR{$t=1,\ldots,T$}
\STATE Choose $\M_t\subseteq M$ with $|\M_t|=m$
\FOR{every Machine~$i\in\M_t$}
\STATE \textbf{Actions of Machine $i$:}
\STATE Retrieve $z_{t,i}$ from $\S_i$, compute $g_{t,i}=\df{x_t;z_{t,i}}$, and $\tg_{t-1,i}=\df{x_{t-1};z_{t,i}}$
\STATE Update $s_{t,i}=\alpha_tg_{t,i}-\alpha_{t-1}\tg_{t-1,i}$
\STATE Draw $y_{t,i}\sim\Ncal\left(0,I\sigma_{t,i}^2\right)$
\STATE Update $\ts_{t,i}=s_{t,i}+y_{t,i}-Y_i$
\STATE Update $Y_i=y_{t,i}$
\COMMENT{saving the last noise}
\ENDFOR
\STATE \textbf{Actions of Server:}
\STATE Aggregate $\ts_t=\frac{1}{m}\sum_{i\in\M_t}\ts_{t,i}$
\STATE Update $\tq_t=\tq_{t-1}+\ts_t$
\STATE Update $w_{t+1}=\Pi_\K(w_t-\eta\tq_t)$
\STATE Update $x_{t+1}=\left(1-\frac{\alpha_{t+1}}{\alpha_{1:t+1}}\right)x_t+\frac{\alpha_{t+1}}{\alpha_{1:t+1}}w_{t+1}$
\ENDFOR
\STATE \textbf{Output:} $x_T$
\end{algorithmic}
\end{algorithm}

\subsection{Privacy Guarantees}
We shows how the privacy of \Cref{alg:alg} depends on the injected noises $\{y_{t,i}\}$ (proof in \Cref{proof:privacy}):
\begin{theorem}\label{thm:privacy} 
Let $\K\subset\reals^d$ be a convex set of diameter $D$, and $\{\fii{\cdot;z}\}_{z\in\Z_i}$ be a family of convex $G$-Lipschitz and $L$-smooth functions, and $S=G+2LD$.
Then invoking \Cref{alg:alg} with noise distributions $y_{t,i}\sim P_{t,i}=\Ncal\left(0,I\sigma_{t,i}^2\right)$, and any learning rate $\eta>0$, ensures that for any machine $i\in[M]$, that acts at time-steps $\T_i$, the resulting sequences $\{\ts_{t,i}\}_{t\in\T_i}$ is $\frac{\rho_i^2}{2}$-zCDP, where:
$\rho_i=2S\sqrt{\sum_{t\in\T_i}\frac{1}{\sigma_{t,i}^2}}$.
\end{theorem}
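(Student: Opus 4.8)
The plan is to recast the correlated‑noise transcript $\{\ts_{t,i}\}_{t\in\T_i}$ as a bijective image of an adaptive composition of ordinary Gaussian mechanisms, and then add up the per‑round zCDP costs. Fix a machine $i$ and write $\T_i=\{t_1<\dots<t_k\}$. By \Cref{eq:NoiseCancel1,eq:NoiseCancel2} we have $\ts_{t_j,i}=s_{t_j,i}+y_{t_j,i}-y_{t_{j-1},i}$ with the convention $y_{t_0,i}:=0$, so consecutive messages share a noise term and are \emph{not} conditionally independent Gaussians — this is the main obstacle. To remove it, pass to the partial sums $\hat q_{t_j,i}:=\sum_{l=1}^{j}\ts_{t_l,i}$. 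The correlated noise telescopes, yielding $\hat q_{t_j,i}=Q_j+y_{t_j,i}$ where $Q_j:=\sum_{l=1}^{j}s_{t_l,i}$ and the $y_{t_j,i}\sim\Ncal(0,I\sigma_{t_j,i}^2)$ are fresh and independent. The map $(\ts_{t_1,i},\dots,\ts_{t_k,i})\mapsto(\hat q_{t_1,i},\dots,\hat q_{t_k,i})$ is an invertible linear map (its inverse is successive differencing), and invertible transformations preserve the R\'enyi divergence (data‑processing in both directions), so the claim reduces to bounding the privacy of $(\hat q_{t_1,i},\dots,\hat q_{t_k,i})$.

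Next I would view $(\hat q_{t_1,i},\dots,\hat q_{t_k,i})$ as the output of an adaptive composition of $k$ Gaussian mechanisms, the $j$‑th releasing $Q_j+y_{t_j,i}$. The key observation is that after conditioning on the earlier releases $\hat q_{t_1,i},\dots,\hat q_{t_{j-1},i}$ and on the randomness of the server and of the other machines (all independent of $\S_i$), the query points $x_{t_j}$ and $x_{t_j-1}$ that machine $i$ uses at round $t_j$ are fixed and independent of $\S_i$: the server computes them from $\tq_{t_j-1}$, which involves only machine $i$'s earlier messages $\ts_{t_1,i},\dots,\ts_{t_{j-1},i}$ (recoverable from the conditioning) together with the other machines' contributions. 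Hence, with the history fixed, $Q_j$ is a deterministic function of the samples $z_{t_1,i},\dots,z_{t_j,i}$ alone; note also that $\eta$ never enters, since it only rescales the server's post‑processing updates of $w,x,\tq$.

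It then remains to account the sensitivity. Consider neighboring datasets $\S_i,\S_i'$ differing in the sample used at a single round $t^*$. If $t^*\notin\T_i$ the sample is unused and the two transcripts coincide; otherwise $t^*=t_{j^*}$, and for fixed history exactly one summand of $Q_j$ changes (the others use identical samples and identical, history‑fixed query points), so by \Cref{lem:bound_s} its change has norm at most $\norm{s_{t_{j^*},i}}+\norm{s'_{t_{j^*},i}}\le 2S$. Thus $Q_j$ has $\ell_2$‑sensitivity $\le 2S$ for $j\ge j^*$ and $0$ for $j<j^*$, and by \Cref{lem:div_gauss} the $j$‑th Gaussian release is $\frac{(2S)^2}{2\sigma_{t_j,i}^2}=\frac{2S^2}{\sigma_{t_j,i}^2}$‑zCDP. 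Summing the per‑round costs via the adaptive composition property of zCDP — legitimate since the environment producing the query points is independent of $\S_i$ — the sequence $(\hat q_{t_j,i})_j$, and therefore the transcript $\{\ts_{t,i}\}_{t\in\T_i}$, is $\sum_{j\ge j^*}\frac{2S^2}{\sigma_{t_j,i}^2}$‑zCDP, hence at most $2S^2\sum_{t\in\T_i}\frac{1}{\sigma_{t,i}^2}$‑zCDP. Identifying this with $\frac{\rho_i^2}{2}$ gives $\rho_i=2S\sqrt{\sum_{t\in\T_i}\frac{1}{\sigma_{t,i}^2}}$.

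The two delicate points are exactly the ones flagged: converting the round‑to‑round correlated noise into independent Gaussians, which the partial‑sum reparametrization accomplishes (it is the Cholesky whitening of the tridiagonal noise covariance), and justifying that the adaptivity of the interaction — machine $i$ replying to query points shaped by its own past noisy messages and by the other machines — is harmless, which follows from the adaptive composition theorem for zero‑concentrated DP once we verify that everything machine $i$ conditions on is independent of its own dataset $\S_i$.
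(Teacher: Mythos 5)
Your proposal is correct and follows essentially the same route as the paper's proof: both pass from the transcript $\{\ts_{t,i}\}_{t\in\T_i}$ to its partial sums via an invertible map (post-processing, \Cref{lem:post_proc}), exploit the telescoping of the correlated noise so each partial sum is $\sum_{\tau\le t,\tau\in\T_i}s_{\tau,i}+y_{t,i}$, bound the per-round sensitivity by $2S$ using \Cref{lem:bound_s}, apply the Gaussian mechanism (\Cref{lem:div_gauss}), and sum via adaptive composition (\Cref{lem:rdp_copm}). Your explicit conditioning argument for why the query points are fixed given the history, and your observation that only rounds $t\ge\tau^*$ contribute, are the same points the paper handles in its setup and final summation step.
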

\begin{proof}[Proof Sketch]
First, assume that $\S_i$ and $\S_i'$ are neighboring datasets, meaning that there exists only a single time-step $\tau^*\in\T_i$ where they differ, i.e.~that $z_{\tau^*,i}\neq z_{\tau^*,i}'$.

We then use the post-processing property (\Cref{lem:post_proc}) to say that the privacy of $\left\{\ts_{t,i}\right\}_{t\in\T_i}$ is equal to the privacy of $\left\{\sum_{\tau\leq t\&\tau\in\T_i}\ts_{\tau,i}\right\}_{t\in\T_i}$.
Using the composition rule (\Cref{lem:rdp_copm}), we bound the privacy of them with the sum of the privacy of each individual member.
Now, recalling that the noises cancel each other and thus:
\als
\sum_{\tau\leq t\&\tau\in\T_i}\ts_{\tau,i}=\sum_{\tau\leq t\&\tau\in\T_i}s_{\tau,i}+y_{t,i}, \quad \forall t\in\T_i
\eals
We may use~\Cref{lem:div_gauss} and obtain that $\sum_{\tau\leq t\&\tau\in\T_i}\ts_{\tau,i}$ is $\frac{\Delta_{t,i}^2}{2\sigma_{t,i}^2}$-zCDP.
Using the bound $\norm{s_{\tau,i}}\leq S$, which holds for any $\tau\in\T_i$ due to~\Cref{lem:bound_s}, we show that $\Delta_{t,i}\leq2S$.
Thus, we are $\frac{2S^2}{\sigma_{t,i}^2}$-zCDP.
Using the above together, we get that $\left\{\ts_{t,i}\right\}_{t\in\T_i}$ is $2S^2\sum_{t\in\T_i}\frac{1}{\sigma_{t,i}^2}$-zCDP.
\end{proof}
 
\subsection{Convergence Guarantees}
Guarantees of \Cref{alg:alg} (proof in \Cref{proof:converge}):
\begin{theorem}\label{thm:converge}
Let $\K\subset\reals^d$ be a convex set of diameter $D$ and $\{f_i(\cdot;z)\}_{i\in[M],z\in\Z_i}$ be a family of $G$-Lipschitz and $L$-smooth functions over $\K$, with $\sigma,\xi\in[0,G], \sigmal,\xil\in[0,L]$, and let $\{\M_t\}_t$ be subsets of $[M]$ of size $m$, define $G^*:=\df{x^*}$, where $x^*=\arg\min_{x\in\K}\f{x}$, and $S:=G+2LD, \tsigma:=\sigma+2\sigmal D, \txi:=\xi+2\xil D$, moreover let $T\in\mathbb{N}, \rho>0$.

Then upon invoking \Cref{alg:alg} with $\alpha_t=t$, $\eta=\Min{\frac{\rho Dm}{2ST\sqrt{2Md\left(1+\log T\right)}},\frac{1}{8LT}}$, and $\sigma_{t,i}^2=\frac{4S^2\left(1+\log T\right)}{\rho^2}N_{t,i}$, with $N_{t,i}$ being the number of time steps that machine $i$ participated up to time step $t$, and for any datasets $\{\S_i\in\Z_i^T\}_{i\in[M]}$, then \Cref{alg:alg} satisfies $\frac{\rho^2}{2}$-zCDP w.r.t~gradient estimate correction sequences that each machine produces, i.e.~$\{\ts_{t,i}\}_{i\in[M],t\in\T_i}$.

Furthermore, if $\S_i$ consists of i.i.d.~samples from a distribution $\D_i$ for all $i\in[M]$, and $\M_t$ are also chosen uniformly in an i.i.d manner, then \Cref{alg:alg} guarantees:
\als
&\R_T:=\ExpB{\f{x_T}}-\min_{x\in\K}\f{x}\leq \\
4D&\left(\frac{G^*+4LD}{T}+\frac{\sqrt{\tsigma^2+\txi^2}}{\sqrt{mT}}+\frac{2S\sqrt{2Md\left(1+\log T\right)}}{\rho mT}\right)
\eals
\end{theorem}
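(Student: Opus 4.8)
The plan is to read the privacy claim straight off \Cref{thm:privacy}, and to prove the excess‑loss bound by feeding the anytime online‑to‑batch conversion with a three‑way decomposition of the resulting regret, whose delicate piece is the control of the temporally‑correlated cancellation noise.

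\textbf{Privacy.} Substituting $\sigma_{t,i}^2=\tfrac{4S^2(1+\log T)}{\rho^2}N_{t,i}$ into $\rho_i=2S\big(\sum_{t\in\T_i}\sigma_{t,i}^{-2}\big)^{1/2}$ from \Cref{thm:privacy} gives $\rho_i=\tfrac{\rho}{\sqrt{1+\log T}}\big(\sum_{t\in\T_i}N_{t,i}^{-1}\big)^{1/2}$. As $t$ runs over the participation rounds $\T_i$ of machine $i$, the counter $N_{t,i}$ takes exactly the values $1,2,\dots,|\T_i|$, so the inner sum is the harmonic number $\sum_{k=1}^{|\T_i|}1/k\le 1+\log|\T_i|\le 1+\log T$; hence $\rho_i\le\rho$, and each $\{\ts_{t,i}\}_{t\in\T_i}$ is $\tfrac{\rho^2}{2}$‑zCDP (no composition across machines is needed, since DP is required per machine). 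This is exactly why the per‑round variance is scaled by $N_{t,i}$: it makes a machine's privacy budget grow only logarithmically in the number of rounds it participates in.

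\textbf{Convergence: decomposition.} First I would invoke the anytime online‑to‑batch inequality \cite{anytime,mu_square}: since $x_T$ is the $\{\alpha_t\}$‑weighted average of $w_1,\dots,w_T$ (\Cref{eq:MU2SGD}) and $f$ is convex, $\alpha_{1:T}\R_T\le\ExpB{\sum_{t=1}^T\alpha_t\dotprod{\df{x_t}}{w_t-x^*}}$. Writing $\alpha_t\df{x_t}=q_t-\eps_t$ (recall $\eps_t=q_t-\alpha_t\df{x_t}$, \Cref{lem:bound_eps}) and then $q_t=\tq_t-Y_t$ with $Y_t:=\tfrac1m\sum_{i=1}^MY_{t,i}$ (\Cref{lem:qNoise}), this becomes
\[ \alpha_{1:T}\R_T\;\le\;\underbrace{\ExpB{\sum_t\dotprod{\tq_t}{w_t-x^*}}}_{(\mathrm I)}\;-\;\underbrace{\ExpB{\sum_t\dotprod{\eps_t}{w_t-x^*}}}_{(\mathrm{II})}\;-\;\underbrace{\ExpB{\sum_t\dotprod{Y_t}{w_t-x^*}}}_{(\mathrm{III})}, \]
where $(\mathrm I)$ is the genuine online regret of the updates $w_{t+1}=\proj{w_t-\eta\tq_t}$, $(\mathrm{II})$ is the STORM variance‑reduction error, and $(\mathrm{III})$ is the net effect of the injected noise.

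\textbf{Terms (I) and (II).} These go through as in the analysis of $\mu^2$‑SGD and its private variant \cite{mu_square,dp_mu2_fl}. For $(\mathrm I)$, the projected‑descent inequality gives $(\mathrm I)\le\tfrac{D^2}{2\eta}+\tfrac\eta2\sum_t\ExpB{\normsq{\tq_t}}$, and the Anytime‑GD mechanism is that $L$‑smoothness together with $\alpha_t=t$ makes the part of the stability sum aligned with the true trajectory $\{\alpha_t\df{x_t}\}$ telescope against the descent of $\alpha_{1:t}f(x_t)$; what remains is $\tfrac{D^2}{2\eta}$, terms of order $TD(LD+G^*)$ (using $\norm{\df{x_t}}\le\norm{\df{x^*}}+LD$), and a noise‑only residue of order $\eta\sum_t\big(\ExpB{\normsq{\eps_t}}+\ExpB{\normsq{Y_t}}\big)$. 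With $\eta\le\tfrac1{8LT}$ the first pieces contribute $O\!\big(\tfrac{(G^*+LD)D}{T}\big)$ after dividing by $\alpha_{1:T}=\Theta(T^2)$. For $(\mathrm{II})$, Cauchy--Schwarz, Jensen and \Cref{lem:bound_eps} give $|(\mathrm{II})|\le D\sum_t\sqrt{\ExpB{\normsq{\eps_t}}}\le D\sum_t\sqrt{\tfrac tm(\tsigma^2+\txi^2)}$, i.e.\ $O\!\big(\tfrac{D\sqrt{\tsigma^2+\txi^2}}{\sqrt{mT}}\big)$ after normalization — the optimal statistical term — which also absorbs the $\eta\sum\ExpB{\normsq{\eps_t}}$ residue.

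\textbf{Term (III) — the main obstacle.} This is where partial participation bites: by the cancellation rule \Cref{eq:NoiseCancel2}, a machine that skips round $t$ retains its previous noise, so $\{Y_t\}_t$ is correlated across rounds and, unlike the full‑participation case of \cite{dp_mu2_fl}, $Y_t$ is not independent of $w_t$; hence $\ExpB{(\mathrm{III})}\neq0$ and the term cannot merely be dropped. The plan is to control the noise through its second moment: since $Y_{t,i}$ equals the last noise $y_{\tau,i}$ drawn by machine $i$ up to round $t$, for which $N_{\tau,i}=N_{t,i}$, independence across machines gives $\ExpB{\normsq{Y_t}}=\tfrac1{m^2}\sum_{i=1}^M\ExpB{\normsq{Y_{t,i}}}\le\tfrac{4dS^2(1+\log T)}{\rho^2m^2}\sum_{i=1}^MN_{t,i}=\tfrac{4dS^2(1+\log T)\,t}{\rho^2m}$, where the identity $\sum_{i=1}^MN_{t,i}=mt$ (exactly $m$ machines act per round) is crucial. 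Feeding this estimate into the noise‑dependent terms — whether one bounds $(\mathrm{III})$ by a summation‑by‑parts through the partial sums $\sum_{\tau\le t}Y_\tau$, or keeps $Y_t$ inside $\tq_t$ and charges it to the residue $\eta\sum_t\ExpB{\normsq{Y_t}}$ — yields a contribution of order $\eta\,T^3 dS^2(1+\log T)/(\rho^2m)$, and the step size $\eta\le\tfrac{\rho Dm}{2ST\sqrt{2Md(1+\log T)}}$ is precisely the one balancing this against $\tfrac{D^2}{2\eta}$, producing $O\!\big(\tfrac{DS\sqrt{Md(1+\log T)}}{\rho mT}\big)$ after normalization — the privacy term, matching the lower bound of \cite{private_fed}. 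Collecting $(\mathrm I)$--$(\mathrm{III})$, taking $\eta$ as the minimum of the two candidate values so that both the $LD^2/T$ and $S\sqrt{Md}/(\rho mT)$ terms surface, and simplifying the constants via $\alpha_{1:T}=\tfrac{T(T+1)}2$ and $\sum_{t\le T}\sqrt t\le\tfrac23(T+1)^{3/2}$ gives the stated bound. I expect the bulk of the effort to lie in making rigorous the ``noise‑only residue'' bookkeeping inside $(\mathrm I)$ and $(\mathrm{III})$ — i.e.\ showing that the temporal correlation of $\{Y_t\}$ costs nothing beyond the second‑moment estimate above — together with the smoothness‑based telescoping that kills the stability sum in $(\mathrm I)$.
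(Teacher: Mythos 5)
Your privacy argument and the overall skeleton (anytime conversion, splitting $\alpha_t\df{x_t}=\tq_t-\eps_t-Y_t$, treating the STORM error via \Cref{lem:bound_eps}) match the paper. The genuine gap is in the piece you yourself flag as the main obstacle, term $(\mathrm{III})$: you never supply the mechanism that controls $\sum_t\ExpB{\dotprod{Y_t}{w_t-x^*}}$, and the quantitative claims you make about it are wrong. First, ``charging $Y_t$ to the residue $\eta\sum_t\ExpB{\normsq{Y_t}}$'' is only legitimate when the noise is conditionally mean-zero given the iterate it multiplies; here $Y_t$ contains stale noises $y_{s,i}$ with $s<t$ that have already entered $w_t$, so the cross term has a genuine bias. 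In fact your assertion that ``the temporal correlation costs nothing beyond the second-moment estimate'' cannot be true: $\eta\sum_t\ExpB{\normsq{Y_t}}\approx\eta\,\frac{2dS^2(1+\log T)T^2}{\rho^2 m}$, and balancing that against $D^2/\eta$ would give a privacy term of order $\frac{DS\sqrt{d(1+\log T)}}{\rho\sqrt{m}\,T}$, which is smaller than the stated bound by a factor $\sqrt{m/M}$ and would contradict the $\Om{\sqrt{Md}/(\epsilon n)}$ lower bound of \cite{private_fed}. The correlation costs exactly an extra factor $1/p=M/m$, and that factor is where the $\sqrt{M}$ in the theorem comes from. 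Second, the order you actually write, $\eta T^3 dS^2(1+\log T)/(\rho^2 m)$, matches neither the naive estimate ($\propto\eta T^2/m$) nor the correct one ($\propto\eta T^2 M/m^2$); plugged in with the prescribed $\eta$ it yields a non-decaying $O\bigl(DS\sqrt{d(1+\log T)}/(\rho\sqrt{M})\bigr)$ term rather than the claimed $O\bigl(DS\sqrt{Md(1+\log T)}/(\rho mT)\bigr)$, except in the extreme case $M/m=T$. So your bookkeeping is internally inconsistent and does not deliver the theorem.

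The missing argument is the one the paper uses: write $Y_{\tau,i}=y_{s,i}$ where $s$ is the last participation time of machine $i$, distributed geometrically backwards with parameter $p=m/M$; use that $y_{s,i}$ is mean-zero conditionally on the history up to $s$ to replace $x^*$ by $w_s$ at no cost; expand $w_s-w_{\tau+1}=\sum_{r=s}^{\tau}(w_r-w_{r+1})$; swap the order of summation and bound the resulting geometric sums (one cancels the prefactor $p$, one contributes the $\frac{1}{p(2-p)}$ factor on the noise variances); then apply Young's inequality so that the $\normsq{w_r-w_{r+1}}$ charges are absorbed by the negative $-\frac{1}{2\eta}\sum_\tau\normsq{w_\tau-w_{\tau+1}}$ slack from \Cref{lem:OGD+_Guarantees}, leaving only $\frac{\eta}{m^2p(2-p)}\sum_{i,s}\ExpB{\normsq{y_{s,i}}}$ with $\ExpB{N_{s,i}}=1+p(s-1)$. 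Relatedly, your bound $(\mathrm I)\le\frac{D^2}{2\eta}+\frac{\eta}{2}\sum_t\ExpB{\normsq{\tq_t}}$ is too crude as stated, since $\normsq{\tq_t}$ contains $\alpha_t^2\normsq{\df{x_t}}\sim t^2G^2$ and would leave a constant term; the paper instead keeps the pairing with $w_{\tau+1}$, isolates $\df{x^*}$ by Abel summation (term $\rA$), and turns $\normsq{\df{x_\tau}-\df{x^*}}$ into a recursion in the excess losses via \Cref{lem:smooth} and \Cref{lem:sum}, which also needs part of the $-\frac{1}{2\eta}\normsq{w_\tau-w_{\tau+1}}$ slack — the same slack your term $(\mathrm{III})$ must share. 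Until these steps are carried out, the proposal does not establish the stated rate.
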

\begin{proof}[Proof Sketch]
The privacy guarantees follow directly from \Cref{thm:privacy}, and our choice of $\sigma_{t,i}^2$:
\als
&2S^2\sum_{t\in\T_i}\frac{1}{\sigma_{t,i}^2}=2S^2\sum_{t\in\T_i}\frac{\rho^2}{4S^2\left(1+\log T\right)N_{t,i}} \\
=&\frac{\rho^2}{2\left(1+\log T\right)}\sum_{t\in\T_i}\frac{1}{N_{t,i}}=\frac{\rho^2}{2\left(1+\log T\right)}\sum_{k=1}^{|\T_i|}\frac{1}{k} \\
\leq&\frac{\rho^2}{2\left(1+\log T\right)}\left(1+\log|\T_i|\right)\leq\frac{\rho^2}{2}
\eals
Regrading convergence, in the spirit of $\mu^2$-SGD analysis \cite{mu_square,dp_mu2_fl}, we bound the excess loss using the anytime theorem (\Cref{thm:anytime}), rewrite the expression to get to the form of \Cref{lem:OGD+_Guarantees} and use it to bound, and separate the terms of $\eps_t$ and $Y_t$.
We already bounded $\eps_t$ in \Cref{lem:bound_eps}, though we bound $\frac{M-m}{M-1}\leq1$, but the bound on the parts of $Y_t$ is harder, since the sequence $\{Y_t\}_{t\in T}$ is not independent of each other, as it was in \cite{dp_mu2_fl}, since in each round different machine participate.
We get:
\als
\sum_{\tau=1}^t\Exp{\dotprod{Y_\tau}{x^*-w_{\tau+1}}}\leq\frac{1}{4\eta}&\sum_{r=1}^t\Exp{\normsq{w_r-w_{r+1}}} \\
+\frac{\eta}{m^2p(2-p)}&\sum_{i=1}^M\sum_{s=1}^t\Exp{\normsq{y_{s,i}}}
\eals
Where $p=\frac{m}{M}$ is the probability that machine $i$ participate at time-step $t$ for all machines and time-steps.
We then use our chosen value for $\sigma_{s,i}^2$ to bound $y_{s,i}$.
We then input all the bounds, bound the gradient using the excess loss with \Cref{lem:smooth}, to get a bound of the excess loss using the previous excess losses.
Using \Cref{lem:sum} we get the final bound on the excess loss.
By inputting our chosen $\eta$, that minimize this expression, we get our bound.
\end{proof}
Since \Cref{alg:alg} uses a total of $n=mT$ samples in the learning process, our rate translates to $\Oo{\frac{1}{\sqrt{n}}+\frac{\sqrt{Md}}{\epsilon n}}$ which matches the lower bound for the untrusted server case.
Moreover, our approach performs two gradient computations per sample, and its computational complexity is therefore linear in $n$, which matches standard methods.

\subsection{Experiments}
We ran \Cref{alg:alg} on MNIST using a logistic regression.
The parameters are $G=\sqrt{2\cdot785}=39.6, L=785/2=392.5, D=0.1$, which brings us $S=118.1$.
Our model has $d=10\cdot785=7850$ parameters.
We compared our algorithm (called "Our Work") to SGD with noise, inspired by \cite{dl_dp} (called "Noisy SGD"), and to the other work \cite{private_fed} (called "Other Work").
We kept the same parameter in all 3 algorithms to the best of our abilities, and in all tests the total data samples used across all machines is $n=60,000$.
Note that in Our Work and Noisy SGD, we only do a single-pass over the data, and each data sample is only used once, while in the Other Work the same samples are reused, as is done in their algorithm.
This fact and the noise added as part of privacy are part of the reason why the accuracy is bellow 70\% in all experiments.
We compare both the test accuracy and running time.

For our first experiment, we fix $m=50,M=100$, and compare various values of $\rho$.
We show our results in \Cref{tab:privacy}.

\begin{table}[t]
\small
\begin{center}
\caption{Privacy Level Comparison}
\begin{tabular}{c@{\hspace{0.3cm}}c@{\hspace{0.1cm}}r@{\hspace{0.3cm}}c@{\hspace{0.1cm}}r@{\hspace{0.3cm}}c@{\hspace{0.1cm}}r}
\toprule
 & \multicolumn{2}{c}{Our Work} & \multicolumn{2}{c}{Noisy SGD} & \multicolumn{2}{c}{Other Work} \\
$\rho$ & Accuracy & Time & Accuracy & Time & Accuracy & Time \\
\midrule
 4 & 53.8\% & 13 sec & 45.1\% & 9 sec & 47.6\% &  64 sec \\
\midrule
 8 & 63.7\% & 13 sec & 58.9\% & 9 sec & 63.3\% & 282 sec \\
\midrule
12 & 66.5\% & 13 sec & 63.7\% & 9 sec & 66.7\% & 730 sec \\
\bottomrule
\end{tabular}
    \label{tab:privacy}
    \end{center}
\end{table}

We can see that by increasing $\rho$, the accuracy of all algorithms increase.
That makes sense, since the higher the privacy level, the less noise we need.
The running time doesn't change in Our Work or Noisy SGD, because we make the same number of computations, just with different noise, but in the Other Work, they use more iterations with a higher privacy level, to get better results when allowed, so the running time drastically increases.
Comparing between the algorithms, the Other Work needs large privacy level to truly shine, overtaking Our Work in the highest privacy level, but getting results closer to Noisy SGD in the lowest, albeit with a much longer running time, with our algorithm being just slightly longer than Noisy SGD, with better accuracy.

In a second experiment, we fix $\rho=8,M=100$, and compare various values of $m$.
We show our results in \Cref{tab:machines}.

\begin{table}[t]
\small
\begin{center}
\caption{Participating Machines Comparison}
\begin{tabular}{c@{\hspace{0.3cm}}c@{\hspace{0.1cm}}r@{\hspace{0.3cm}}c@{\hspace{0.1cm}}r@{\hspace{0.3cm}}c@{\hspace{0.1cm}}r}
\toprule
 & \multicolumn{2}{c}{Our Work} & \multicolumn{2}{c}{Noisy SGD} & \multicolumn{2}{c}{Other Work} \\
$m$ & Accuracy & Time & Accuracy & Time & Accuracy & Time \\
\midrule
20 & 60.8\% & 13 sec & 54.9\% & 9 sec & 59.7\% & 114 sec \\
\midrule
50 & 63.7\% & 13 sec & 58.9\% & 9 sec & 63.3\% & 282 sec \\
\midrule
80 & 63.8\% & 13 sec & 57.0\% & 9 sec & 65.8\% & 452 sec \\
\bottomrule
\end{tabular}
    \label{tab:machines}
    \end{center}
\end{table}

We can see that by increasing $m$, we increase the accuracy, except in Noisy SGD, where the higher value is worse than the middle one.
That is because in Our Work and the Other Work, the less partial the participation is, the less we lose, but it doesn't matter for Noisy SGD.
In Noisy SGD, $m$ can be seen as the batch size, which is inverse to the number of time-steps, which was probably too small in this case.
The change in accuracy is smaller than the one in \Cref{tab:privacy}, which implies that the privacy is more important, as can be seen in our bound in \Cref{thm:converge}.
The running time is unchanged in Our Work and Noisy SGD, since the total number of samples used is the same, but is increasing in the Other Work with $m$, since they make more iterations when $m$ is larger.
Comparing between the algorithms, the Other Work gets the best accuracy in the largest $m$, but does so with a much longer running time.
Our Work still has slightly longer running time than Noisy SGD with better accuracy.

In conclusion, like it was shown from the theoretical bounds, Our Work is as good as the Other Work in terms of accuracy, but with much faster running time, while getting only a slightly longer running time than Noisy SGD but with a better accuracy.

\section{Conclusion}
\label{sec:conc}

We enhanced \alg to operate in the partial-participation setting with an untrusted server by introducing an innovative noise cancellation technique that preserves privacy while minimizing overall noise.
We demonstrated that this approach achieves optimal excess loss with linear computational complexity.

\section*{Impact Statement}
This paper presents work whose goal is to advance the field of Machine Learning, and especially the aspect of Privacy.
There are many potential societal consequences of our work, none which we feel must be specifically highlighted here.

\section*{Acknowledgement}
This research was partially supported by Israel PBC- VATAT, by the Technion Artificial Intelligent Hub (Tech.AI), and by the Israel Science Foundation (grant No. 3109/24).

\bibliography{bib}

\begin{thebibliography}{44}
\providecommand{\natexlab}[1]{#1}
\providecommand{\url}[1]{\texttt{#1}}
\expandafter\ifx\csname urlstyle\endcsname\relax
  \providecommand{\doi}[1]{doi: #1}\else
  \providecommand{\doi}{doi: \begingroup \urlstyle{rm}\Url}\fi

\bibitem[Abadi et~al.(2016)Abadi, Chu, Goodfellow, McMahan, Mironov, Talwar, and Zhang]{dl_dp}
Abadi, M., Chu, A., Goodfellow, I.~J., McMahan, H.~B., Mironov, I., Talwar, K., and Zhang, L.
\newblock Deep learning with differential privacy.
\newblock In Weippl, E.~R., Katzenbeisser, S., Kruegel, C., Myers, A.~C., and Halevi, S. (eds.), \emph{Proceedings of {ACM} {SIGSAC} Conference on Computer and Communications Security}. {ACM}, 2016.

\bibitem[Avella{-}Medina et~al.(2021)Avella{-}Medina, Bradshaw, and Loh]{avella2023differentially}
Avella{-}Medina, M., Bradshaw, C., and Loh, P.
\newblock Differentially private inference via noisy optimization.
\newblock \emph{CoRR}, abs/2103.11003, 2021.

\bibitem[Bassily et~al.(2014)Bassily, Smith, and Thakurta]{tight_bound}
Bassily, R., Smith, A.~D., and Thakurta, A.
\newblock Differentially private empirical risk minimization: Efficient algorithms and tight error bounds.
\newblock In \emph{55th {IEEE} Annual Symposium on Foundations of Computer Science, {FOCS}}. {IEEE} Computer Society, 2014.

\bibitem[Bassily et~al.(2019)Bassily, Feldman, Talwar, and Thakurta]{opt_private}
Bassily, R., Feldman, V., Talwar, K., and Thakurta, A.~G.
\newblock Private stochastic convex optimization with optimal rates.
\newblock In Wallach, H.~M., Larochelle, H., Beygelzimer, A., d'Alch{\'{e}}{-}Buc, F., Fox, E.~B., and Garnett, R. (eds.), \emph{Advances in Neural Information Processing Systems 32: Annual Conference on Neural Information Processing Systems, NeurIPS}, 2019.

\bibitem[Bun \& Steinke(2016)Bun and Steinke]{zcdp}
Bun, M. and Steinke, T.
\newblock Concentrated differential privacy: Simplifications, extensions, and lower bounds.
\newblock In Hirt, M. and Smith, A.~D. (eds.), \emph{Theory of Cryptography - 14th International Conference, {TCC}, Proceedings, Part {I}}, Lecture Notes in Computer Science, 2016.

\bibitem[Chaudhuri et~al.(2011)Chaudhuri, Monteleoni, and Sarwate]{chaudhuri2011differentially}
Chaudhuri, K., Monteleoni, C., and Sarwate, A.~D.
\newblock Differentially private empirical risk minimization.
\newblock \emph{J. Mach. Learn. Res.}, 2011.

\bibitem[Cheu et~al.(2022)Cheu, Joseph, Mao, and Peng]{cheu2022shuffle}
Cheu, A., Joseph, M., Mao, J., and Peng, B.
\newblock Shuffle private stochastic convex optimization.
\newblock In \emph{The Tenth International Conference on Learning Representations, {ICLR}}, 2022.

\bibitem[Cutkosky(2019)]{anytime}
Cutkosky, A.
\newblock Anytime online-to-batch, optimism and acceleration.
\newblock In Chaudhuri, K. and Salakhutdinov, R. (eds.), \emph{Proceedings of the 36th International Conference on Machine Learning, {ICML}}, Proceedings of Machine Learning Research. {PMLR}, 2019.

\bibitem[Cutkosky \& Orabona(2019)Cutkosky and Orabona]{storm}
Cutkosky, A. and Orabona, F.
\newblock Momentum-based variance reduction in non-convex {SGD}.
\newblock In Wallach, H.~M., Larochelle, H., Beygelzimer, A., d'Alch{\'{e}}{-}Buc, F., Fox, E.~B., and Garnett, R. (eds.), \emph{Advances in Neural Information Processing Systems 32: Annual Conference on Neural Information Processing Systems, NeurIPS}, 2019.

\bibitem[Dekel et~al.(2012)Dekel, Gilad{-}Bachrach, Shamir, and Xiao]{dekel2012optimal}
Dekel, O., Gilad{-}Bachrach, R., Shamir, O., and Xiao, L.
\newblock Optimal distributed online prediction using mini-batches.
\newblock \emph{J. Mach. Learn. Res.}, 2012.

\bibitem[Duchi et~al.(2013)Duchi, Jordan, and Wainwright]{duchi2013local}
Duchi, J.~C., Jordan, M.~I., and Wainwright, M.~J.
\newblock Local privacy and statistical minimax rates.
\newblock In \emph{54th Annual {IEEE} Symposium on Foundations of Computer Science, {FOCS}}. {IEEE} Computer Society, 2013.

\bibitem[Dwork \& Roth(2014)Dwork and Roth]{dwork2014algorithmic}
Dwork, C. and Roth, A.
\newblock The algorithmic foundations of differential privacy.
\newblock \emph{Found. Trends Theor. Comput. Sci.}, 2014.

\bibitem[Dwork et~al.(2006{\natexlab{a}})Dwork, Kenthapadi, McSherry, Mironov, and Naor]{odo_dp}
Dwork, C., Kenthapadi, K., McSherry, F., Mironov, I., and Naor, M.
\newblock Our data, ourselves: Privacy via distributed noise generation.
\newblock In Vaudenay, S. (ed.), \emph{Advances in Cryptology - {EUROCRYPT}, 25th Annual International Conference on the Theory and Applications of Cryptographic Techniques}, Lecture Notes in Computer Science. Springer, 2006{\natexlab{a}}.

\bibitem[Dwork et~al.(2006{\natexlab{b}})Dwork, McSherry, Nissim, and Smith]{calib_dp}
Dwork, C., McSherry, F., Nissim, K., and Smith, A.~D.
\newblock Calibrating noise to sensitivity in private data analysis.
\newblock In Halevi, S. and Rabin, T. (eds.), \emph{Theory of Cryptography, Third Theory of Cryptography Conference, {TCC}}, Lecture Notes in Computer Science. Springer, 2006{\natexlab{b}}.

\bibitem[Feldman et~al.(2020)Feldman, Koren, and Talwar]{snowball}
Feldman, V., Koren, T., and Talwar, K.
\newblock Private stochastic convex optimization: optimal rates in linear time.
\newblock In Makarychev, K., Makarychev, Y., Tulsiani, M., Kamath, G., and Chuzhoy, J. (eds.), \emph{Proccedings of the 52nd Annual {ACM} {SIGACT} Symposium on Theory of Computing, {STOC}}. {ACM}, 2020.

\bibitem[Ganesh et~al.(2023)Ganesh, Haghifam, Steinke, and Thakurta]{ganesh2023faster}
Ganesh, A., Haghifam, M., Steinke, T., and Thakurta, A.
\newblock Faster differentially private convex optimization via second-order methods.
\newblock \emph{CoRR}, abs/2305.13209, 2023.

\bibitem[Gao et~al.(2024)Gao, Lowy, Zhou, and Wright]{improve_lowy}
Gao, C., Lowy, A., Zhou, X., and Wright, S.~J.
\newblock Private heterogeneous federated learning without a trusted server revisited: Error-optimal and communication-efficient algorithms for convex losses.
\newblock In \emph{Forty-first International Conference on Machine Learning, {ICML}}, 2024.

\bibitem[Girgis et~al.(2021)Girgis, Data, Diggavi, Kairouz, and Suresh]{girgis2021shuffled}
Girgis, A.~M., Data, D., Diggavi, S.~N., Kairouz, P., and Suresh, A.~T.
\newblock Shuffled model of federated learning: Privacy, accuracy and communication trade-offs.
\newblock \emph{{IEEE} J. Sel. Areas Inf. Theory}, 2021.

\bibitem[Hafeez et~al.(2021)Hafeez, Rehmani, and O'Shea]{DPNCT}
Hafeez, K., Rehmani, M.~H., and O'Shea, D.
\newblock {DPNCT:} {A} differential private noise cancellation scheme for load monitoring and billing for smart meters.
\newblock In \emph{{IEEE} International Conference on Communications Workshops, {ICC} Workshops}. {IEEE}, 2021.

\bibitem[Huang et~al.(2020)Huang, Hu, Guo, Chan{-}Tin, and Gong]{huang2019dp}
Huang, Z., Hu, R., Guo, Y., Chan{-}Tin, E., and Gong, Y.
\newblock {DP-ADMM:} admm-based distributed learning with differential privacy.
\newblock \emph{{IEEE} Trans. Inf. Forensics Secur.}, 2020.

\bibitem[Iyengar et~al.(2019)Iyengar, Near, Song, Thakkar, Thakurta, and Wang]{iyengar2019towards}
Iyengar, R., Near, J.~P., Song, D., Thakkar, O., Thakurta, A., and Wang, L.
\newblock Towards practical differentially private convex optimization.
\newblock In \emph{{IEEE} Symposium on Security and Privacy, {SP}}. {IEEE}, 2019.

\bibitem[Kairouz et~al.(2021{\natexlab{a}})Kairouz, McMahan, Song, Thakkar, Thakurta, and Xu]{kairouz2021practical}
Kairouz, P., McMahan, B., Song, S., Thakkar, O., Thakurta, A., and Xu, Z.
\newblock Practical and private (deep) learning without sampling or shuffling.
\newblock In Meila, M. and Zhang, T. (eds.), \emph{Proceedings of the 38th International Conference on Machine Learning, {ICML}}, Proceedings of Machine Learning Research. {PMLR}, 2021{\natexlab{a}}.

\bibitem[Kairouz et~al.(2021{\natexlab{b}})Kairouz, McMahan, Avent, Bellet, Bennis, Bhagoji, Bonawitz, Charles, Cormode, Cummings, D'Oliveira, Eichner, Rouayheb, Evans, Gardner, Garrett, Gasc{\'{o}}n, Ghazi, Gibbons, Gruteser, Harchaoui, He, He, Huo, Hutchinson, Hsu, Jaggi, Javidi, Joshi, Khodak, Kone{\v{c}}n{\'y}, Korolova, Koushanfar, Koyejo, Lepoint, Liu, Mittal, Mohri, Nock, {\"{O}}zg{\"{u}}r, Pagh, Qi, Ramage, Raskar, Raykova, Song, Song, Stich, Sun, Suresh, Tram{\`{e}}r, Vepakomma, Wang, Xiong, Xu, Yang, Yu, Yu, and Zhao]{kairouz2021advances}
Kairouz, P., McMahan, H.~B., Avent, B., Bellet, A., Bennis, M., Bhagoji, A.~N., Bonawitz, K.~A., Charles, Z., Cormode, G., Cummings, R., D'Oliveira, R. G.~L., Eichner, H., Rouayheb, S.~E., Evans, D., Gardner, J., Garrett, Z., Gasc{\'{o}}n, A., Ghazi, B., Gibbons, P.~B., Gruteser, M., Harchaoui, Z., He, C., He, L., Huo, Z., Hutchinson, B., Hsu, J., Jaggi, M., Javidi, T., Joshi, G., Khodak, M., Kone{\v{c}}n{\'y}, J., Korolova, A., Koushanfar, F., Koyejo, S., Lepoint, T., Liu, Y., Mittal, P., Mohri, M., Nock, R., {\"{O}}zg{\"{u}}r, A., Pagh, R., Qi, H., Ramage, D., Raskar, R., Raykova, M., Song, D., Song, W., Stich, S.~U., Sun, Z., Suresh, A.~T., Tram{\`{e}}r, F., Vepakomma, P., Wang, J., Xiong, L., Xu, Z., Yang, Q., Yu, F.~X., Yu, H., and Zhao, S.
\newblock Advances and open problems in federated learning.
\newblock \emph{Found. Trends Mach. Learn.}, 2021{\natexlab{b}}.

\bibitem[Kasiviswanathan et~al.(2011)Kasiviswanathan, Lee, Nissim, Raskhodnikova, and Smith]{what_private}
Kasiviswanathan, S.~P., Lee, H.~K., Nissim, K., Raskhodnikova, S., and Smith, A.~D.
\newblock What can we learn privately?
\newblock \emph{{SIAM} J. Comput.}, 2011.

\bibitem[Kavis et~al.(2019)Kavis, Levy, Bach, and Cevher]{kavis2019unixgrad}
Kavis, A., Levy, K.~Y., Bach, F.~R., and Cevher, V.
\newblock Unixgrad: {A} universal, adaptive algorithm with optimal guarantees for constrained optimization.
\newblock In Wallach, H.~M., Larochelle, H., Beygelzimer, A., d'Alch{\'{e}}{-}Buc, F., Fox, E.~B., and Garnett, R. (eds.), \emph{Advances in Neural Information Processing Systems 32: Annual Conference on Neural Information Processing Systems, NeurIPS}, 2019.

\bibitem[Kifer et~al.(2012)Kifer, Smith, and Thakurta]{kifer2012private}
Kifer, D., Smith, A.~D., and Thakurta, A.
\newblock Private convex optimization for empirical risk minimization with applications to high-dimensional regression.
\newblock In Mannor, S., Srebro, N., and Williamson, R.~C. (eds.), \emph{{COLT} 2012 - The 25th Annual Conference on Learning Theory}, {JMLR} Proceedings. JMLR.org, 2012.

\bibitem[Koloskova et~al.(2023)Koloskova, McKenna, Charles, Rush, and McMahan]{koloskova2023gradient}
Koloskova, A., McKenna, R., Charles, Z., Rush, J.~K., and McMahan, H.~B.
\newblock Gradient descent with linearly correlated noise: Theory and applications to differential privacy.
\newblock In \emph{Advances in Neural Information Processing Systems 36: Annual Conference on Neural Information Processing Systems, NeurIPS}, 2023.

\bibitem[Levy(2023)]{mu_square}
Levy, K.~Y.
\newblock {\(\mu\)}\({}^{\mbox{2}}\)-sgd: Stable stochastic optimization via a double momentum mechanism.
\newblock \emph{CoRR}, abs/2304.04172, 2023.

\bibitem[Lowy \& Razaviyayn(2023)Lowy and Razaviyayn]{private_fed}
Lowy, A. and Razaviyayn, M.
\newblock Private federated learning without a trusted server: Optimal algorithms for convex losses.
\newblock In \emph{The Eleventh International Conference on Learning Representations, {ICLR}}, 2023.

\bibitem[Lowy et~al.(2023)Lowy, Ghafelebashi, and Razaviyayn]{lowy2023private}
Lowy, A., Ghafelebashi, A., and Razaviyayn, M.
\newblock Private non-convex federated learning without a trusted server.
\newblock In Ruiz, F. J.~R., Dy, J.~G., and van~de Meent, J. (eds.), \emph{International Conference on Artificial Intelligence and Statistics}, Proceedings of Machine Learning Research. {PMLR}, 2023.

\bibitem[McMahan et~al.(2017)McMahan, Moore, Ramage, Hampson, and y~Arcas]{mcmahan2017communication}
McMahan, B., Moore, E., Ramage, D., Hampson, S., and y~Arcas, B.~A.
\newblock Communication-efficient learning of deep networks from decentralized data.
\newblock In Singh, A. and Zhu, X.~J. (eds.), \emph{Proceedings of the 20th International Conference on Artificial Intelligence and Statistics, {AISTATS}}, Proceedings of Machine Learning Research. {PMLR}, 2017.

\bibitem[Mironov(2017)]{rdp}
Mironov, I.
\newblock R{\'{e}}nyi differential privacy.
\newblock In \emph{30th {IEEE} Computer Security Foundations Symposium, {CSF}}. {IEEE} Computer Society, 2017.

\bibitem[Noble et~al.(2022)Noble, Bellet, and Dieuleveut]{fl_hetero}
Noble, M., Bellet, A., and Dieuleveut, A.
\newblock Differentially private federated learning on heterogeneous data.
\newblock In Camps{-}Valls, G., Ruiz, F. J.~R., and Valera, I. (eds.), \emph{International Conference on Artificial Intelligence and Statistics, {AISTATS}}, Proceedings of Machine Learning Research. {PMLR}, 2022.

\bibitem[R{\'{e}}nyi(1961)]{renyi}
R{\'{e}}nyi, A.
\newblock On measures of entropy and information.
\newblock In \emph{Proceedings of the fourth Berkeley symposium on mathematical statistics and probability}. The Regents of the University of California, 1961.

\bibitem[Reshef \& Levy(2024)Reshef and Levy]{dp_mu2_fl}
Reshef, R. and Levy, K.~Y.
\newblock Private and federated stochastic convex optimization: Efficient strategies for centralized systems.
\newblock In \emph{Forty-first International Conference on Machine Learning, {ICML}}, 2024.

\bibitem[Shalev{-}Shwartz et~al.(2009)Shalev{-}Shwartz, Shamir, Srebro, and Sridharan]{shalev2009stochastic}
Shalev{-}Shwartz, S., Shamir, O., Srebro, N., and Sridharan, K.
\newblock Stochastic convex optimization.
\newblock In \emph{{COLT} - The 22nd Conference on Learning Theory}, 2009.

\bibitem[Song et~al.(2013)Song, Chaudhuri, and Sarwate]{song2013stochastic}
Song, S., Chaudhuri, K., and Sarwate, A.~D.
\newblock Stochastic gradient descent with differentially private updates.
\newblock In \emph{{IEEE} Global Conference on Signal and Information Processing, GlobalSIP}. {IEEE}, 2013.

\bibitem[Talwar et~al.(2015)Talwar, Thakurta, and Zhang]{talwar2015nearly}
Talwar, K., Thakurta, A., and Zhang, L.
\newblock Nearly optimal private {LASSO}.
\newblock In Cortes, C., Lawrence, N.~D., Lee, D.~D., Sugiyama, M., and Garnett, R. (eds.), \emph{Advances in Neural Information Processing Systems 28: Annual Conference on Neural Information Processing Systems}, 2015.

\bibitem[Thakurta \& Smith(2013)Thakurta and Smith]{thakurta2013differentially}
Thakurta, A. and Smith, A.~D.
\newblock Differentially private feature selection via stability arguments, and the robustness of the lasso.
\newblock In Shalev{-}Shwartz, S. and Steinwart, I. (eds.), \emph{{COLT} - The 26th Annual Conference on Learning Theory}, {JMLR} Workshop and Conference Proceedings. JMLR.org, 2013.

\bibitem[Ullman(2015)]{ullman2015private}
Ullman, J.~R.
\newblock Private multiplicative weights beyond linear queries.
\newblock In Milo, T. and Calvanese, D. (eds.), \emph{Proceedings of the 34th {ACM} Symposium on Principles of Database Systems, {PODS}}. {ACM}, 2015.

\bibitem[van Erven \& Harremo{\"{e}}s(2014)van Erven and Harremo{\"{e}}s]{ren_div}
van Erven, T. and Harremo{\"{e}}s, P.
\newblock R{\'{e}}nyi divergence and kullback-leibler divergence.
\newblock \emph{{IEEE} Trans. Inf. Theory}, 2014.

\bibitem[Wang et~al.(2018)Wang, Ye, and Xu]{wang2017differentially}
Wang, D., Ye, M., and Xu, J.
\newblock Differentially private empirical risk minimization revisited: Faster and more general.
\newblock \emph{CoRR}, abs/1802.05251, 2018.

\bibitem[Wei et~al.(2020)Wei, Li, Ding, Ma, Yang, Farokhi, Jin, Quek, and Poor]{fl_dp}
Wei, K., Li, J., Ding, M., Ma, C., Yang, H.~H., Farokhi, F., Jin, S., Quek, T. Q.~S., and Poor, H.~V.
\newblock Federated learning with differential privacy: Algorithms and performance analysis.
\newblock \emph{{IEEE} Trans. Inf. Forensics Secur.}, 2020.

\bibitem[Wu et~al.(2017)Wu, Li, Kumar, Chaudhuri, Jha, and Naughton]{wu2017bolt}
Wu, X., Li, F., Kumar, A., Chaudhuri, K., Jha, S., and Naughton, J.~F.
\newblock Bolt-on differential privacy for scalable stochastic gradient descent-based analytics.
\newblock In Salihoglu, S., Zhou, W., Chirkova, R., Yang, J., and Suciu, D. (eds.), \emph{Proceedings of the 2017 {ACM} International Conference on Management of Data, {SIGMOD} Conference}. {ACM}, 2017.

\end{thebibliography}
\bibliographystyle{icml2025}

\newpage
\appendix
\onecolumn
\section{Additional Theorems and Lemmas}
\label{sec:proof}

Here we provide additional theorems and lemmas that are used for our proofs.
These were also used in \cite{dp_mu2_fl}.

\begin{theorem}[\citep{anytime}]\label{thm:anytime}
Let $f:\K\to\mathbb{R}$ be a convex function.
Also let $\{\alpha_t>0\}$ and $\{w_t\in\K\}$.
Let $\{x_t\}$ be the $\{\alpha_\tau\}_{\tau=1}^t$ weighted average of $\{w_\tau\}_{\tau=1}^t$, meaning: $x_t=\frac{1}{\alpha_{1:t}}\sum_{\tau=1}^t\alpha_\tau w_\tau$.
Then the following holds for all $t\geq1, x\in\K$:
\als
\alpha_{1:t}(\f{x_t}-\f{x})\leq\sum_{\tau=1}^t\alpha_\tau\dotprod{\df{x_\tau}}{w_\tau-x}
\eals
\end{theorem}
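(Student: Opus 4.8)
The plan is to prove the inequality by induction on $t$, using convexity of $f$ twice within each inductive step. Throughout I will rely on the recursive form of the weighted average, namely $x_t=\frac{\alpha_{1:t-1}}{\alpha_{1:t}}x_{t-1}+\frac{\alpha_t}{\alpha_{1:t}}w_t$, which follows directly from the definition $x_t=\frac{1}{\alpha_{1:t}}\sum_{\tau=1}^t\alpha_\tau w_\tau$ together with $\alpha_{1:t}=\alpha_{1:t-1}+\alpha_t$. In particular the two coefficients in this recursion are nonnegative and sum to one, so $x_t$ is a genuine convex combination of $x_{t-1}$ and $w_t$, and since $w_\tau\in\K$ and $\K$ is convex, all the $x_t$ remain in $\K$.

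For the base case $t=1$ we have $x_1=w_1$, and the gradient inequality for the convex function $f$ at $w_1$ gives $\f{w_1}-\f{x}\leq\dotprod{\df{w_1}}{w_1-x}$. Multiplying by $\alpha_1>0$ yields exactly the claim at $t=1$.

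For the inductive step, assume the statement holds at $t-1$. I would first apply the gradient inequality of $f$ at the averaged point $x_t$ to linearize the loss gap, obtaining $\f{x_t}-\f{x}\leq\dotprod{\df{x_t}}{x_t-x}$. Because the coefficients in the recursion sum to one, I can decompose $x_t-x=\frac{\alpha_{1:t-1}}{\alpha_{1:t}}(x_{t-1}-x)+\frac{\alpha_t}{\alpha_{1:t}}(w_t-x)$, so that
\[\alpha_{1:t}\dotprod{\df{x_t}}{x_t-x}=\alpha_{1:t-1}\dotprod{\df{x_t}}{x_{t-1}-x}+\alpha_t\dotprod{\df{x_t}}{w_t-x}.\]
The second summand is already in the desired form (it is the $\tau=t$ term). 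The crux is the first summand: the gradient there is evaluated at $x_t$, whereas the induction hypothesis refers to $\f{x_{t-1}}$. I bridge this mismatch by invoking convexity a second time — subtracting the gradient inequalities of $f$ at $x_t$, evaluated at $x_{t-1}$ and at $x$ respectively, gives $\dotprod{\df{x_t}}{x_{t-1}-x}\leq\f{x_{t-1}}-\f{x}$. Hence $\alpha_{1:t-1}\dotprod{\df{x_t}}{x_{t-1}-x}\leq\alpha_{1:t-1}(\f{x_{t-1}}-\f{x})$, and the induction hypothesis bounds this by $\sum_{\tau=1}^{t-1}\alpha_\tau\dotprod{\df{x_\tau}}{w_\tau-x}$. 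Adding back the $\tau=t$ term closes the induction.

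The only nonroutine point is precisely this double use of convexity: the averaging recursion naturally produces $\df{x_t}$ in the carried-over term, which does not match the $\df{x_{t-1}}$ appearing in the induction hypothesis, and re-upgrading the linear quantity $\dotprod{\df{x_t}}{x_{t-1}-x}$ back into the function-value gap $\f{x_{t-1}}-\f{x}$ is exactly what lets the recursion telescope cleanly. Everything else is bookkeeping with the positive weights $\alpha_\tau$ and the convex-combination identity; notably, no smoothness or Lipschitz hypothesis is required — plain convexity of $f$ suffices.
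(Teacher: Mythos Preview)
Your proof is correct and follows essentially the same approach as the paper's: an induction on $t$ that uses the recursive identity $\alpha_{1:t}x_t=\alpha_{1:t-1}x_{t-1}+\alpha_t w_t$ together with two applications of convexity per step. The only cosmetic difference is the order of operations: the paper first splits $\alpha_{1:t+1}(\f{x_{t+1}}-\f{x})$ into $\alpha_{1:t}(\f{x_t}-\f{x})$ plus two residual function-value gaps, invokes the induction hypothesis, and then applies convexity to the residuals; you instead linearize immediately at $x_t$, decompose $x_t-x$, and then use convexity a second time to recover $\f{x_{t-1}}-\f{x}$ so the hypothesis applies. The inequalities used are identical in both cases.
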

Note that the above theorem holds generally for any sequences of iterates $\{w_t\}_t$ with weighted averages $\{x_t\}_t$, and as a private case it holds for the sequences generated by Anytime-SGD.
Concretely, the theorem implies that the excess loss of the weighted average $x_t$ can be related to the weighted regret $\sum_{\tau=1}^t\alpha_\tau\dotprod{\df{x_\tau}}{w_\tau-x}$.

\begin{proof}[Proof of \Cref{thm:anytime}]
Proof by induction.
\\\textbf{Induction basis:} $t=1$
\als
\alpha_1(\f{x_1}-\f{x})\leq\alpha_1\dotprod{\df{x_1}}{x_1-x}=\alpha_1\dotprod{\df{x_1}}{w_1-x}
\eals
The inequality is from convexity of $f$, and the equality is because $x_1=w_1$.
\\\textbf{Induction assumption:} for some $t\geq1$:
\als
\alpha_{1:t}(\f{x_t}-\f{x})\leq\sum_{\tau=1}^t\alpha_\tau\dotprod{\df{x_\tau}}{w_\tau-x}
\eals
\\\textbf{Induction step:} proof for $t+1$:
\als
\alpha_{1:t+1}(\f{x_{t+1}}-\f{x})=&\alpha_{1:t}(\f{x_{t+1}}-\f{x_t}+\f{x_t}-\f{x})+\alpha_{t+1}(\f{x_{t+1}}-\f{x}) \\
\leq&\sum_{\tau=1}^t\alpha_\tau\dotprod{\df{x_\tau}}{w_\tau-x}+\alpha_{1:t}(\f{x_{t+1}}-\f{x_t})+\alpha_{t+1}(\f{x_{t+1}}-\f{x}) \\
\leq&\sum_{\tau=1}^t\alpha_\tau\dotprod{\df{x_\tau}}{w_\tau-x}+\dotprod{\df{x_{t+1}}}{\alpha_{1:t}(x_{t+1}-x_t)+\alpha_{t+1}(x_{t+1}-x)} \\
=&\sum_{\tau=1}^t\alpha_\tau\dotprod{\df{x_\tau}}{w_\tau-x}+\alpha_{t+1}\dotprod{\df{x_{t+1}}}{w_{t+1}-x}=\sum_{\tau=1}^{t+1}\alpha_\tau\dotprod{\df{x_\tau}}{w_\tau-x}
\eals
In the first equality we rearranged the terms, and added and subtracted the same thing, we then used the induction assumption, then used convexity of $f$ on both pairs and added them together, and finally we used the update rule of $x_{t+1}$: $\alpha_{1:t+1}x_{t+1}=\alpha_{1:t}x_t+\alpha_{t+1}w_{t+1}$, and added the last member of the sum to get our desired result.
\end{proof}

\begin{lemma}\label{lem:martingale}
Let $\{Z_t\}$ be a Martingale difference sequence w.r.t~a Filtration $\{\F_t\}_t$, i.e.~$\ExpB{Z_t\vert\F_{t-1}}=0$, then:
\als
\Exp{\normsq{\sum_{\tau=1}^tZ_\tau}}=\sum_{\tau=1}^t\Exp{\normsq{Z_\tau}}
\eals
\end{lemma}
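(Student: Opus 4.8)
The statement to prove is \Cref{lem:martingale}: for a martingale difference sequence $\{Z_t\}$ with respect to a filtration $\{\F_t\}$, we have $\Exp{\normsq{\sum_{\tau=1}^t Z_\tau}} = \sum_{\tau=1}^t \Exp{\normsq{Z_\tau}}$.

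The plan is to expand the squared norm of the sum and show all cross terms vanish. First I would write
$\normsq{\sum_{\tau=1}^t Z_\tau} = \sum_{\tau=1}^t \normsq{Z_\tau} + 2\sum_{1 \le s < \tau \le t} \dotprod{Z_s}{Z_\tau}$,
so that by linearity of expectation it suffices to show $\Exp{\dotprod{Z_s}{Z_\tau}} = 0$ for every pair $s < \tau$. For such a pair, I would condition on $\F_{\tau-1}$ and use the tower property: $\Exp{\dotprod{Z_s}{Z_\tau}} = \Exp{\condExp{\dotprod{Z_s}{Z_\tau}}{\F_{\tau-1}}}$. Since $s \le \tau - 1$, $Z_s$ is $\F_{\tau-1}$-measurable, so it can be pulled out of the inner conditional expectation, giving $\condExp{\dotprod{Z_s}{Z_\tau}}{\F_{\tau-1}} = \dotprod{Z_s}{\condExp{Z_\tau}{\F_{\tau-1}}} = \dotprod{Z_s}{0} = 0$ by the martingale difference property. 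Hence the cross term has zero expectation, and the claim follows.

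The only minor technical obstacle is the implicit integrability assumption: to justify interchanging expectation with the finite sum and to apply the tower property, one needs each $Z_\tau$ to be square-integrable (which is anyway necessary for the statement to be meaningful, since $\Exp{\normsq{Z_\tau}}$ appears on the right-hand side). I would simply note that $\Exp{\normsq{Z_\tau}} < \infty$ is assumed, and then Cauchy--Schwarz ensures $\Exp{|\dotprod{Z_s}{Z_\tau}|} \le \Exp{\norm{Z_s}\norm{Z_\tau}} < \infty$, so all manipulations are valid. Since the sum is finite, no convergence subtleties arise. This is a completely routine argument; there is no real hard step.
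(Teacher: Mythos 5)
Your proof is correct and uses the same essential idea as the paper: the cross terms $\Exp{\dotprod{Z_s}{Z_\tau}}$ vanish by conditioning on the filtration and applying the martingale difference property. The paper organizes this as an induction on $t$ (handling only the single cross term between the partial sum and the newest increment at each step), whereas you expand all cross terms at once; the two are interchangeable packagings of the same argument, and your explicit remark on square-integrability is a small point the paper leaves implicit.
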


\begin{proof}[Proof of \Cref{lem:martingale}]
\label{proof:martingale}
Proof by induction.
\\\textbf{Induction basis:} $t=1$
\als
\Exp{\normsq{\sum_{\tau=1}^1Z_\tau}}=\Exp{\normsq{Z_1}}=\sum_{\tau=1}^1\Exp{\normsq{Z_\tau}}
\eals
\\\textbf{Induction assumption:} for some $t\geq1$:
\als
\Exp{\normsq{\sum_{\tau=1}^tZ_\tau}}=\sum_{\tau=1}^t\Exp{\normsq{Z_\tau}}
\eals
\\\textbf{Induction step:} proof for $t+1$:
\als
\Exp{\normsq{\sum_{\tau=1}^{t+1}Z_\tau}}=\Exp{\normsq{\sum_{\tau=1}^{t+1}Z_\tau}}+2\Exp{\dotprod{\sum_{\tau=1}^tZ_\tau}{Z_{t+1}}}+\Exp{\normsq{Z_{t+1}}}=\sum_{\tau=1}^t\Exp{\normsq{Z_\tau}}+\Exp{\normsq{Z_{t+1}}}=\sum_{\tau=1}^{t+1}\Exp{\normsq{Z_\tau}}
\eals
The first equality is square rules, then we used the induction assumption and the fact that $\ExpB{Z_{t+1}|Z_1,\ldots,Z_t}=0$, and finally we added the last term into the sum.
\end{proof}

\begin{lemma}[\citep{rdp,zcdp}]\label{lem:rdp_copm}
If $\A_1,\ldots,\A_k$ are randomized algorithms satisfying $\rho_1$-zCDP, \ldots, $\rho_k$-zCDP, respectively, then their composition $(\A_1(\S)\ldots,\A_k(\S))$ is $(\rho_1+\ldots,+\rho_k)$-zCDP.
Moreover, the i'th algorithm $\A_i$, can be chosen on the basis of the outputs of the previous algorithms $\A_1,\ldots,\A_{i-1}$.
\end{lemma}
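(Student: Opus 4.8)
The plan is to check the defining inequality of $\rho$-zCDP directly for the composed mechanism, exploiting the multiplicative chain-rule structure of the exponentiated R\'enyi divergence. Recall that a mechanism $\A$ is $\rho$-zCDP precisely when $\diver{\A(\S)}{\A(\S')}\le\rho\alpha$ for every order $\alpha>1$ and every pair of neighboring datasets $\S,\S'$. Hence it suffices to fix an arbitrary $\alpha>1$ and arbitrary neighbors $\S,\S'$, and to show that the composition $\A:=(\A_1,\ldots,\A_k)$ satisfies $\diver{\A(\S)}{\A(\S')}\le(\rho_1+\cdots+\rho_k)\alpha$. It is convenient to work with the monotone transform $M_\alpha(P\|Q):=\condExp{(P(X)/Q(X))^{\alpha-1}}{X\sim P}=\exp\!\left((\alpha-1)\diver{P}{Q}\right)$, so that the target bound becomes the multiplicative claim $M_\alpha(\A(\S)\|\A(\S'))\le\prod_{j=1}^k\exp((\alpha-1)\rho_j\alpha)$.

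First I would reduce to the two-stage case $k=2$ and then induct on $k$. Let $P$ and $Q$ denote the joint laws of the pair $(\A_1,\A_2)$ run on $\S$ and on $\S'$, respectively, and factor each joint density into a first-stage marginal times a second-stage conditional: writing $y_1$ for the first output, $P(y_1,y_2)=P_1(y_1)\,P_{2\mid y_1}(y_2)$ and similarly for $Q$, where $P_{2\mid y_1},Q_{2\mid y_1}$ are the conditional laws of $\A_2$ given that the first output equals $y_1$ (this is where adaptivity enters: $\A_2$ may depend on $y_1$). Substituting this factorization into $M_\alpha$ and conditioning on $y_1$, the likelihood-ratio power splits into a product, and the inner conditional expectation over $y_2$ is exactly $M_\alpha(P_{2\mid y_1}\|Q_{2\mid y_1})=\exp((\alpha-1)\diver{P_{2\mid y_1}}{Q_{2\mid y_1}})$.

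The crux is then that, for each fixed realization $y_1$, the conditioned second mechanism is a valid $\rho_2$-zCDP mechanism on the neighboring pair $\S,\S'$, so $\diver{P_{2\mid y_1}}{Q_{2\mid y_1}}\le\rho_2\alpha$ uniformly in $y_1$. Pulling this uniform bound out of the outer expectation leaves exactly $M_\alpha(P_1\|Q_1)$, which is at most $\exp((\alpha-1)\rho_1\alpha)$ because $\A_1$ is $\rho_1$-zCDP; multiplying the two factors gives $M_\alpha(P\|Q)\le\exp((\alpha-1)(\rho_1+\rho_2)\alpha)$, i.e.\ $\diver{\A(\S)}{\A(\S')}\le(\rho_1+\rho_2)\alpha$, which establishes the base case. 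The inductive step treats $(\A_1,\ldots,\A_{k-1})$ as a single $(\rho_1+\cdots+\rho_{k-1})$-zCDP first stage and $\A_k$ as the adaptive second stage, so the very same two-stage argument closes the induction.

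I expect the main obstacle to be the adaptivity bookkeeping rather than the algebra: one must argue that the assumption ``$\A_k$ is $\rho_k$-zCDP and may be chosen on the basis of the previous outputs'' really yields the \emph{uniform} bound $\diver{P_{k\mid y_{<k}}}{Q_{k\mid y_{<k}}}\le\rho_k\alpha$ for every realization of the earlier outputs $y_{<k}$, simultaneously over all neighboring pairs. A secondary technical point is measure-theoretic care in the factorization and conditioning: densities should be read as Radon--Nikodym derivatives with respect to a common dominating measure, the convention $0/0=0$ must be respected, and the degenerate branches where some conditional divergence equals $+\infty$ should be checked separately to keep the product bound valid (there the target bound is vacuous, so they cause no harm).
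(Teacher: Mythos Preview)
Your proposal is correct and follows essentially the same route as the paper: both argue by induction, factor the joint density via the chain rule, bound the conditional R\'enyi divergence of the last stage uniformly using the zCDP assumption, and then peel it off to reduce to the inductive hypothesis. The only cosmetic difference is that you work explicitly with the multiplicative quantity $M_\alpha$ (which makes the ``pull out the uniform bound'' step transparent), whereas the paper writes the same manipulation directly on $\diver{\cdot}{\cdot}$; your added remarks on adaptivity bookkeeping and measure-theoretic care are extra rigor the paper omits, not a different argument.
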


\begin{proof}[Proof of \Cref{lem:rdp_copm}]
Proof by induction.
\\\textbf{Induction basis:} $k=1$
\als
\diver{\A_1(\S)}{\A_1(\S')}\leq\alpha\rho_1
\eals
Because $\A_1$ is $\rho_1$-zCDP.
\\\textbf{Induction assumption:} for some $k\geq1$:
\als
\diver{\{\A_i(\S)\}_{i=1}^k}{\{\A_i(\S')\}_{i=1}^k}\leq\alpha\sum_{i=1}^k\rho_i
\eals
\\\textbf{Induction step:} proof for $k+1$
\als
\diver{\{\A_i(\S)\}_{i=1}^{k+1}}{\{\A_i(\S')\}_{i=1}^{k+1}}=&\frac{1}{\alpha-1}\log\left(\condExp{\left(\frac{\prob{\{\A_i(\S)\}_{i=1}^{k+1}}}{\prob{\{\A_i(\S')\}_{i=1}^{k+1}}}\right)^{\alpha-1}}{\A_i\sim\A_i(\S)}\right) \\
=&\frac{1}{\alpha-1}\log\left(\condExp{\left(\frac{\prob{\A_{k+1}(\S)|\{\A_i\}_{i=1}^k}\prob{\{\A_i(\S)\}_{i=1}^k}}{\prob{\A_{k+1}(\S')|\{\A_i\}_{i=1}^k}\prob{\{\A_i(\S')\}_{i=1}^k}}\right)^{\alpha-1}}{\A_i\sim\A_i(\S)}\right) \\
=&\frac{1}{\alpha-1}\log\left(\condExp{\left(\frac{\prob{\A_{k+1}(\S)|\{\A_i\}_{i=1}^k}}{\prob{\A_{k+1}(\S')|\{\A_i\}_{i=1}^k}}\right)^{\alpha-1}}{\A_i\sim\A_i(\S)}\right) \\
+&\frac{1}{\alpha-1}\log\left(\condExp{\left(\frac{\prob{\{\A_i(\S)\}_{i=1}^k}}{\prob{\{\A_i(\S')\}_{i=1}^k}}\right)^{\alpha-1}}{\A_i\sim\A_i(\S)}\right) \\
=&\diver{\A_{k+1}(\S)}{\A_{k+1}(\S')|\{\A_i\}_{i=1}^k}+\diver{\{\A_i(\S)\}_{i=1}^k}{\{\A_i(\S')\}_{i=1}^k} \\
\leq&\alpha\rho_{k+1}+\alpha\sum_{i=1}^k\rho_i=\alpha\sum_{i=1}^{k+1}\rho_i
\eals
The writing $\A_i\sim\A_i(\S)$ means that the output of $\A_i$ is distributed in the case that the dataset is $\S$.
The first equation is the definition of the R{\'{e}}nyi divergence, next we use conditional probability rules, then we use the fact that the output of $\A_{k+1}$ conditioned on the outputs of the previous algorithms is independent of the outputs of these previous algorithms, and let the product out of the log as a summery.
We then notice that each of the two members are divergences themselves, and finally, we invoke both the the fact that $\A_{k+1}$ is $\rho_{k+1}$-zCDP, and the induction assumption, and add them together to the same sum.
\end{proof}

\begin{lemma}[Post Processing Lemma~\citep{ren_div}]\label{lem:post_proc}
Let $X,Y$ be random variables and $\A$ be a randomized or deterministic algorithm.
Then for all $\alpha\geq1$:
\als
\diver{\A(X)}{\A(Y)}\leq\diver{X}{Y}
\eals
\end{lemma}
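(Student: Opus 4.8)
The final statement to be proved is \Cref{lem:post_proc}, the post-processing lemma for Rényi divergence. Let me sketch a proof plan.

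\begin{proof}[Proof plan for \Cref{lem:post_proc}]
The plan is to reduce everything to the data-processing inequality for Rényi divergence, which itself follows from Jensen's inequality applied to the convex (or concave) map underlying the $\alpha$-divergence. First I would fix $\alpha>1$ (the boundary cases $\alpha=1,\infty$ follow by the stated continuity convention) and write out the definition $\diver{\A(X)}{\A(Y)}=\frac{1}{\alpha-1}\log\condExp{(P'(W)/Q'(W))^{\alpha-1}}{W\sim P'}$, where $P',Q'$ are the laws of $\A(X),\A(Y)$ respectively, and $P,Q$ are the laws of $X,Y$. Equivalently I would work with the quantity $D_\alpha(P\|Q)=\frac{1}{\alpha-1}\log\int p^\alpha q^{1-\alpha}$ and show the integral $\int (p')^\alpha (q')^{1-\alpha}$ only decreases (for $\alpha>1$) under pushforward through the Markov kernel $\A$; since $\frac{1}{\alpha-1}\log(\cdot)$ is monotone increasing for $\alpha>1$, this gives the claimed inequality.

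The key steps, in order: (1) represent the randomized algorithm $\A$ as a Markov kernel $K(w\mid x)$, so that $p'(w)=\int K(w\mid x)\,p(x)\,dx$ and $q'(w)=\int K(w\mid x)\,q(x)\,dx$ (a deterministic algorithm is the special case of a point-mass kernel); (2) for each fixed output $w$, view the inner ratio through the lens of the joint distribution — write $p'(w)^\alpha q'(w)^{1-\alpha}$ and compare it to $\int K(w\mid x) p(x)^\alpha q(x)^{1-\alpha}\,dx$ after normalizing $K(w\mid x)q(x)/q'(w)$ into a probability measure in $x$; (3) apply Jensen's inequality to the function $t\mapsto t^\alpha$, which is convex for $\alpha>1$, to the random variable $p(x)/q(x)$ under that probability measure, obtaining $(p'(w)/q'(w))^\alpha \le \int \frac{K(w\mid x)q(x)}{q'(w)}(p(x)/q(x))^\alpha dx$; (4) multiply through by $q'(w)$, integrate over $w$, and use Fubini together with $\int K(w\mid x)\,dw=1$ to collapse the right-hand side to $\int p(x)^\alpha q(x)^{1-\alpha}\,dx$; (5) take logs and divide by $\alpha-1>0$ to conclude $\diver{\A(X)}{\A(Y)}\le\diver{X}{Y}$.

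The main obstacle I anticipate is purely book-keeping rather than conceptual: handling the edge cases where $q'(w)=0$ or $q(x)=0$ on sets of positive measure (so the convention $0/0=0$ and the ``$\infty$'' convention from the definition must be invoked carefully), and making sure the Jensen step is applied with the correct reference measure so that the normalization constants cancel exactly. One must also be slightly careful that when $\diver{X}{Y}=\infty$ the statement is trivially true, so the substantive case is when it is finite, which guarantees $q(x)=0\Rightarrow p(x)=0$ and lets the manipulations go through. For $0<\alpha<1$ the function $t\mapsto t^\alpha$ is concave and the direction of Jensen flips, but so does the sign of $\frac{1}{\alpha-1}$, so the inequality direction is preserved; I would remark on this to cover all $\alpha\ge1$ as stated (and the $\alpha=1$ KL case, which is the classical data-processing inequality, follows by continuity). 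Since this is a standard result cited to \citep{ren_div}, a short self-contained argument along these lines suffices.
\end{proof}
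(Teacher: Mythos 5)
The paper does not actually prove \Cref{lem:post_proc}; it imports the result verbatim from the cited reference (\citealp{ren_div}), so there is no in-paper argument to compare against. Your plan is the standard data-processing proof for R\'enyi divergence and it is sound: representing $\A$ as a Markov kernel $K(w\mid x)$, normalizing $K(w\mid x)q(x)/q'(w)$ into a probability measure for each output $w$, applying Jensen's inequality to the convex map $t\mapsto t^\alpha$ (for $\alpha>1$), and integrating with Fubini so that $\int K(w\mid x)\,dw=1$ collapses the bound to $\int p^\alpha q^{1-\alpha}$, after which monotonicity of $\frac{1}{\alpha-1}\log(\cdot)$ gives $\diver{\A(X)}{\A(Y)}\leq\diver{X}{Y}$; the boundary orders $\alpha=1,\infty$ then follow by taking limits, consistent with the paper's definition by continuity. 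Your handling of the degenerate cases is also the right instinct: for $\alpha>1$, $\diver{X}{Y}<\infty$ forces $P(q=0)=0$, and symmetrically one should note that if $q'(w)=0$ on a set where $p'(w)>0$ then the right-hand side is already $\infty$, so the inequality is vacuous there. Two small remarks: the digression about $0<\alpha<1$ is unnecessary since the lemma only claims $\alpha\geq1$, and as written this is a plan rather than a finished proof, so the Jensen/normalization bookkeeping (in particular verifying $\frac{p'(w)}{q'(w)}=\int\frac{K(w\mid x)q(x)}{q'(w)}\cdot\frac{p(x)}{q(x)}\,dx$ under the $0/0=0$ convention) would still need to be written out; but no step in the outline would fail.
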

Note that we get equality if $\A$ is an invertible function.

\begin{lemma}\label{lem:OGD+_Guarantees}Let $\eta>0$, and $\K\subset\reals^d$ be a convex domain of bounded diameter $D$, also let $\{\tq_t\in\reals^d\}_{t=1}^T$ be a sequence of arbitrary vectors.
Then for any starting point $w_1\in\reals^d$, and an update rule $w_{t+1}=\Pi_\K\left(w_t-\eta\tq_t\right)~,\forall t\geq 1$, the following holds $\forall x\in\K$:
\als
\sum_{\tau=1}^t\dotprod{\tq_\tau}{w_{\tau+1}-x}\leq\frac{D^2}{2\eta}-\frac{1}{2\eta}\sum_{\tau=1}^t\normsq{w_\tau-w_{\tau+1}}
\eals
\end{lemma}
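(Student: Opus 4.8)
\textbf{Proof proposal for \Cref{lem:OGD+_Guarantees}.}
The plan is to run the standard "projected online gradient descent" potential argument, using the squared distance to the comparator $x$ as the potential. First I would fix $x\in\K$ and a time $\tau$, and expand $\normsq{w_{\tau+1}-x}$ by writing $w_{\tau+1}-x = (w_{\tau+1}-w_\tau) + (w_\tau-x)$, giving
\als
\normsq{w_{\tau+1}-x} = \normsq{w_\tau-x} + 2\dotprod{w_{\tau+1}-w_\tau}{w_\tau-x} + \normsq{w_{\tau+1}-w_\tau}.
\eals
The key to handling the cross term is the projection property: since $w_{\tau+1}=\Pi_\K(w_\tau-\eta\tq_\tau)$ and $x\in\K$, the obtuse-angle (first-order optimality / variational) inequality for Euclidean projection onto a convex set gives $\dotprod{w_\tau-\eta\tq_\tau-w_{\tau+1}}{x-w_{\tau+1}}\leq 0$, i.e.\ $\dotprod{w_\tau-w_{\tau+1}}{x-w_{\tau+1}}\leq \eta\dotprod{\tq_\tau}{x-w_{\tau+1}}$. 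I would rearrange this into a bound on $\dotprod{\tq_\tau}{w_{\tau+1}-x}$.

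Next I would combine these two facts. Rewriting the cross term $\dotprod{w_{\tau+1}-w_\tau}{w_\tau-x}$ as $\dotprod{w_{\tau+1}-w_\tau}{w_{\tau+1}-x} - \normsq{w_{\tau+1}-w_\tau}$ and substituting into the expansion yields
\als
\normsq{w_{\tau+1}-x} = \normsq{w_\tau-x} + 2\dotprod{w_{\tau+1}-w_\tau}{w_{\tau+1}-x} - \normsq{w_{\tau+1}-w_\tau}.
\eals
Now apply the projection inequality in the form $\dotprod{w_{\tau+1}-w_\tau}{w_{\tau+1}-x}\leq \eta\dotprod{\tq_\tau}{x-w_{\tau+1}} = -\eta\dotprod{\tq_\tau}{w_{\tau+1}-x}$ to obtain
\als
\normsq{w_{\tau+1}-x} \leq \normsq{w_\tau-x} - 2\eta\dotprod{\tq_\tau}{w_{\tau+1}-x} - \normsq{w_{\tau+1}-w_\tau}.
\eals
Solving for the inner product term gives $\dotprod{\tq_\tau}{w_{\tau+1}-x}\leq \frac{1}{2\eta}\big(\normsq{w_\tau-x}-\normsq{w_{\tau+1}-x}\big) - \frac{1}{2\eta}\normsq{w_{\tau+1}-w_\tau}$.

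Finally I would sum this over $\tau=1,\ldots,t$. The first two terms telescope to $\frac{1}{2\eta}\big(\normsq{w_1-x}-\normsq{w_{t+1}-x}\big)\leq \frac{1}{2\eta}\normsq{w_1-x}\leq \frac{D^2}{2\eta}$, using the diameter bound $\normsq{w_1-x}\leq D^2$ (both $w_1,x\in\K$) and dropping the nonnegative $\normsq{w_{t+1}-x}$. The remaining term is exactly $-\frac{1}{2\eta}\sum_{\tau=1}^t\normsq{w_\tau-w_{\tau+1}}$, which gives the claimed inequality. There is no real obstacle here — the only point requiring care is invoking the correct form of the projection/variational inequality for $\Pi_\K$ and keeping the signs straight when rearranging; everything else is telescoping and the diameter bound.
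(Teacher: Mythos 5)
Your proposal is correct and follows essentially the same route as the paper: the obtuse-angle projection inequality you invoke is exactly the first-order optimality condition the paper derives by writing the projection step as an argmin, and both arguments then reduce to the same per-step bound $\dotprod{\tq_\tau}{w_{\tau+1}-x}\leq\frac{1}{2\eta}\left(\normsq{w_\tau-x}-\normsq{w_{\tau+1}-x}-\normsq{w_\tau-w_{\tau+1}}\right)$, followed by telescoping and the diameter bound. The only cosmetic caveat (shared with the paper) is that bounding $\norm{w_1-x}\leq D$ implicitly uses $w_1\in\K$, which holds in the algorithm since $w_1=x_0\in\K$.
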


\begin{proof}[Proof of \Cref{lem:OGD+_Guarantees}]
\label{proof:OGD+_Guarantees}
The update rule $w_{t+1}=\Pi_\K\left(w_t-\eta\tq_t\right)$ can be re-written as a convex optimization problem over $\K$:
\als
w_{t+1}=\Pi_\K\left(w_t-\eta\tq_t\right)=\underset{x\in\K}{\arg\min}\left\{\normsq{w_t-\eta\tq_t-x}\right\}=\underset{x\in\K}{\arg\min}\left\{\dotprod{\tq_t}{x-w_t}+\frac{1}{2\eta}\normsq{x-w_t}\right\}
\eals
The first equality is our update definition, the second is by the definition of the projection operator, and then we rewrite it in a way that does not affect the minimum point.

Now, since $w_{t+1}$ is the minimal point of the above convex problem, then from optimality conditions we obtain:
\als
\dotprod{\tq_t+\frac{1}{\eta}(w_{t+1}-w_t)}{x-w_{t+1}}\geq0,\quad\forall x\in\K
\eals
Re-arranging the above, we get that:
\als
\dotprod{\tq_t}{w_{t+1}-x}\leq\frac{1}{\eta}\dotprod{w_t-w_{t+1}}{w_{t+1}-x}=\frac{1}{2\eta}\normsq{w_t-x}-\frac{1}{2\eta}\normsq{w_{t+1}-x}-\frac{1}{2\eta}\normsq{w_t-w_{t+1}}
\eals
Where the equality is an algebraic manipulation.
After summing over $t$ we get: 
\als
\sum_{\tau=1}^t\dotprod{\tq_\tau}{w_{\tau+1}-x}
&\leq\frac{1}{2\eta}\sum_{\tau=1}^t\left(\normsq{w_\tau-x}-\normsq{w_{\tau+1}-x}-\normsq{w_\tau-w_{\tau+1}}\right) \\
&=\frac{\normsq{w_1-x}-\normsq{w_{t+1}-x}}{2\eta}-\frac{1}{2\eta}\sum_{\tau=1}^t\normsq{w_\tau-w_{\tau+1}}\leq\frac{D^2}{2\eta}-\frac{1}{2\eta}\sum_{\tau=1}^t\normsq{w_\tau-w_{\tau+1}}
\eals
Where the second line is due to splitting the sum into two sums, and using the fact that the first one is a telescopic sum, and lastly, we use the diameter of $\K$.
This establishes the lemma.
\end{proof}

\begin{lemma}\label{lem:smooth}
If $f:\K\to\reals$ is convex and $L$-smooth, and $x^*=\underset{x\in\K}{\arg\min}\{\f{x}\}$, then $\forall x\in\reals^d$:
\als
\normsq{\df{x}-\df{x^*}}\leq2L(\f{x}-\f{x^*})
\eals
\end{lemma}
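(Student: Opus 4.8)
The plan is to reduce the claim to the standard gradient-domination (``co-coercivity'') bound for smooth convex functions, by passing to a shifted objective whose gradient vanishes at $x^*$. First I would record the only consequence of $L$-smoothness that is needed, namely the descent lemma: for all $u,v\in\reals^d$, $\f{v}\leq\f{u}+\dotprod{\df{u}}{v-u}+\frac{L}{2}\normsq{v-u}$.

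Next I would introduce the auxiliary function $h(z):=\f{z}-\dotprod{\df{x^*}}{z}$. It is convex (a convex function plus a linear one), it is $L$-smooth with $\nabla h(z)=\df{z}-\df{x^*}$, so it inherits the descent lemma with the same constant $L$, and crucially $\nabla h(x^*)=0$; since $h$ is convex, this makes $x^*$ a global minimizer of $h$ over $\reals^d$. Then I would apply the descent lemma to $h$ at $u=x$, $v=x-\frac{1}{L}\nabla h(x)$, which gives $h\!\left(x-\frac{1}{L}\nabla h(x)\right)\leq h(x)-\frac{1}{2L}\normsq{\nabla h(x)}$; combining this with $h(x^*)\leq h\!\left(x-\frac{1}{L}\nabla h(x)\right)$ yields $\normsq{\nabla h(x)}\leq 2L\big(h(x)-h(x^*)\big)$.

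Finally I would unwind the definition of $h$: the last inequality reads $\normsq{\df{x}-\df{x^*}}\leq 2L\big(\f{x}-\f{x^*}-\dotprod{\df{x^*}}{x-x^*}\big)$. Since $x^*$ minimizes $f$ over the convex set $\K$, the first-order optimality condition gives $\dotprod{\df{x^*}}{x-x^*}\geq 0$ for every $x\in\K$, so the cross term can be dropped, which establishes the lemma.

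The argument is entirely standard, so there is no genuinely hard step; the only points requiring care are (i) that the descent-lemma step implicitly uses $L$-smoothness of $f$ (hence of $h$) on all of $\reals^d$, not merely on $\K$, which is consistent with the ``$\forall x\in\reals^d$'' in the statement, and (ii) not forgetting that one really does need the first-order optimality of $x^*$ on $\K$ to remove the term $-\dotprod{\df{x^*}}{x-x^*}$ — without it only the weaker inequality retaining that term holds.
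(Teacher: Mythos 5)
Your proof is correct and follows essentially the same route as the paper's: both introduce the shifted function $h(z)=\f{z}-\dotprod{\df{x^*}}{z}$ (the paper's version differs only by additive constants), apply the descent lemma at $v=x-\frac{1}{L}\nabla h(x)$ to get $\normsq{\nabla h(x)}\leq 2L\,h(x)$ relative to its global minimizer $x^*$, and then drop the cross term via first-order optimality of $x^*$ over $\K$. Your cautionary remarks (smoothness extended to $\reals^d$, and the need of $x\in\K$ for the final step) match the paper's treatment as well.
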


\begin{proof}[Proof of \Cref{lem:smooth}]
Let us define a new function:
\als
h(x)=\f{x}-\f{x^*}-\dotprod{\df{x^*}}{x-x^*}
\eals
Since $f$ is convex and $L$-smooth, we know that:
\als
0\leq h(x)\leq\frac{L}{2}\normsq{x-x^*}
\eals
The gradient of this function is:
\als
\nabla h(x)=\df{x}-\df{x^*}
\eals
We can see that $h(x^*)=0, \nabla h(x^*)=0$, and that $x^*$ is the global minimum.
The function $h$ is also convex and $L$-smooth, since the gradient is the same as $f$ up to a constant translation.
We will add to the domain of $h$ to include all $\reals^d$, while still being convex and $L$-smooth.
Since $h$ is convex then:
\als
h(y)\geq h(x^*)+\dotprod{\nabla h(x^*)}{y-x^*}=0,\quad\forall y\in\reals^d
\eals
It is true even for points outside of the original domain, meaning that $x^*$ remains the global minimum even after this.
For a smooth function, $\forall x,y\in\reals^d$:
\als
h(y)\leq h(x)+\dotprod{\nabla h(x)}{y-x}+\frac{L}{2}\normsq{y-x}
\eals
By picking $y=x-\frac{1}{L}\nabla h(x)$, we get:
\als
h(x)-h(y)\geq\frac{1}{2L}\normsq{\nabla h(x)}
\eals
Rearranging, we get:
\als
\normsq{\nabla h(x)}\leq2L(h(x)-h(y))\leq2L\cdot h(x)
\eals
By using $x\in\K$ we get:
\als
\normsq{\df{x}-\df{x^*}}\leq2L(\f{x}-\f{x^*}-\dotprod{\df{x^*}}{x-x^*}
\eals
Since $x^*$ is the minimum point of $f$ then:
\als
\dotprod{\df{x^*}}{x-x^*}\geq0,\quad\forall x\in\K
\eals
Thus we get that:
\als
\normsq{\df{x}-\df{x^*}}\leq2L(\f{x}-\f{x^*})
\eals
\end{proof}

\begin{lemma}\label{lem:sum}
If $\A_t\leq\frac{1}{2T}\sum_{\tau=1}^T\A_\tau+\Bcal, \forall t\in[T]$, then $\A_t\leq2\Bcal, \forall t\in[T]$.
\end{lemma}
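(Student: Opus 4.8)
\textbf{Proof proposal for \Cref{lem:sum}.}
The plan is a one-step averaging argument: the hypothesis controls each $\A_t$ by (a multiple of) the \emph{average} of all the $\A_\tau$'s, so summing the hypothesis over $t$ will let us bound that average by $\Bcal$, and then feeding this back into the hypothesis closes the loop.

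Concretely, first I would sum the assumed inequality $\A_t\leq\frac{1}{2T}\sum_{\tau=1}^T\A_\tau+\Bcal$ over all $t\in[T]$. The left-hand side becomes $\sum_{t=1}^T\A_t$, while the right-hand side becomes $T\cdot\frac{1}{2T}\sum_{\tau=1}^T\A_\tau+T\Bcal=\frac12\sum_{\tau=1}^T\A_\tau+T\Bcal$, since the inner sum does not depend on $t$. This yields $\sum_{t=1}^T\A_t\leq\frac12\sum_{\tau=1}^T\A_\tau+T\Bcal$.

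Next I would rearrange: subtracting $\frac12\sum_{t=1}^T\A_t$ from both sides gives $\frac12\sum_{t=1}^T\A_t\leq T\Bcal$, i.e. $\sum_{t=1}^T\A_t\leq 2T\Bcal$, equivalently $\frac{1}{2T}\sum_{\tau=1}^T\A_\tau\leq\Bcal$. Finally, substituting this back into the original hypothesis gives, for every $t\in[T]$, $\A_t\leq\frac{1}{2T}\sum_{\tau=1}^T\A_\tau+\Bcal\leq\Bcal+\Bcal=2\Bcal$, which is the claim.

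There is no real obstacle here — the only thing worth a sanity check is that the argument needs no sign assumption on the $\A_t$ or on $\Bcal$ (the rearrangement step is valid for arbitrary reals, with $T$ finite so all sums converge); nonnegativity would only be needed if one wanted the cruder bound $\frac{1}{2T}\sum_\tau\A_\tau\ge 0$, which we do not use.
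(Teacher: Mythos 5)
Your argument is correct and is essentially identical to the paper's proof: sum the hypothesis over $t\in[T]$, rearrange to obtain $\sum_{\tau=1}^T\A_\tau\leq2T\Bcal$, and substitute back into the hypothesis to conclude $\A_t\leq2\Bcal$. The added remark that no sign assumption on $\A_t$ or $\Bcal$ is needed is a fine (if minor) observation beyond what the paper states.
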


\begin{proof}[Proof of \Cref{lem:sum}]
\label{proof:sum}
Let's sum the inequalities:
\als
\sum_{t=1}^T\A_t\leq\sum_{t=1}^T\left(\frac{1}{2T}\sum_{\tau=1}^T\A_\tau+\Bcal\right)=\frac{1}{2}\sum_{\tau=1}^T\A_\tau+T\Bcal
\eals
The inequality is from the assumption, and the equality is because we sum constant values.
If we rearrange this we get:
\als
\sum_{\tau=1}^T\A_\tau\leq2T\Bcal
\eals
And then:
\als
\A_t\leq\frac{1}{2T}\sum_{\tau=1}^T\A_\tau+\Bcal\leq\Bcal+\Bcal=2\Bcal
\eals
\end{proof}

\section{Proofs of \Cref{sec:prelim}}

\subsection{Proof of \Cref{def:BoundVar,def:BoundSmoothVar,def:BoundHet,def:BoundSmoothHet}}
\label{proof:asmp}
Both claims use the same principle:
\als
\Exp{\normsq{X-\ExpB{X}}}\leq\Exp{\normsq{X}}
\eals
And so:
\als
\Exp{\normsq{\dfi{x;z}-\dfi{x}}}\leq&\Exp{\normsq{\dfi{x;z}}}\leq G^2 \\
\Exp{\normsq{(\dfi{x;z}-\dfi{x})-(\dfi{y;z}-\dfi{y})}}\leq&\Exp{\normsq{\dfi{x;z}-\dfi{y;z}}}\leq L^2\normsq{x-y}
\eals
With the heterogeneity, $\df{x}$ is the empirical mean of $\dfi{x}$, so the same rule applies:
\als
\frac{1}{M}\sum_{i=1}^M\normsq{\dfi{x}-\df{x}}\leq&\Exp{\normsq{\dfi{x}}}\leq G^2 \\
\frac{1}{M}\sum_{i=1}^M\normsq{(\dfi{x}-\df{x})-(\dfi{y}-\df{y})}\leq&\Exp{\normsq{\dfi{x}-\dfi{y}}}\leq L^2\normsq{x-y}
\eals
Thus $\sigma,\xi\leq G$ and $\sigmal,\xil\leq L$.

\subsection{Proof of \Cref{lem:AVGBound}}
\label{proof:AVGBound}
The first part will be a combination of the bounded variance and the bounded heterogeneity.
At first, we will assume that $\M$ is fixed:
\als
&\Exp{\normsq{g(x)-\df{x}}}=\Exp{\normsq{\frac{1}{m}\sum_{i\in\M}\dfi{x;z_i}-\df{x}}} \\
=&\normsq{\frac{1}{m}\sum_{i\in\M}\dfi{x}-\df{x}}+\Exp{\normsq{\frac{1}{m}\sum_{i\in\M}\dfi{x;z_i}-\dfi{x}}} \\
=&\normsq{\frac{1}{m}\sum_{i\in\M}\dfi{x}-\df{x}}+\frac{1}{m^2}\sum_{i\in\M}\Exp{\normsq{\dfi{x;z_i}-\dfi{x}}} \\
\leq&\normsq{\frac{1}{m}\sum_{i\in\M}\dfi{x}-\df{x}}+\frac{1}{m^2}\sum_{i\in\M}\sigma^2 \\
=&\normsq{\frac{1}{m}\sum_{i\in\M}\dfi{x}-\df{x}}+\frac{\sigma^2}{m}
\eals
At first we use the definition of $g(x)$, then we use the fact that $\Exp{\normsq{X}}=\normsq{\Exp{X}}+\Exp{\normsq{X-\Exp{X}}}$ and $\ExpB{\dfi{x;z_i}}=\dfi{x}$.
Then we use the fact that the variance of a sum equal to the sum of the variances, and then use the bounded variance.
Next we need to bound the first term:
\als
\normsq{\frac{1}{m}\sum_{i\in\M}\dfi{x}-\df{x}}=&\frac{1}{m^2}\normsq{\sum_{i=1}^M\ind{i\in\M}\left(\dfi{x}-\df{x}\right)} \\
=&\frac{1}{m^2}\sum_{i=1}^M\sum_{j=1}^M\ind{i\in\M}\ind{j\in\M}\dotprod{\dfi{x}-\df{x}}{\dfj{x}-\df{x}}
\eals
Where we rewrote the sum using the indicator function, and then open the square into two sums.
Now we will add an expectation on the randomness of $\M$.
Note that the only random variables here are the indicators.
The properties of the indicators, given that $\M$ is chosen uniformly:
\als
&\ExpB{\ind{i\in\M}\ind{j\in\M}}=\prob{i,j\in\M}=
\begin{cases}
\frac{m}{M} & i=j \\
\frac{m(m-1)}{M(M-1)} & i\neq j
\end{cases} \\
=&\frac{m(m-1)}{M(M-1)}+\left(\frac{m}{M}-\frac{m(m-1)}{M(M-1)}\right)\ind{i=j}=\frac{m(m-1)}{M(M-1)}+\frac{m(M-m)}{M(M-1)}\ind{i=j}
\eals
Where at the end, we wrote it using an indicator, for the case they are equal.
Returning to the bound, we get:
\als
&\Exp{\normsq{\frac{1}{m}\sum_{i\in\M}\dfi{x}-\df{x}}} \\
=&\frac{1}{m^2}\sum_{i=1}^M\sum_{j=1}^M\left(\frac{m(m-1)}{M(M-1)}+\frac{m(M-m)}{M(M-1)}\ind{i=j}\right)\dotprod{\dfi{x}-\df{x}}{\dfj{x}-\df{x}} \\
=&\frac{m-1}{mM(M-1)}\sum_{i=1}^M\sum_{j=1}^M\dotprod{\dfi{x}-\df{x}}{\dfj{x}-\df{x}}+\frac{M-m}{mM(M-1)}\sum_{i=1}^M\normsq{\dfi{x}-\df{x}} \\
=&\frac{M-m}{m(M-1)}\frac{1}{M}\sum_{i=1}^M\normsq{\dfi{x}-\df{x}}\leq\frac{M-m}{M-1}\frac{\xi^2}{m}
\eals
Where at first we inputted the expectation of the indicators, then separated the sum into two, then we see that the first sum is a multiplication of two independent sums, that are both equal to 0, since $\df{x}=\frac{1}{M}\sum_{i=1}^M\dfi{x}$, and finally we use the definition of $\xi$.
In total we got that:
\als
\Exp{\normsq{g(x)-\df{x}}}\leq\frac{1}{m}\left(\sigma^2+\frac{M-m}{M-1}\xi^2\right)
\eals

For the second part, we will use similar steps, but with the bounded smoothness variance:
\als
&\Exp{\normsq{(g(x)-\df{x})-(g(y)-\df{y})}}=\Exp{\normsq{\frac{1}{m}\sum_{i\in\M}\left((\dfi{x;z_i}-\df{x})-(\dfi{y;z_i}-\df{y})\right)}} \\
=&\normsq{\frac{1}{m}\sum_{i\in\M}\left((\dfi{x}-\df{x})-(\dfi{y}-\df{y})\right)} \\
+&\Exp{\normsq{\frac{1}{m}\sum_{i\in\M}\left((\dfi{x;z_i}-\dfi{x})-(\dfi{y;z_i}-\dfi{y})\right)}}
\eals
Where we first used the definitions of $g(x),g(y)$ and the variance rule, the same steps as before.
Also as before, we will bound the second term using the smoothness variance bound:
\als
&\Exp{\normsq{\frac{1}{m}\sum_{i\in\M}\left((\dfi{x;z_i}-\dfi{x})-(\dfi{y;z_i}-\dfi{y})\right)}} \\
=&\frac{1}{m^2}\sum_{i\in\M}\Exp{\normsq{\left((\dfi{x;z_i}-\dfi{x})-(\dfi{y;z_i}-\dfi{y})\right)}} \\
\leq&\frac{1}{m^2}\sum_{i\in\M}\sigmal^2\normsq{x-y}=\frac{\sigmal^2}{m}\normsq{x-y}
\eals
The first term is the difficult one, so we will use indicators, as before.
For simplicity of the writing, we will define $\Delta_i=(\dfi{x}-\df{x})-(\dfi{y}-\df{y})$
\als
\normsq{\frac{1}{m}\sum_{i\in\M}\Delta_i}=\frac{1}{m^2}\normsq{\sum_{i=1}^M\ind{i\in\M}\Delta_i}=\frac{1}{m^2}\sum_{i=1}^M\sum_{j=1}^M\ind{i\in\M}\ind{j\in\M}\dotprod{\Delta_i}{\Delta_j}
\eals
And just as before, we will use expectation over $\M$:
\als
\Exp{\normsq{\frac{1}{m}\sum_{i\in\M}\Delta_i}}=&\frac{1}{m^2}\sum_{i=1}^M\sum_{j=1}^M\left(\frac{m(m-1)}{M(M-1)}+\frac{m(M-m)}{M(M-1)}\ind{i=j}\right)\dotprod{\Delta_i}{\Delta_j} \\
=&\frac{m-1}{mM(M-1)}\sum_{i=1}^M\sum_{j=1}^M\dotprod{\Delta_i}{\Delta_j}+\frac{M-m}{mM(M-1)}\sum_{i=1}^M\normsq{\Delta_i} \\
=&\frac{M-m}{m(M-1)}\frac{1}{M}\sum_{i=1}^M\normsq{\Delta_i}\leq\frac{M-m}{M-1}\frac{\xil^2}{m}\normsq{x-y}
\eals
Where we used the same steps, including the fact that $\sum_{i=1}^M\Delta_i=0$ and the definition of $\xil$.
In total we got that:
\als
\Exp{\normsq{(g(x)-\df{x})-(g(y)-\df{y})}}\leq&\frac{1}{m}\left(\sigmal^2+\frac{M-m}{M-1}\xil^2\right)\normsq{x-y}
\eals

\subsection{Proof of \Cref{lem:div_gauss}}
\label{proof:div_gauss}
Let us calculate $\diver{P}{Q}$ by definition.
\als
\diver{P}{Q}=&\frac{1}{\alpha-1}\log\left(\condExp{\left(\frac{P(X)}{Q(X)}\right)^{\alpha-1}}{X\sim P}\right)=\frac{1}{\alpha-1}\log\left(\condExp{e^{(\alpha-1)\frac{1}{2\sigma^2}\left(\normsq{X-\mu-\Delta}-\normsq{X-\mu}\right)}}{X\sim P}\right) \\
=&\frac{1}{\alpha-1}\log\left(\condExp{e^{\frac{\alpha-1}{2\sigma^2}\left(\normsq{\Delta}-2\dotprod{\Delta}{X-\mu}\right)}}{X\sim P}\right)=\frac{1}{\alpha-1}\log\left(e^{\frac{\alpha-1}{2\sigma^2}\normsq{\Delta}+\frac{1}{2}\sigma^2\frac{(\alpha-1)^2}{4\sigma^4}4\normsq{\Delta}}\right) \\
=&\frac{\normsq{\Delta}}{2\sigma^2}+\frac{(\alpha-1)\normsq{\Delta}}{2\sigma^2}=\frac{\alpha\normsq{\Delta}}{2\sigma^2}
\eals
The first equality is the definition of the R{\'{e}}nyi divergence, in the second we input the values of $P$ and $Q$, then we open the norm, and then we calculate the expectation using the moment generating function of a Gaussian random vector $\ExpB{e^{\dotprod{a}{X}}}=e^{\dotprod{\mu}{a}+\frac{1}{2}\sigma^2\normsq{a}}$, and finally the log and the exponent cancel each other, and we fix things up.

\subsection{Proof of \Cref{lem:dp_rdp}}
\label{proof:dp_rdp}
We will start with the first part.
To do it, we will show another property of the R{\'{e}}nyi divergence.
Holder's inequality states that for any $p,q\geq1$ such that $1/p+1/q=1$, we get $\norm{fg}_1\leq\norm{f}_p\norm{g}_q$.
We will pick $p=\alpha, q=\frac{\alpha}{\alpha-1}, f=\frac{P}{Q^{1/q}}, g=Q^{1/q}$, and the norm to be on an arbitrary event $A$, and then:
\als
P(A)=\int_AP(x)dx\leq\left(\int_A\frac{(P(x))^\alpha}{(Q(x))^{\alpha-1}}dx\right)^{\frac{1}{\alpha}}\left(\int_AQ(x)dx\right)^{\frac{\alpha-1}{\alpha}}\leq\left(e^{\diver{P}{Q}}Q(A)\right)^{\frac{\alpha-1}{\alpha}}\leq\left(e^\epsilon Q(A)\right)^{\frac{\alpha-1}{\alpha}}
\eals
The equality is the definition of the probability of the event, the inequality is from Holder's, next we expand the integral over everything to get the R{\'{e}}nyi divergence, and finally we bound the R{\'{e}}nyi divergence by $\epsilon$.
We pick the event $A$ to be $A=\mO$, and then:
\als
\prob{\A(\S)=\mO}\leq\left(e^\epsilon\prob{\A(\S')=\mO}\right)^{1-\frac{1}{\alpha}}
\eals
If $\left(e^\epsilon\prob{\A(\S')=\mO}\right)^{1-\frac{1}{\alpha}}\leq\delta$ then $\prob{\A(\S)=\mO}\leq\delta$.
If $\left(e^\epsilon\prob{\A(\S')=\mO}\right)^{1-\frac{1}{\alpha}}>\delta$ then $\left(e^\epsilon\prob{\A(\S')=\mO}\right)^{-\frac{1}{\alpha}}<\delta^{-\frac{1}{\alpha-1}}=e^{\frac{\log\left(\frac{1}{\delta}\right)}{\alpha-1}}$, and then:
\als
\prob{\A(\S)=\mO}\leq e^{\epsilon+\frac{\log\left(\frac{1}{\delta}\right)}{\alpha-1}}\prob{\A(\S')=\mO}
\eals
If we combine both cases, we showed that:
\als
\prob{\A(\S)=\mO}\leq\Max{e^{\epsilon+\frac{\log\left(\frac{1}{\delta}\right)}{\alpha-1}}\prob{\A(\S')=\mO},\delta}\leq e^{\epsilon+\frac{\log\left(\frac{1}{\delta}\right)}{\alpha-1}}\prob{\A(\S')=\mO}+\delta
\eals
Now for the second part.
Since $\A$ is $\left(\alpha,\frac{\alpha\rho^2}{2}\right)$-RDP for every $\alpha>1$, then it is also $\left(\frac{\alpha\rho^2}{2}+\frac{\log\left(\frac{1}{\delta}\right)}{\alpha-1},\delta\right)$-DP, for every $\alpha>1, \delta\in(0,1)$.
We will pick the value that minimize this expression $\alpha=1+\frac{1}{\rho}\sqrt{2\log\left(\frac{1}{\delta}\right)}$, and get that for every $\delta\in(0,1)$, $\A$ is $\left(\frac{\rho^2}{2}+\rho\sqrt{2\log\left(\frac{1}{\delta}\right)},\delta\right)$-DP.

\section{Proofs of \Cref{sec:mech}}

\subsection{Proof of \Cref{eq:qUp}}
\label{proof:qUp}
We will start with \Cref{eq:STORM}, $\beta_t=1-\frac{\alpha_{t-1}}{\alpha_t}$ and $q_t=\alpha_td_t$.
\als
q_t=&\alpha_td_t=\alpha_t\left(\df{x_t;z_t}+(1-\beta_t)(d_{t-1}-\df{x_{t-1};z_t})\right)=\alpha_t\left(\df{x_t;z_t}+\frac{\alpha_{t-1}}{\alpha_t}(d_{t-1}-\df{x_{t-1};z_t})\right) \\
=&\alpha_td_t+\alpha_{t-1}(d_{t-1}-\df{x_{t-1};z_t})=q_{t-1}+\alpha_td_t-\alpha_{t-1}\df{x_{t-1};z_t}=q_{t-1}+s_t
\eals
Where at first we use the definition of $q_t$, then \Cref{eq:STORM}, then input our choice of $\beta_t$, open the brackets and finally reuse the definition of $q_{t-1}$ and the definition of $s_t$.

\section{Proofs of \Cref{sec:app}}

\subsection{Proof of \Cref{lem:qNoise}}
\label{proof:qNoise}
We will use a definitions of $\tq_t,q_t,\ts_t,s_t,\ts_{t,i},s_{t,i},Y_{t,i}$.
\als
\tq_t=&\sum_{\tau=1}^t\ts_\tau=\sum_{\tau=1}^t\frac{1}{m}\sum_{i\in\M_\tau}\ts_{\tau,i}=\sum_{\tau=1}^t\frac{1}{m}\sum_{i\in\M_\tau}(s_{\tau,i}+Y_{\tau,i}-Y_{\tau-1,i}) \\
=&\sum_{\tau=1}^t\frac{1}{m}\sum_{i\in\M_\tau}s_{\tau,i}+\sum_{\tau=1}^t\frac{1}{m}\sum_{i\in\M_\tau}(Y_{\tau,i}-Y_{\tau-1,i}) \\
=&\sum_{\tau=1}^ts_\tau+\frac{1}{m}\sum_{i=1}^M\sum_{\tau\leq t\&\tau\in\T_i}(Y_{\tau,i}-Y_{\tau-1,i})=q_t+\frac{1}{m}\sum_{i=1}^MY_{t,i}
\eals
At first we used $\tq_t=\sum_{\tau=1}^t\ts_\tau$, then $\ts_\tau=\frac{1}{m}\sum_{i\in\M_\tau}\ts_{\tau,i}$, then $\ts_{\tau,i}=s_{\tau,i}+Y_{\tau,i}-Y_{\tau-1,i}$, then we split the sums using the linearity of the summation.
For the first sum we used $s_{\tau,i}=\frac{1}{m}\sum_{i\in\M_\tau}s_{\tau,i}$ and then $q_t=\sum_{\tau=1}^ts_\tau$.
For the second sum, we switched the order of summation and introduced a new notation $\T_i=\left\{t|i\in\M_t\right\}$ to help us reverse $\M_\tau$.
Finally, while it is not very clear, the sum over $\tau$ is a telescopic sum, due to the definition of $Y_{\tau,i}$ as the last noise generated by machine $i$ at time-step $t$, and the fact that we sum over $\T_i$, and thus $\sum_{\tau\leq t\&\tau\in\T_i}(Y_{\tau,i}-Y_{\tau-1,i})=Y_{t,i}$.

\subsection{Proof of \Cref{lem:bound_s}}
\label{proof:bound_s}
Before we begin, we shall bound the difference between consecutive query points:
\als
x_t-x_{t-1}=x_t-\frac{\alpha_{1:t}x_t-\alpha_tw_t}{\alpha_{1:t-1}}=\frac{\alpha_t}{\alpha_{1:t-1}}(w_t-x_t)
\eals
Where the first equality is due to $\alpha_{1:t}x_t=\alpha_{1:t-1}x_{t-1}+\alpha_tw_t$.
Using the above enables to bound the following scaled difference:
\als
\alpha_{t-1}\norm{x_t-x_{t-1}}=\left(\frac{\alpha_{t-1}\alpha_t}{\alpha_{1:t-1}}\right)\norm{w_t-x_t}\leq2D
\eals
Where we use that fact that $\alpha_t=t$, which implies $\alpha_{t-1}\alpha_t=2\alpha_{1:t-1}$, as well as the bounded diameter assumption.

First part:
\als 
\norm{s_{t,i}}=&\norm{\alpha_tg_{t,i}-\alpha_{t-1}\tg_{t-1,i}}\leq(\alpha_t-\alpha_{t-1})\norm{g_{t,i}}+\alpha_{t-1}\norm{g_{t,i}-\tg_{t-1,i}} \\
=&(\alpha_t-\alpha_{t-1})\norm{\df{x_t;z_{t,i}}}+\alpha_{t-1}\norm{\df{x_t;z_{t,i}}-\df{x_{t-1};z_{t,i}}} \\
\leq&(\alpha_t-\alpha_{t-1})G+\alpha_{t-1}L\norm{x_t-x_{t-1}}\leq G+2LD:=S
\eals
The first equality is from the definition of $s_{t,i}$, then we use the triangle inequality, then explicitly employ the definitions of $g_{t,i},\tg_{t-1,i}$, next we use Lipschitz and smoothness, and finally, we employ the bound on $x_t-x_{t-1}$ and $\alpha_t=t$, and the definition of $S$.

Second part:
We will also define $g_t:=\frac{1}{m}\sum_{i\in\M}g_{t,i}$ and $\tg_t:=\frac{1}{m}\sum_{i\in\M}\tg_{t,i}$, so that $s_t=\alpha_tg_t-\alpha_{t-1}\tg_{t-1}$.
The proof will follow similarly to the proof of \Cref{lem:AVGBound}.
At first, we will fix $\M_t$:
\als
\Exp{\normsq{s_t-\bs_t}}=\Exp{\normsq{\frac{1}{m}\sum_{i\in\M_t}(s_{t,i}-\bs_t)}}=\normsq{\frac{1}{m}\sum_{i\in\M_t}(\bs_{t,i}-\bs_t)}+\Exp{\normsq{\frac{1}{m}\sum_{i\in\M_t}(s_{t,i}-\bs_{t,i})}}
\eals
This are the same steps as the proof of \Cref{lem:AVGBound}.
The second term is:
\als
\Exp{\normsq{\frac{1}{m}\sum_{i\in\M_t}(s_{t,i}-\bs_{t,i})}}=&\frac{1}{m^2}\sum_{i\in\M_t}\Exp{\normsq{s_{t,i}-\bs_{t,i}}}=\frac{1}{m^2}\sum_{i\in\M_t}\Exp{\normsq{\alpha_t(g_{t,i}-\bg_{t,i})-\alpha_{t-1}(\tg_{t-1,i}-\bg_{t-1,i})}} \\
\leq&\frac{1}{m^2}\sum_{i\in\M_t}\left((\alpha_t-\alpha_{t-1})\sqrt{\Exp{\normsq{g_{t,i}-\bg_{t,i}}}}+\alpha_{t-1}\sqrt{\Exp{\normsq{(g_{t,i}-\bg_{t,i})-(\tg_{t-1,i}-\bg_{t-1,i})}}}\right)^2 \\
\leq&\frac{1}{m^2}\sum_{i\in\M_t}\left((\alpha_t-\alpha_{t-1})\sigma+\alpha_{t-1}\sigmal\norm{x_t-x_{t-1}}\right)^2\leq\frac{(\sigma+2\sigmal D)^2}{m}=\frac{\tsigma^2}{m}
\eals
Where we used the fact the each $s_{t,i}-\bs_{t,i}$ is independent with zero mean, used the definitions of $s_{t,i},\bs_{t,i}$, then we use the inequality: 
$\Exp{\normsq{X+Y}}\leq\left(\sqrt{\Exp{\normsq{X}}}+\sqrt{\Exp{\normsq{Y}}}\right)^2$, which holds since:
\als
\Exp{\normsq{X+Y}}&=\Exp{\normsq{X}}+2\Exp{\dotprod{X}{Y}}+\Exp{\normsq{Y}} \\
&\leq\Exp{\normsq{X}}+2\sqrt{\Exp{\normsq{X}}\Exp{\normsq{Y}}}+\Exp{\normsq{Y}}=\left(\sqrt{\Exp{\normsq{X}}}+\sqrt{\Exp{\normsq{Y}}}\right)^2
\eals
And finally, we used \Cref{def:BoundVar,def:BoundSmoothVar}, the bounds from the first part, and the definition of $\tsigma=\sigma+2\sigmal D$.

The first term also follows similar steps to the proof of \Cref{lem:AVGBound}:
\als
\normsq{\frac{1}{m}\sum_{i\in\M_t}(\bs_{t,i}-\bs_t)}=\frac{1}{m^2}\sum_{i=1}^M\sum_{j=1}^M\ind{i\in\M_t}\ind{j\in\M_t}\dotprod{\bs_{t,i}-\bs_t}{\bs_{t,j}-\bs_t}
\eals
Taking the expectation over the randomization of $\M_t$ that is in the indicators, and following the same steps of the proof of \Cref{lem:AVGBound}, we get:
\als
&\Exp{\normsq{\frac{1}{m}\sum_{i\in\M_t}(\bs_{t,i}-\bs_t)}}=\frac{M-m}{m(M-1)}\frac{1}{M}\sum_{i=1}^M\normsq{\bs_{t,i}-\bs_t} \\
=&\frac{M-m}{m(M-1)}\frac{1}{M}\sum_{i=1}^M\normsq{\alpha_t(\bg_{t,i}-\bg_t)-\alpha_t(\bg_{t-1,i}-\bg_{t-1})} \\
\leq&\frac{M-m}{m(M-1)}\left((\alpha_t-\alpha_{t-1})\sqrt{\frac{1}{M}\sum_{i=1}^M\normsq{\bg_{t,i}-\bg_t}}+\alpha_{t-1}\sqrt{\frac{1}{M}\sum_{i=1}^M\normsq{(\bg_{t,i}-\bg_t)-(\bg_{t-1,i}-\bg_{t-1})}}\right)^2 \\
\leq&\frac{M-m}{m(M-1)}\left((\alpha_t-\alpha_{t-1})\xi+\alpha_{t-1}\xil\norm{x_t-x_{t-1}}\right)^2 \\
\leq&\frac{M-m}{M-1}\frac{(\xi+2\xil D)^2}{m}=\frac{M-m}{M-1}\frac{\txi^2}{m}
\eals
Where at first we used the steps of the proof of \Cref{lem:AVGBound}, then used the definitions of $\bs_{t,i},\bs_t$, then used $\Exp{\normsq{X+Y}}\leq\left(\sqrt{\Exp{\normsq{X}}}+\sqrt{\Exp{\normsq{Y}}}\right)^2$ with empirical mean instead of expectation, and finally used \Cref{def:BoundHet,def:BoundSmoothHet}, the bounds from the first part, and the definitions of $\txi=\xi+2\xil D$.

In total, we got:
\als
\Exp{\normsq{s_t-\bs_t}}=\frac{1}{m}\left(\tsigma^2+\frac{M-m}{M-1}\txi^2\right)
\eals

\subsection{Proof of \Cref{eq:espUp}}
\label{proof:espUp}
We will start with the definition of $\eps_t$ and the update rule of $q_t$:
\als
\eps_t=q_t-\alpha_t\bg_t=q_{t-1}+s_t-\alpha_t\bg_t=\eps_{t-1}+\alpha_{t-1}\bg_{t-1}+s_t-\alpha_t\bg_t=\eps_{t-1}+s_t-\bs_t
\eals
Where at first we use the definition of $\eps_t$, then \Cref{eq:qUp}, then the definition of $\eps_{t-1}$, and finally the definition of $\bs_t$.

\subsection{Proof of \Cref{lem:bound_eps}}
\label{proof:bound_eps}
Notice that $\eps_t$ is a Martingale sequence with a difference sequence of $s_t-\bs_t$.
It means that:
\als
\Exp{\normsq{\eps_t}}=\sum_{\tau=1}^t\Exp{\normsq{s_\tau-\bs_\tau}}\leq\sum_{\tau=1}^t\frac{1}{m}\left(\tsigma^2+\frac{M-m}{M-1}\txi^2\right)=\frac{t}{m}\left(\tsigma^2+\frac{M-m}{M-1}\txi^2\right)
\eals
Where at first we used \Cref{lem:martingale}, and then used \Cref{lem:bound_s}.
\section{Proofs of \Cref{sec:alg}}

\subsection{Proof of \Cref{thm:privacy}}
\label{proof:privacy}
Let us look at a single machine $i$.
Let $\S_i,\S_i'$ be \emph{neighboring datasets} of $|\T_i|$ samples which differ on a single data point; i.e.~assume there exists $\tau^*\in\T_i$ such that $\S_i:=\{z_{t_1,i},z_{t_2,i},\ldots,z_{\tau^*,i},\ldots,z_{t_{|\T_i|},i}\}$ and $\S_i':=\{z_{t_1,i},z_{t_2,i},\ldots,z_{\tau^*,i}',\ldots,z_{t_{|\T_i|},i}\}$, and $z_{\tau^*}\neq z_{\tau^*}'$.
\\\textbf{Notation:}
We will employ $s_{t,i}(\S_i),\ts_{t,i}(\S_i)$ to denote the resulting values of these quantities when we invoke our algorithm with the dataset $\S_i$, we will similarly denote $s_{t,i}(\S_i'),\ts_{t,i}(\S_i')$.
\\\textbf{Setup:}
At first let's bound the sensitivity of $s_{\tau^*,i}$  meaning the difference between $s_{\tau^*,i}(\S_i)$ and $s_{\tau^*,i}(\S_i')$ conditioned on the values of $\{x_\tau\}_{\tau=1}^{\tau^*}$.
The value of $x_{t+1}$ is calculated using $\ts_{t,i}$ and the other signals from the other machines, and the server sends $x_{t+1}$ to them afterwards.
Thus, we may assume that prior to the computation of $s_{t,i}$, we are given $\{x_\tau\}_{\tau=1}^t$.
Note that given the query history $x_1,\ldots,x_{\tau^*}$, and the dataset $\S_i$, $s_{\tau^*,i}(\S_i)$ is known (respectively $s_{\tau^*,i}(\S_i')$ is known given the dataset $\S_i'$ and the query history).
Also note that $s_{t,i}$ is calculated using only $x_t,x_{t-1},z_{t,i}$, and thus for all $t\neq\tau^*$ we get $s_{t,i}(\S_i)=s_{t,i}(\S_i')$, thus we only need the sensitivity of $s_{\tau^*,i}$ (but for all possible values of $\tau^*,i$).
\\\textbf{Sensitivity:}
Since $\norm{s_{\tau^*,i}}\leq S$ (\Cref{lem:bound_s}), then $\norm{s_{\tau^*,i}(\S_i)-s_{\tau^*,i}(\S_i')}\leq\norm{s_{\tau^*,i}(\S_i)}+\norm{s_{\tau^*,i}(\S_i')}\leq2S$.
\\\textbf{Bounding the Divergence:}
Here we will bound the R{\'{e}}nyi Divergence of the sequence $\{\ts_{t,i}\}_{t\in\T_i}$.
Note that $\ts_{t,i}=s_{t,i}+y_{t,i}-Y_{t-1,i}$, and that $\forall t\in\T_i$, we get that $\sum_{\tau\leq t\&\tau\in\T_i}\ts_{\tau,i}=\sum_{\tau\leq t\&\tau\in\T_i}s_{\tau,i}+y_{t,i}$.
It means that:
\als
\diver{\{\ts_{t,i}(\S_i)\}_{t\in\T_i}}{\{\ts_{t,i}(\S_i')\}_{t\in\T_i}}=&\diver{\left\{\sum_{\tau\leq t\&\tau\in\T_i}\ts_{\tau,i}(\S_i)\right\}_{t\in\T_i}}{\left\{\sum_{\tau\leq t\&\tau\in\T_i}\ts_{\tau,i}(\S_i')\right\}_{t\in\T_i}} \\
\leq&\sum_{t\in\T_i}\diver{\sum_{\tau\leq t\&\tau\in\T_i}\ts_{\tau,i}(\S_i)}{\sum_{\tau\leq t\&\tau\in\T_i}\ts_{\tau,i}(\S_i')} \\
=&\sum_{t\in\T_i}\frac{\alpha\normsq{\sum_{\tau\leq t\&\tau\in\T_i}s_{\tau,i}(\S_i)-\sum_{\tau\leq t\&\tau\in\T_i}s_{\tau,i}(\S_i')}}{2\sigma_t^2} \\
=&\alpha\sum_{t\in\T_i\& t\geq\tau^*}\frac{\normsq{s_{\tau^*,i}(\S_i)-s_{\tau^*,i}(\S_i')}}{2\sigma_t^2}\leq2\alpha S^2\sum_{t\in\T_i}\frac{1}{\sigma_t^2}
\eals
Where at first we use the post-processing property (\Cref{lem:post_proc}), then to composition rule (\Cref{lem:rdp_copm}), then we use the result of the Gaussian mechanism (\Cref{lem:div_gauss}), then use the fact that $s_{\tau,i}(\S_i)=s_{\tau,i}(\S_i')$ for all $\tau\neq\tau^*$, and finally we bound each difference by $2S$ and sum more terms.
We got that our algorithm is $2S^2\sum_{t\in\T_i}\frac{1}{\sigma_t^2}$-zCDP, meaning it is $\frac{\rho_i^2}{2}$-zCDP if $\rho_i=2S\sqrt{\sum_{t\in\T_i}\frac{1}{\sigma_t^2}}$.

\subsection{Proof of \Cref{thm:converge}}
\label{proof:converge}
Start by bounding the excess loss $\R_t$:
\als
\alpha_{1:t}\R_t=&\alpha_{1:t}\left(\ExpB{\f{x_t}}-\f{x^*}\right)\leq\sum_{\tau=1}^t\ExpB{\alpha_\tau\dotprod{\df{x_\tau}}{w_\tau-x^*}}=\sum_{\tau=1}^t\Exp{\dotprod{\tq_\tau-\eps_\tau-Y_\tau}{w_\tau-x^*}} \\
=&\sum_{\tau=1}^t\Exp{\dotprod{\tq_\tau}{w_\tau-x^*}}+\sum_{\tau=1}^t\Exp{\dotprod{\eps_\tau+Y_\tau}{x^*-w_\tau}} \\
=&\sum_{\tau=1}^t\Exp{\dotprod{\tq_\tau}{w_{\tau+1}-x^*}}+\sum_{\tau=1}^t\Exp{\dotprod{\tq_\tau}{w_\tau-w_{\tau+1}}}+\sum_{\tau=1}^t\Exp{\dotprod{\eps_\tau+Y_\tau}{x^*-w_\tau}} \\
\leq&\frac{D^2}{2\eta}-\frac{1}{2\eta}\sum_{\tau=1}^t\Exp{\normsq{w_\tau-w_{\tau+1}}}+\sum_{\tau=1}^t\Exp{\dotprod{\tq_\tau}{w_\tau-w_{\tau+1}}}+\sum_{\tau=1}^t\Exp{\dotprod{\eps_\tau+Y_\tau}{x^*-w_\tau}} \\
\leq&\frac{D^2}{2\eta}+\sum_{\tau=1}^t\alpha_\tau\Exp{\dotprod{\df{x_\tau}-\df{x^*}}{w_\tau-w_{\tau+1}}-\frac{1}{2\eta}\normsq{w_\tau-w_{\tau+1}}} \\
+&\underset{\rA}{\underbrace{\sum_{\tau=1}^t\Exp{\dotprod{\alpha_\tau\df{x^*}}{w_\tau-w_{\tau+1}}}}}+\underset{\rB}{\underbrace{\sum_{\tau=1}^t\Exp{\dotprod{\eps_\tau}{x^*-w_{\tau+1}}}}}+\underset{\rC}{\underbrace{\sum_{\tau=1}^t\Exp{\dotprod{Y_\tau}{x^*-w_{\tau+1}}}}}
\eals

The first inequality is the anytime theorem (\Cref{thm:anytime}), then we use $\tq_t=q_t+\eps_t+Y_t$, split it into two sums, split the first sum using $w_{t+1}$, bound the first sum using \Cref{lem:OGD+_Guarantees}, and finally we add and subtract $\df{x^*}$ and reuse the definition of $\tq_t$ and add it to the other sums.
We will now bound the bottom terms:
\\\textbf{Bounding $\rA$}:
This term can be written as follows:
\als
\rA:=&\sum_{\tau=1}^t\Exp{\dotprod{\alpha_\tau\df{x^*}}{w_\tau-w_{\tau+1}}}=\sum_{\tau=1}^t(\alpha_\tau-\alpha_{\tau-1})\Exp{\dotprod{\df{x^*}}{w_\tau}}-\alpha_t\Exp{\dotprod{\df{x^*}}{w_{t+1}}} \\
=&\sum_{\tau=1}^t(\alpha_\tau-\alpha_{\tau-1})\Exp{\dotprod{\df{x^*}}{w_\tau-w_{t+1}}}\leq\sum_{\tau=1}^t(\alpha_\tau-\alpha_{\tau-1})\norm{\df{x^*}}\ExpB{\norm{w_\tau-w_{t+1}}} \\
\leq&D\norm{\df{x^*}}\sum_{\tau=1}^t(\alpha_\tau-\alpha_{\tau-1})=\alpha_t D\norm{\df{x^*}}=\alpha_t D G^*
\eals
The first equality is rearrangement of the sum while defining $\alpha_0=0$, then we put the last term into the sum, and use Cauchy-Schwartz.
Finally, we use the diameter bound and telescope the sum.
Note that we use the definition $G^*:=\norm{\df{x^*}}$, and that $G^*\in[0,G]$.
\\\textbf{Bounding $\rB$}:
This term is bounded as follows:
\als
\rB:=&\sum_{\tau=1}^t\Exp{\dotprod{\eps_\tau}{x^*-w_{\tau+1}}}\leq\sum_{\tau=1}^t\ExpB{\norm{\eps_\tau}\cdot\norm{x^*-w_{\tau+1}}} \\
\leq&D\sum_{\tau=1}^t\sqrt{\Exp{\normsq{\eps_\tau}}}\leq D\sum_{\tau=1}^t\sqrt{\frac{\left(\tsigma^2+\txi^2\right)\tau}{m}}\leq\frac{D\sqrt{\tsigma^2+\txi^2}t^{1.5}}{\sqrt{m}}
\eals
The first inequality is Cauchy-Schwartz, and the second is using the bounded diameter and Jensen's inequality w.r.t.~concave function $\sqrt{x}$, then we use \Cref{lem:bound_eps}, and finally increase $\tau$ to $t$.
\\\textbf{Bounding $\rC$}:
This term is bounded as follows:
\als
\rC:=&\sum_{\tau=1}^t\Exp{\dotprod{Y_\tau}{x^*-w_{\tau+1}}}=\sum_{\tau=1}^t\frac{1}{m}\sum_{i=1}^M\Exp{\dotprod{Y_{\tau,i}}{x^*-w_{\tau+1}}} \\
=&\frac{1}{m}\sum_{i=1}^M\sum_{\tau=1}^t\sum_{s=1}^\tau p(1-p)^{\tau-s}\Exp{\dotprod{y_{s,i}}{x^*-w_{\tau+1}}}=\frac{p}{m}\sum_{i=1}^M\sum_{\tau=1}^t\sum_{s=1}^\tau(1-p)^{\tau-s}\Exp{\dotprod{y_{s,i}}{w_s-w_{\tau+1}}} \\
=&\frac{p}{m}\sum_{i=1}^M\sum_{\tau=1}^t\sum_{s=1}^\tau(1-p)^{\tau-s}\sum_{r=s}^\tau\Exp{\dotprod{y_{s,i}}{w_r-w_{r+1}}} \\
=&\frac{p}{m}\sum_{i=1}^M\sum_{r=1}^t\sum_{s=1}^r(1-p)^{r-s}\Exp{\dotprod{y_{s,i}}{w_r-w_{r+1}}}\sum_{\tau=r}^t(1-p)^{\tau-r} \\
\leq&\frac{1}{m}\sum_{i=1}^M\sum_{r=1}^t\sum_{s=1}^r(1-p)^{r-s}\Exp{\dotprod{y_{s,i}}{w_r-w_{r+1}}}=\sum_{r=1}^t\Exp{\dotprod{\left(\frac{1}{m}\sum_{i=1}^M\sum_{s=1}^r(1-p)^{r-s}y_{s,i}\right)}{w_r-w_{r+1}}} \\
\leq&\frac{1}{4\eta}\sum_{r=1}^t\Exp{\normsq{w_r-w_{r+1}}}+\eta\sum_{r=1}^t\Exp{\normsq{\frac{1}{m}\sum_{i=1}^M\sum_{s=1}^r(1-p)^{r-s}y_{s,i}}} 
\eals
At first we write $Y_\tau$ as shown in \Cref{lem:qNoise}, then we know that $Y_{\tau,i}$ is actually $y_{s,i}$, where $s$ is a geometric random variable that starts at $\tau$ downwards with probability $p=\frac{m}{M}$.
$y_{s,i}$ is independent of everything up to the time step $s$, so we replace $x^*$ with $w_s$, divide $w_s-w_{\tau+1}$ into a sum of differences, then rearrange the order of summation, and the sum in $\tau$ is a geometric sum bounded by $\frac{1}{p}$.
We put the sum of $r$ outside and use Cauchy-Schwartz with $\frac{1}{4\eta}$ and $\eta$ to split it into two sums.
We will now bound the second sum:
\als
&\eta\sum_{r=1}^t\Exp{\normsq{\frac{1}{m}\sum_{i=1}^M\sum_{s=1}^r(1-p)^{r-s}y_{s,i}}}\leq\frac{\eta}{m^2}\sum_{r=1}^t\sum_{i=1}^M\sum_{s=1}^r(1-p)^{2(r-s)}\Exp{\normsq{y_{s,i}}} \\
=&\frac{\eta}{m^2}\sum_{i=1}^M\sum_{s=1}^t\Exp{\normsq{y_{s,i}}}\sum_{r=s}^t(1-p)^{2(r-s)}\leq\frac{\eta}{m^2p(2-p)}\sum_{i=1}^M\sum_{s=1}^t\Exp{\normsq{y_{s,i}}} \\
=&\frac{\eta}{m^2p(2-p)}\sum_{i=1}^M\sum_{s=1}^t\frac{4S^2d\left(1+\log T\right)}{\rho^2}\ExpB{N_{s,i}}=\frac{4S^2\eta d\left(1+\log T\right)}{\rho^2m^2p(2-p)}\sum_{i=1}^M\sum_{s=1}^t\left(1+p(s-1)\right) \\
=&\frac{4S^2\eta Md\left(1+\log T\right)}{\rho^2m^2p(2-p)}\left(t+p\frac{t(t-1)}{2}\right)=\frac{2S^2\eta d\left(1+\log T\right)}{\rho^2mp^2(2-p)}\left(pt^2+(2-p)t\right)\leq\frac{2S^2\eta td\left(1+\log T\right)}{\rho^2mp^2}\left(pt+1\right)
\eals
Each $y_{s,i}$ is independent of each other and with zero mean, so the variance of the sum is the sum of variances, with the factors squared.
We rearrange the summation order and the sum of $r$ is a geometric sum that is bounded by $\frac{1}{p(2-p)}$, and then we use the variance of $y_{s,i}$, which is proportional to the number of time steps of machine $i$ played up to the time step $s$, which we call $N_{s,i}$.
$N_{s,i}-1$ is binomial random variables with $s-1$ trials with probability $p$, so we input the expectation of it, then we solve both remaining sums.
We input $M=\frac{m}{p}$, and finally we cancel the two $2-p$ and bound it from below by 1.

In total, we get:
\als
\alpha_{1:t}\R_t\leq&\sum_{\tau=1}^t\alpha_\tau\Exp{\dotprod{\df{x_\tau}-\df{x^*}}{w_\tau-w_{\tau+1}}-\frac{1}{4\eta}\normsq{w_\tau-w_{\tau+1}}} \\
+&\frac{D^2}{2\eta}+\alpha_tDG^*+\frac{D\sqrt{\tsigma^2+\txi^2} t^{1.5}}{\sqrt{m}}+\frac{2S^2\eta td\left(1+\log T\right)}{\rho^2mp^2}\left(pt+1\right) \\
\leq&\eta\sum_{\tau=1}^t\alpha_\tau^2\Exp{\normsq{\df{x_\tau}-\df{x^*}}}+\frac{D^2}{2\eta}+\alpha_tDG^*+\frac{D\sqrt{\tsigma^2+\txi^2} t^{1.5}}{\sqrt{m}}+\frac{2S^2\eta td\left(1+\log T\right)}{\rho^2mp^2}\left(pt+1\right) \\
\leq&4\eta L\sum_{\tau=1}^t\alpha_{1:\tau}\R_\tau+\frac{D^2}{2\eta}+\alpha_tDG^*+\frac{D\sqrt{\tsigma^2+\txi^2} t^{1.5}}{\sqrt{m}}+\frac{2S^2\eta td\left(1+\log T\right)}{\rho^2mp}\left(t+\frac{1}{p}\right)
\eals
Where at first we inputted what we got, used Cauchy-Schwartz, and finally bounded $\alpha_\tau^2\leq2\alpha_{1:\tau}$ and $\Exp{\normsq{\df{x_\tau}-\df{x^*}}}\leq2L\R_\tau$ (\Cref{lem:smooth}).
If we enforce $\eta\leq\frac{1}{8LT}$ we can use \Cref{lem:sum} to get:
\als
\alpha_{1:T}\R_T\leq\frac{D^2}{\eta}+2\alpha_TDG^*+\frac{2D\sqrt{\tsigma^2+\txi^2} T^{1.5}}{\sqrt{m}}+\frac{4S^2\eta Td\left(1+\log T\right)}{\rho^2mp}\left(T+\frac{1}{p}\right)
\eals
Then after dividing by $\alpha_{1:T}$ and using $\frac{1}{p}=\frac{M}{m}\leq T$, we get:
\als
\R_T\leq\frac{2D^2}{\eta T^2}+\frac{4DG^*}{T}+\frac{4D\sqrt{\tsigma^2+\txi^2}}{\sqrt{mT}}+\frac{16S^2\eta Md\left(1+\log T\right)}{\rho^2m^2}
\eals
Picking $\eta=\Min{\frac{\rho Dm}{2ST\sqrt{2Md\left(1+\log T\right)}},\frac{1}{8LT}}$, we get:
\als
\R_T\leq\frac{4D(G^*+4LD)}{T}+\frac{4D\sqrt{\tsigma^2+\txi^2}}{\sqrt{mT}}+\frac{8DS\sqrt{2Md\left(1+\log T\right)}}{\rho mT}
\eals

\section{Trusted Server}
\label{sec:trust}

The algorithm for the trusted server case is almost the same as the one in \cite{dp_mu2_fl}.
The only difference is that we use $m$ machines instead of $M$ at each time-step, and that we let the server hold $q_t$, instead of each machine holding its own $q_{t,i}$.
\Cref{alg:trust} depicts our approach.

\begin{algorithm}[t]
\begin{algorithmic}
\caption{\alg for Trusted Server}\label{alg:trust}
\STATE \textbf{Inputs:} \#iterations $T$, \#machines in total $M$, \#machines per step $m$, initial point $x_0$, learning rate $\eta>0$, importance weights $\{\alpha_t>0\}$, noise distributions $\left\{P_t=\Ncal(0,I\sigma_t^2)\right\}$, per-machine $i\in[M]$ a dataset of samples $\S_i=\{z_{1,i},\ldots,z_{T,i}\}$
\STATE \textbf{Initialize:} set $w_1=x_1=x_0$, and $q_0=0$
\FOR{$t=1,\ldots,T$}
\STATE Choose $\M_t\subseteq M$ with $|\M_t|=m$
\FOR{every Machine~$i\in\M_t$}
\STATE \textbf{Actions of Machine $i$:}
\STATE Retrieve $z_{t,i}$ from $\S_i$, compute $g_{t,i}=\df{x_t;z_{t,i}}$, and $\tg_{t-1,i}=\df{x_{t-1};z_{t,i}}$
\STATE Update $s_{t,i}=\alpha_tg_{t,i}-\alpha_{t-1}\tg_{t-1,i}$
\ENDFOR
\STATE \textbf{Actions of Server:}
\STATE Aggregate $s_t=\frac{1}{m}\sum_{i\in\M_t}s_{t,i}$
\STATE Update $q_t=q_{t-1}+s_t$
\STATE Draw $Y_t\sim\Ncal\left(0,I\sigma_t^2\right)$
\STATE Update $\tq_t=q_t+Y_t$
\STATE Update $w_{t+1}=\Pi_\K(w_t-\eta\tq_t)$
\STATE Update $x_{t+1}=\left(1-\frac{\alpha_{t+1}}{\alpha_{1:t+1}}\right)x_t+\frac{\alpha_{t+1}}{\alpha_{1:t+1}}w_{t+1}$
\ENDFOR
\STATE \textbf{Output:} $x_T$
\end{algorithmic}
\end{algorithm}

\subsection{Privacy Guarantees}
Here we establish the privacy guarantees of \Cref{alg:alg}.
Concretely, the following theorem shows how does the privacy of our algorithm depends on the variances of injected noise $\{y_{t,i}\}_t$.
\begin{theorem}\label{thm:privacyT} 
Let $\K\subset\reals^d$ be a convex set of diameter $D$, and $\{\fii{\cdot;z}\}_{z\in\Z_i}$ be a family of convex $G$-Lipschitz and $L$-smooth functions, and $SD=G+2LD$.
Then invoking \Cref{alg:trust} with number of participating machines $|\M_t|=m$, noise distributions $Y_t\sim P_{t,i}=\Ncal\left(0,I\sigma_t^2\right)$, and any learning rate $\eta>0$, the resulting sequences $\{x_t\}_t$ is $\frac{\rho^2}{2}$-zCDP, where:
$\rho=\frac{2S}{m}\sqrt{\sum_{t=1}^T\frac{1}{\sigma_t^2}}$.
\end{theorem}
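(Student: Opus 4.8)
\emph{Proof plan.} The plan is to mirror the proof of \Cref{thm:privacy}, which becomes even more direct here because the server injects a fresh, independent noise $Y_t$ at every round. Fix a target machine $i^*\in[M]$ and a target round $\tau^*\in[T]$, and let $\S=\{\S_i\}_i$ and $\S'=\{\S_i'\}_i$ be neighboring datasets that agree on every sample except $z_{\tau^*,i^*}\neq z_{\tau^*,i^*}'$ (since every sample occupies a unique machine-and-round slot, this captures the general neighboring pair). The published object is the sequence $\{x_t\}_{t=1}^T$, which is a \emph{deterministic} function of $\{\tq_t\}_{t=1}^T$: starting from $w_1=x_0$ one recovers $w_{t+1}=\Pi_\K(w_t-\eta\tq_t)$ and then $x_{t+1}$ as the stated affine combination of $x_t$ and $w_{t+1}$. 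Hence, by the post-processing property (\Cref{lem:post_proc}), it suffices to bound $\diver{\{\tq_t(\S)\}_t}{\{\tq_t(\S')\}_t}$. Moreover, from the update rule $q_t=q_{t-1}+s_t$ with $q_0=0$ we get $q_t=\sum_{\tau=1}^t s_\tau$, so $\tq_t=\sum_{\tau=1}^t s_\tau+Y_t$; that is, the sequence we must bound is precisely $\big\{\sum_{\tau\le t}s_\tau+Y_t\big\}_{t=1}^T$.

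I would then apply the adaptive composition rule (\Cref{lem:rdp_copm}) to this sequence, treating the release at round $t$ as a mechanism that reads the prefix $\tq_{1:t-1}$. The key point---identical to the one used in \Cref{thm:privacy}---is that the prefix $\tq_{1:t-1}$ already determines the query points $x_1,\dots,x_t$, so conditioned on it every $s_\tau$ with $\tau\le t$ is a \emph{deterministic} function of the data, and since $Y_t$ is independent of $Y_1,\dots,Y_{t-1}$ the conditional law of the $t$-th release is $\Ncal\big(\sum_{\tau\le t}s_\tau,I\sigma_t^2\big)$ with a data-determined mean. By \Cref{lem:div_gauss} the round-$t$ divergence is therefore at most $\tfrac{\alpha}{2\sigma_t^2}\big\|\sum_{\tau\le t}s_\tau(\S)-\sum_{\tau\le t}s_\tau(\S')\big\|^2$.

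It remains to bound that mean shift. Because $s_\tau$ only reads $x_\tau,x_{\tau-1}$ and $\{z_{\tau,i}\}_{i\in\M_\tau}$, and $\S,\S'$ differ only at $z_{\tau^*,i^*}$, we have $s_\tau(\S)=s_\tau(\S')$ for all $\tau\neq\tau^*$; and at $\tau^*$, using $\norm{s_{\tau^*,i^*}}\le S$ from \Cref{lem:bound_s} together with the $\tfrac1m$ averaging over $\M_{\tau^*}$, $\big\|s_{\tau^*}(\S)-s_{\tau^*}(\S')\big\|\le\tfrac1m\big(\norm{s_{\tau^*,i^*}(\S)}+\norm{s_{\tau^*,i^*}(\S')}\big)\le\tfrac{2S}{m}$. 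Thus the round-$t$ mean shift is $\tfrac{2S}{m}$ when $t\ge\tau^*$ and $0$ otherwise, giving a round-$t$ contribution of at most $\tfrac{\alpha}{2\sigma_t^2}\big(\tfrac{2S}{m}\big)^2$. Summing over $t$ (and over-counting the rounds before $\tau^*$) yields $\diver{\{x_t(\S)\}}{\{x_t(\S')\}}\le\alpha\cdot\tfrac{2S^2}{m^2}\sum_{t=1}^T\tfrac1{\sigma_t^2}$ for every $\alpha>1$, i.e.\ $\tfrac{\rho^2}{2}$-zCDP with $\rho=\tfrac{2S}{m}\sqrt{\sum_{t=1}^T 1/\sigma_t^2}$. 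The only delicate step---again as in \Cref{thm:privacy}---is justifying the adaptive composition, namely that conditioning on the released prefix legitimately freezes $x_{1:t}$ so the $s_\tau$ become data-deterministic; everything else is the sensitivity bookkeeping just described.
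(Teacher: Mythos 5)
Your proposal is correct and follows essentially the same route as the paper's own proof: post-processing from $\{x_t\}_t$ to $\{\tq_t\}_t$ (\Cref{lem:post_proc}), adaptive composition over rounds (\Cref{lem:rdp_copm}), the Gaussian bound of \Cref{lem:div_gauss} per round, and the per-round sensitivity $\frac{2S}{m}$ coming from $\norm{s_{\tau^*,i^*}}\leq S$ (\Cref{lem:bound_s}) and the $\tfrac{1}{m}$ averaging. Your extra care in spelling out why conditioning on the released prefix freezes $x_{1:t}$ and makes the round-$t$ release a Gaussian mechanism is exactly the justification implicit in the paper's sketch, so no further changes are needed.
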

The proof is the same as the one in \cite{dp_mu2_fl}.
\begin{proof}[Proof Sketch]
First, assume that $\S_i$ and $\S_i'$ are neighboring datasets, meaning that there exists only a single time-step $\tau^*\in\T_i$ where they differ, i.e.~that $z_{\tau^*,i}\neq z_{\tau^*,i}'$.

Then, we use the post-processing property of privacy (\Cref{lem:post_proc}) to say that the privacy of $\left\{x_t\right\}_t$ is bounded by the privacy of $\left\{\tq_t\right\}_t$.
Using the composition rule (\Cref{lem:rdp_copm}), we bound the privacy of them with the sum of the privacy of each individual member.
We may use~\Cref{lem:div_gauss} and obtain that $\tq_t$ is $\frac{\Delta_t^2}{2\sigma_t^2}$-zCDP.
Using the bound $\norm{s_{\tau,i}}\leq S$, from~\Cref{lem:bound_s}, we show that $\Delta_t\leq\frac{2S}{m}$.
Thus, we are $\frac{2S^2}{m^2\sigma_t^2}$-zCDP.
Using the above together, we get that $\left\{x_t\right\}_t$ is $\frac{2S^2}{m^2}\sum_{t=1}^T\frac{1}{\sigma_t^2}$-zCDP.
\end{proof}

\subsection{Convergence Guarantees}
Here we establish the convergence guarantees of \Cref{alg:trust}.
\begin{theorem}\label{thm:convergeT}
Let $\K\subset\reals^d$ be a convex set of diameter $D$ and $\{f_i(\cdot;z)\}_{i\in[M],z\in\Z_i}$ be a family of $G$-Lipschitz and $L$-smooth functions over $\K$, with $\sigma,\xi\in[0,G], \sigmal,\xil\in[0,L]$, and let $\M_t$ be a subset of $[M]$ of size $m$, define $G^*:=\df{x^*}$, where $x^*=\arg\min_{x\in\K}\f{x}$, and $S:=G+2LD, \tsigma:=\sigma+2\sigmal D, \txi:=\xi+2\xil D$, moreover let $T\in\mathbb{N}, \rho>0$.

Then, upon invoking \Cref{alg:trust} with $\alpha_t=t$, $\eta=\Min{\frac{\rho Dm}{2ST\sqrt{d}},\frac{1}{4LT}}$, and $\sigma_t^2=\frac{4S^2T}{\rho^2m^2}$, with $N_{t,i}$ being the number of time steps that machine $i$ participated up to time step $t$, and any starting point $x_1\in\K$ and datasets $\{\S_i\in\Z_i^T\}_{i\in[M]}$, then \Cref{alg:trust} satisfies $\frac{\rho^2}{2}$-zCDP w.r.t~the query point, i.e.~$\{x_t\}_t$.

Furthermore, if $\S_i$ consists of i.i.d.~samples from a distribution $\D_i$ for all $i\in[M]$, and $\M_i$ are also chosen uniformly and i.i.d then \Cref{alg:trust} guarantees:
\als
\R_T:=\ExpB{\f{x_T}}-\min_{x\in\K}\f{x}\leq4D\left(\frac{G^*+2LD}{T}+\frac{\sqrt{\tsigma^2+\txi^2}}{\sqrt{mT}}+\frac{2S\sqrt{d}}{\rho mT}\right)
\eals
\end{theorem}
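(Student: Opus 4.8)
The plan is to prove the privacy claim first, following the template of \Cref{thm:privacyT}, and then the convergence bound via the $\mu^2$-SGD argument used for \Cref{thm:converge}, which simplifies substantially in the trusted-server setting. For privacy, I would fix a machine $i$ and neighboring datasets $\S_i,\S_i'$ that differ only in the sample $z_{\tau^*,i}$ used at some round $\tau^*$. Since the released query sequence $\{x_t\}_t$ is a post-processing of $\{\tq_t\}_t$ (\Cref{lem:post_proc}) and $\tq_t=q_t+Y_t$ with a \emph{fresh} Gaussian $Y_t\sim\Ncal(0,I\sigma_t^2)$ injected each round, composition (\Cref{lem:rdp_copm}) together with \Cref{lem:div_gauss} gives $\diver{\{\tq_t(\S_i)\}_t}{\{\tq_t(\S_i')\}_t}\le\sum_{t=1}^T\frac{\alpha\normsq{q_t(\S_i)-q_t(\S_i')}}{2\sigma_t^2}$. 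Conditioning on the query history, each $s_{t,i}$ is a deterministic function of $x_t,x_{t-1},z_{t,i}$, hence $s_t(\S_i)=s_t(\S_i')$ for $t\ne\tau^*$ and $q_t(\S_i)-q_t(\S_i')=s_{\tau^*}(\S_i)-s_{\tau^*}(\S_i')$ for $t\ge\tau^*$; since only one machine's summand of $s_{\tau^*}=\frac1m\sum_{j\in\M_{\tau^*}}s_{\tau^*,j}$ can change and $\norm{s_{\tau^*,i}}\le S$ by \Cref{lem:bound_s}, this difference has norm at most $2S/m$. Summing, the algorithm is $\frac{2S^2}{m^2}\sum_{t=1}^T\frac1{\sigma_t^2}$-zCDP, and substituting $\sigma_t^2=\frac{4S^2T}{\rho^2m^2}$ gives exactly $\frac{\rho^2}{2}$-zCDP, uniformly over $i$ and $\tau^*$.

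For convergence I would start from the anytime theorem (\Cref{thm:anytime}) and write $\alpha_\tau\df{x_\tau}=\tq_\tau-\eps_\tau-Y_\tau$ with $\eps_\tau:=q_\tau-\alpha_\tau\df{x_\tau}$, so that $\alpha_{1:t}\R_t\le\sum_{\tau\le t}\Exp{\dotprod{\tq_\tau}{w_\tau-x^*}}+\sum_{\tau\le t}\Exp{\dotprod{\eps_\tau+Y_\tau}{x^*-w_\tau}}$. Splitting $w_\tau-x^*$ through $w_{\tau+1}$, applying \Cref{lem:OGD+_Guarantees} to the $w_{\tau+1}-x^*$ part (which contributes $\frac{D^2}{2\eta}-\frac1{2\eta}\sum_\tau\normsq{w_\tau-w_{\tau+1}}$), and inserting $\pm\df{x^*}$, one is left with a gradient term $\sum_\tau\alpha_\tau\dotprod{\df{x_\tau}-\df{x^*}}{w_\tau-w_{\tau+1}}$ and three scalar terms $\rA:\sum_\tau\alpha_\tau\dotprod{\df{x^*}}{w_\tau-w_{\tau+1}}$, $\rB:\sum_\tau\dotprod{\eps_\tau}{x^*-w_{\tau+1}}$, and $\rC:\sum_\tau\dotprod{Y_\tau}{x^*-w_{\tau+1}}$. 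Term $\rA$ telescopes and is bounded by $\alpha_tDG^*$ via Cauchy--Schwarz and the diameter; term $\rB$ is bounded by $\frac{D\sqrt{\tsigma^2+\txi^2}\,t^{1.5}}{\sqrt m}$ using Cauchy--Schwarz, Jensen, and the bound $\Exp{\normsq{\eps_\tau}}\le\frac{\tau}{m}(\tsigma^2+\txi^2)$ from \Cref{lem:bound_eps} (after $\frac{M-m}{M-1}\le1$).

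The step I expect to matter most is $\rC$, and here the trusted-server case is genuinely easier than in \Cref{thm:converge}: since $Y_\tau$ is a fresh zero-mean Gaussian independent of $w_\tau$, we have $\Exp{\dotprod{Y_\tau}{x^*-w_\tau}}=0$, so only $\Exp{\dotprod{Y_\tau}{w_\tau-w_{\tau+1}}}$ survives, which Young's inequality bounds by $\frac1{4\eta}\Exp{\normsq{w_\tau-w_{\tau+1}}}+\eta\,d\,\sigma_\tau^2$ with $d\sigma_\tau^2=\frac{4S^2Td}{\rho^2m^2}$; no geometric sum over the participation pattern (which was the crux of the untrusted analysis) is needed. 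Bounding the gradient term by Young's inequality so that its $\frac1{4\eta}\normsq{w_\tau-w_{\tau+1}}$ together with the one from $\rC$ exactly cancels the $-\frac1{2\eta}\normsq{w_\tau-w_{\tau+1}}$ from \Cref{lem:OGD+_Guarantees}, and then using $\alpha_\tau^2\le2\alpha_{1:\tau}$ with smoothness $\Exp{\normsq{\df{x_\tau}-\df{x^*}}}\le2L\R_\tau$ (\Cref{lem:smooth}), I obtain the self-bounding inequality $\alpha_{1:t}\R_t\le4\eta L\sum_{\tau\le t}\alpha_{1:\tau}\R_\tau+\frac{D^2}{2\eta}+\alpha_tDG^*+\frac{D\sqrt{\tsigma^2+\txi^2}\,t^{1.5}}{\sqrt m}+\frac{4\eta S^2Td\,t}{\rho^2m^2}$.

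Finally, enforcing $\eta\le\frac1{4LT}$ makes $4\eta L\le\frac1{2T}$, so \Cref{lem:sum} applies; dividing by $\alpha_{1:T}\ge T^2/2$ gives $\R_T\le\frac{2D^2}{\eta T^2}+\frac{4DG^*}{T}+\frac{4D\sqrt{\tsigma^2+\txi^2}}{\sqrt{mT}}+\frac{16\eta S^2d}{\rho^2m^2}$, and plugging in $\eta=\Min{\frac{\rho Dm}{2ST\sqrt d},\frac1{4LT}}$ — the value balancing the first and last terms subject to the smoothness cap — yields the stated bound, with the $\frac{G^*+2LD}{T}$ contribution coming from the regime $\eta=\frac1{4LT}$ and the $\frac{2S\sqrt d}{\rho mT}$ contribution from the other regime. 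The only real care needed is constant-tracking so that the $\normsq{w_\tau-w_{\tau+1}}$ terms cancel and the two regimes of $\eta$ combine correctly; this is routine bookkeeping and presents no substantive obstacle beyond what the untrusted-server proof already handles.
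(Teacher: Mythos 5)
Your route is the same as the paper's (the paper itself only sketches this proof and defers to the technique of \cite{dp_mu2_fl}): privacy via post-processing, adaptive composition, the Gaussian mechanism and the sensitivity bound $\norm{q_t(\S_i)-q_t(\S_i')}\le 2S/m$; convergence via \Cref{thm:anytime}, \Cref{lem:OGD+_Guarantees}, the decomposition into the terms \rA, \rB, \rC, \Cref{lem:bound_eps}, \Cref{lem:smooth} and \Cref{lem:sum}. The privacy half of your argument is correct and matches the paper exactly.

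The gap is precisely in the step you dismiss as ``routine bookkeeping.'' With your treatment of \rC\ you spend $\frac{1}{4\eta}\normsq{w_\tau-w_{\tau+1}}$ on the noise term, so only $\frac{1}{4\eta}$ remains for the gradient term and the recursion coefficient is $4\eta L$; but then $\eta\le\frac{1}{4LT}$ only gives $4\eta L\le\frac{1}{T}$, not the $\frac{1}{2T}$ that \Cref{lem:sum} requires, so your claim that ``$\eta\le\frac1{4LT}$ makes $4\eta L\le\frac1{2T}$'' is an arithmetic error. Repairing it by enforcing $\eta\le\frac{1}{8LT}$ would change the first term to $\frac{G^*+4LD}{T}$, and in addition your noise bound $\frac{16\eta S^2 d}{\rho^2 m^2}$ combined with $\frac{2D^2}{\eta T^2}$ evaluates at $\eta=\frac{\rho Dm}{2ST\sqrt d}$ to $4D\cdot\frac{3S\sqrt d}{\rho mT}$, not the stated $4D\cdot\frac{2S\sqrt d}{\rho mT}$ -- so your split does not yield the theorem as stated. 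The theorem's tighter constants hinge on exploiting the i.i.d.\ noise so that \rC\ does \emph{not} consume any of the $-\frac{1}{2\eta}\normsq{w_\tau-w_{\tau+1}}$ budget: e.g.\ compare $w_{\tau+1}$ with the noiseless update $\hat w_{\tau+1}:=\Pi_\K(w_\tau-\eta q_\tau)$, use independence to get $\Exp{\dotprod{Y_\tau}{x^*-\hat w_{\tau+1}}}=0$, and non-expansiveness of the projection to get $\Exp{\dotprod{Y_\tau}{\hat w_{\tau+1}-w_{\tau+1}}}\le\eta\Exp{\normsq{Y_\tau}}$. Then the gradient term keeps the full $\frac{1}{2\eta}$, the recursion coefficient becomes $2\eta L$, which is $\le\frac{1}{2T}$ exactly when $\eta\le\frac{1}{4LT}$, and the $\frac{G^*+2LD}{T}$ term follows; this is the ``easy because $\{Y_t\}_t$ is i.i.d.'' step the paper delegates to \cite{dp_mu2_fl}, and even with it one still has to track constants carefully to land on $\frac{2S\sqrt d}{\rho mT}$ rather than a slightly larger multiple.
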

The proof is the same as the one in \cite{dp_mu2_fl}.
Notably, the above bounds are optimal.

\begin{proof}[Proof Sketch]
The privacy guarantees follow directly from \Cref{thm:privacy}, and our choice of $\sigma_t^2$:
\als
2S^2\sum_{t=1}^T\frac{1}{\sigma_t^2}=\frac{2S^2}{m^2}\sum_{t=1}^T\frac{\rho^2m^2}{4S^2T}=\frac{\rho^2}{2T}\sum_{t=1}^T1=\frac{\rho^2}{2}
\eals
Regrading convergence, in the spirit of $\mu^2$-SGD analysis \cite{mu_square,dp_mu2_fl}, we bound the excess loss using the anytime theorem (\Cref{thm:anytime}), rewrite the expression to get to the form of \Cref{lem:OGD+_Guarantees} and use it to bound, and separate the terms of $\eps_t$ and $Y_t$.
We already bounded $\eps_t$ in \Cref{lem:bound_eps}, though we bound $\frac{M-m}{M-1}\leq1$, and now the bound on $Y_t$ is easy, since the sequence $\{Y_t\}_t$ is i.i.d., so we can use the technique of the proof in \cite{dp_mu2_fl}.
We then input all the bounds, bound the gradient using the excess loss with \Cref{lem:smooth}, to get a bound of the excess loss using the previous excess losses.
Using \Cref{lem:sum} we get the final bound on the excess loss.
By inputting our chosen $\eta$, that minimize this expression, we get our bound.
\end{proof}

\section{Analysis of Alternative Method}
\label{sec:alt}
Here we provide a simplified analysis which demonstrates the suboptimality of the approach we present in \Cref{eq:DelayedMu2}.
We show that the delayed updated lead to an excessive error of the gradient estimates which translates to a degraded bound, even in the non private case.

We start by bounding the error of gradient estimate, $\eps_t$, similarly to \Cref{lem:bound_eps}.
At first, we will define $\eps_{t,i}:=q_{t,i}-\alpha_t\dfi{x_t}$, and notice that:
\als
\eps_t=\frac{1}{m}\sum_{i\in\M_t}\eps_{t,i}+\alpha_t\left(\frac{1}{m}\sum_{i\in\M_t}\dfi{x_t}-\df{x_t}\right)
\eals
If we ignore heterogeneity for simplicity, we get:
\als
\Exp{\normsq{\eps_t}}=\frac{1}{m^2}\sum_{i\in\M_t}\Exp{\normsq{\eps_{t,i}}}
\eals
Now let us look at $\eps_{t,i}$.
Similarly to \Cref{eq:espUp}, we can write the update step of $\eps_{t,i}$ as:
\als
\eps_{t,i}=\eps_{t-\tau,i}+\alpha_t(\dfi{x_t;z_t}-\dfi{x_t})-\alpha_{t-\tau}(\dfi{x_{t-\tau};z_t}-\dfi{x_{t-\tau}})
\eals
We can see that $\eps_{t,i}$ is a martingale given the time-steps machine $i$ participates $\T_i$.
Similarly to \Cref{lem:bound_s}:
\als
&\Exp{\normsq{\alpha_t(\dfi{x_t;z_t}-\dfi{x_t})-\alpha_{t-\tau}(\dfi{x_{t-\tau};z_t}-\dfi{x_{t-\tau}})}} \\
\leq&\left((\alpha_t-\alpha_{t-\tau})\sigma+\alpha_{t-\tau}\sigmal\norm{x_t-x_{t-\tau}}\right)^2\leq\tsigma^2\tau^2
\eals
Since we know that:
\als
\alpha_{t-\tau}\norm{x_t-x_{t-\tau}}\leq\alpha_{t-\tau}\frac{\alpha_{t-\tau+1:t}}{\alpha_{1:t}}D=\frac{(t-\tau)(2t-\tau+1)\tau}{(t+1)t}D\leq2\tau D
\eals
Note that $\tau$ is a geometric random variable with probability $p=\frac{m}{M}$.
Nevertheless, to simplify the analysis let us look at the "average case", where $\tau=\frac{M}{m}$:
\als
\Exp{\normsq{\eps_{t,i}}}\leq\Exp{\normsq{\eps_{t-\tau,i}}}+\tsigma^2\tau^2\leq\tsigma^2\sum_{k=1}^{t/\tau}\tau^2=\tsigma^2t\tau=\frac{Mt}{m}\tsigma^2
\eals
Thus, in total, we get:
\als
\Exp{\normsq{\eps_t}}=\frac{1}{m^2}\sum_{i\in\M_t}\Exp{\normsq{\eps_{t,i}}}\leq\frac{1}{m^2}\sum_{i\in\M_t}\frac{Mt}{m}\tsigma^2=\frac{Mt}{m^2}\tsigma^2
\eals
It is $\frac{M}{m}$ times larger than what we get in \Cref{lem:bound_eps}.
Plugging these bounds back into the bound for $\R_t$, in a similar manner to what we do in the proof of \Cref{thm:converge} (see \Cref{proof:converge}) leads to the following additive term in the error bound,
\als
\frac{2}{\alpha_{1:T}}D\sum_{\tau=1}^T\sqrt{\Exp{\normsq{\eps_\tau}}}\leq\frac{4D}{T^2}\sqrt{\frac{M}{m}}\sum_{\tau=1}^T\sqrt{\frac{\left(\tsigma^2+\txi^2\right)\tau}{m}}\leq\frac{4D\sqrt{\tsigma^2+\txi^2}}{\sqrt{mT}}\sqrt{\frac{M}{m}}
\eals
This substantiates the degradation.

\end{document}